\let\cite\citep
\title{
Identifiability Guarantees for Causal Disentanglement \\from Soft Interventions
}
\author[1,2]{Jiaqi Zhang}
\author[1,2]{Chandler Squires}
\author[3,4]{Kristjan Greenewald}
\author[3,4]{Akash Srivastava}
\author[4]{Karthikeyan Shanmugam\footnote{Currently at Google Research. Contributions to this work were made when affiliated with IBM research.}}
\author[1,2]{Caroline Uhler}
\affil[1]{Broad Institute of MIT and Harvard}
\affil[2]{Laboratory for Information and Decision Systems, MIT}
\affil[3]{MIT-IBM Watson AI Lab}
\affil[4]{IBM Research}
\begin{document}


\maketitle

\begin{abstract}
Causal disentanglement aims to uncover a representation of data using latent variables that are interrelated through a causal model. 
Such a representation is identifiable if the latent model that explains the data is unique. 
In this paper, we focus on the scenario where unpaired observational and interventional data are available, with each intervention changing the mechanism of a latent variable. 
When the causal variables are fully observed, statistically consistent algorithms have been developed to identify the causal model under faithfulness assumptions.
We here show that identifiability can still be achieved with unobserved causal variables, given a generalized notion of faithfulness.
Our results guarantee that we can recover the latent causal model up to an equivalence class and predict the effect of unseen combinations of interventions, in the limit of infinite data. 
We implement our causal disentanglement framework by developing an autoencoding variational Bayes algorithm and apply it to the problem of predicting combinatorial perturbation effects in genomics.
\end{abstract}  

\addtocontents{toc}{\protect\setcounter{tocdepth}{0}}

\section{Introduction}


The discovery of causal structure from observational and interventional data is important in many fields including statistics, biology, sociology, and economics \cite{meinshausen2016methods,glymour2019review}.
Directed acyclic graph (DAG) models enable scientists to reason about causal questions, e.g., predicting the effects of interventions or determining counterfactuals \cite{pearl2009causality}. 
%
Traditional causal structure learning has considered the setting where the causal variables are observed \cite{heinze2018causal}. 
While sufficient in many applications, this restriction is limiting in most regimes where the available datasets are either perceptual (e.g., images) or very-high dimensional (e.g., the expression of $>20k$ human genes).
%
In an imaging dataset, learning a causal graph on the pixels themselves would not only be difficult since there is no common coordinate system across images (pixel $i$ in one image may have no relationship with pixel $i$ in another image) but of questionable utility due to the relative meaninglessness of interventions on individual pixels. 
Similar problems are also present when working with very high-dimensional data.
For example, in a gene-expression dataset, subsets of genes (e.g. belonging to the same pathway) may function together to induce other variables and should therefore be aggregated into one causal variable.


These issues mean that the causal variables need to be learned, instead of taken for granted.
The recent emerging field of causal disentanglement \cite{cai2019triad,xie2022identification,kivva2022identifiability} seeks to remedy these issues by recovering a causal representation in \emph{latent} space, i.e., a small number of variables $U$ that are mapped to the observed samples in the ambient space via some mixing map $f$. 
This framework holds the potential to learn more semantically meaningful latent factors than current approaches, in particular factors that correspond to interventions of interest to modelers.
Returning to the image and the genomic examples, latent factors could, for example, be abstract functions of pixels (corresponding to objects) or groups of genes (corresponding to pathways).

Despite a recent flurry of interest, causal disentanglement remains challenging. 
First, it inherits the difficulties of causal structure learning where the number of causal DAGs grows super-exponentially in dimension.
Moreover, since we only observe the variables after the unknown mixing function but never the latent variables, it is generally impossible to recover the latent causal representations with only observational data.
Under the strong assumption that the causal DAG is the empty graph, such unidentifiability from observational data has been discussed in previous disentanglement works \cite{khemakhem2020variational}.

However, recent advances in many applications enable access to interventional data.
For example, in genomics, researchers can 
perturb single or multiple genes through CRISPR experiments \cite{dixit2016perturb}.
Such interventional data can be used to identify the causal variables and learn their causal relationships.
When dealing with such data, it is important to note that single-cell RNA sequencing and other biological assays often destroy cells in the measurement process.
Thus, the available interventional data is \textit{unpaired}: for each cell, one only obtains a measurement under a single intervention.


In this work, we establish identifiability for \emph{soft} interventions on general structural causal models (SCMs), when the latent causal variables are observed through a class of (potentially non-linear) polynomial mixing functions proposed by \cite{ahuja2022interventional}.
Prior works \cite{tian2001causal,hauser2012characterization,yang_soft_2018} show that the causal model can be identified under \emph{faithfulness} assumptions, when all the causal variables are observed.
We here demonstrate that idenfiability can still be achieved when the causal variables are \emph{unobserved} under a generalized notion of faithfulness.
The identifiability is up to an equivalence class and guarantees that we can predict the effect of unseen combinations of interventions, in the limit of infinite data.
It then remains to design an algorithmic approach to estimate the latent causal representation from data. 
We propose an approach based on autoencoding variational Bayes \cite{kingma2013auto}, where the decoder is composed of a deep SCM (DSCM) \cite{pawlowski2020deep} followed by a deep mixing function.
%
%
Finally, we apply our approach to a real-world genomics dataset to find genetic programs and predict the effect of unseen combinations of genetic perturbations.
%


\subsection{Related Work}

\noindent\textbf{Identifiable Representation Learning.}
The identifiability of latent representations from observed data has been a subject of ongoing study.
Common assumptions are that the latent variables are independent \cite{comon1994independent}, are conditionally independent given some observed variable \cite{hyvarinen2019nonlinear,khemakhem2020variational}, or follow a known distribution \cite{zimmermann2021contrastive}.
In contrast, we do not make any independence assumptions on the latent variables or assume we know their distribution.
Instead, we assume that the variables are related via a causal DAG model, and we use data from interventions in this model to identify the representation.

\noindent\textbf{Causal Structure Learning.}
The recovery of a causal DAG from data is well-studied for the setting where the causal representation is directly observed \cite{heinze2018causal}.
Methods for this task take a variety of approaches, including exact search \cite{cussens2020gobnilp} and greedy search \cite{chickering2002optimal} to maximize a score such as the posterior likelihood of the DAG, or an approximation thereof.
These scores can be generalized to incorporate interventional data \cite{Wang_NIPS_2017,yang_soft_2018,kuipers2022interventional}, and methods can often be naturally extended by considering an augmented search space \cite{mooij2020joint}.
Indeed, interventional data is generally necessary for identifiability without further assumptions on the functions relating variables \cite{squires2022causal}.

\newpage
\textbf{Causal Disentanglement.} 
The task of identifying a causal DAG over latent causal variables is less well-studied, but has been the focus of much recent work \cite{cai2019triad,xie2022identification,kivva2022identifiability}.
These works largely do not consider interventions, and thus require restrictions on functional forms as well as structural assumptions on the map from latent to observed variables.
Among works that do not restrict the map, \cite{ahuja2022weakly} and \cite{brehmer2022weakly} assume access to \textit{paired} counterfactual data.
In contrast, we consider only \textit{unpaired} data, which is more common in applications such as biology \cite{stark2020scim}.
Unpaired interventional data is considered by \cite{ahuja2022interventional}, \cite{squires2023linear}, and as a special case of \cite{liu2022weight}.
{These works do not impose structural restrictions on the map from latent to observed variables but assume functional forms of the map, such as linear or polynomial.}
Our work builds on and complements these results by providing identifiability for \textit{soft} interventions and by offering a learning algorithm based on variational Bayes.
We remark here that the task of causal disentanglement is sometimes called \emph{causal representation learning} in literature. We adopted the term {causal disentanglement} mainly following \cite{kaddour2022causal}, as causal representation learning also includes methods such as Invariant Risk Minimization (IRM) \cite{arjovsky2019invariant} which do not completely learn latent variables.
We discuss contemporaneous related work in Section~\ref{sec:conclusion}.

\section{Problem Setup}
\label{sec:setup}

We now formally introduce the causal disentanglement problem of identifying latent causal variables and causal structure between these variables. 
We consider the \textit{observed} variables $X=(X_1,...,X_n)$ as being generated from \textit{latent} variables $U=(U_1,...,U_p)$ through an unknown deterministic (potentially non-linear) mixing function $f$. 
In the observational setting, the latent variables $U$ follow a joint distribution $\bbP_U$ that factorizes according to an unknown directed acyclic graph (DAG) $\cG$ with nodes $[p]=\{1,...,p\}$. Concisely, we have the following data-generating process:
\begin{equation}
X = f(U), \quad
U  \sim \bbP_U = \prod\nolimits_{i=1}^p \bbP(U_i\mid U_{\pa_{\cG}(i)}),\label{eq:modelnew}
\end{equation}
where $\pa_{\cG}(i)=\{j\in [p]: j\rightarrow i\}$ denotes the parents of $i$ in $\cG$. 
We also use $\ch_{\cG}(i)$, $\de_{\cG}(i)$ and $\an_{\cG}(i)$ to denote the children, descendants and ancestors of $i$ in $\cG$. Let $\bbP_X$ denote the induced distribution over $X$.


\begin{wrapfigure}{R}{0.45\textwidth}
    \centering
    \vspace{-.1in}
\includegraphics[width=.45\textwidth]{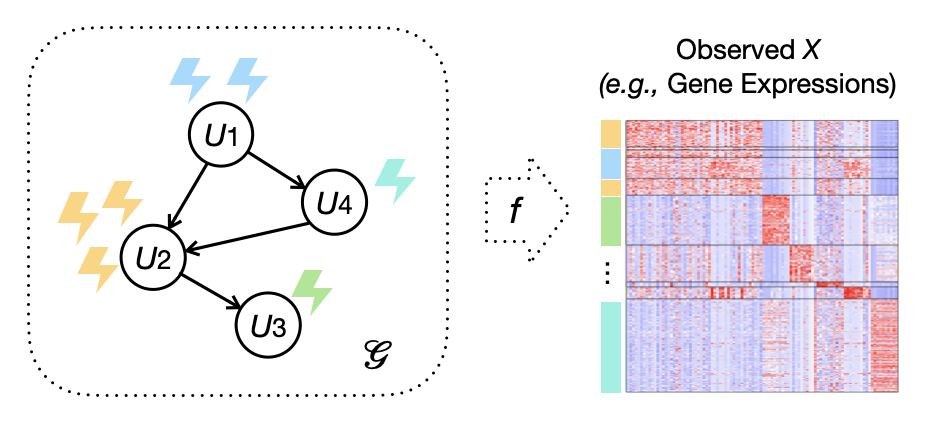}
\vspace{-.1in}
    \caption{Example of the data-generating process, where observed gene expressions $X=f(U)$ and the distribution of $U$ factorizes with respect to an unknown DAG $\cG$.}
    \vspace{-.1in}
    \label{fig:1}
\end{wrapfigure}

We consider atomic (i.e., single-node) interventions on the latent variables. 
While our main focus is on general types of soft interventions, our proof also applies to hard interventions.
In particular, an \emph{intervention} $I$ with target $T_\cG(I) = i \in [p]$ modifies the joint distribution $\bbP_U$ by changing the conditional distribution $\bbP(U_i\mid U_{\pa_\cG(i)})$. 
A \emph{hard} intervention sets the conditional distribution as $\bbP^I(U_i)$, removing the dependency of $U_i$ on $U_{\pa_{\cG}(i)}$, whereas a \emph{soft} intervention is allowed to preserve this dependency but changes the mechanism into $\bbP^I(U_i\mid U_{\pa_\cG(i)})$. 
An example of a soft intervention is as a shift intervention \cite{rothenhausler2015backshift,zhang2021matching}, which  modifies the conditional distribution as $\bbP^I(U_i=u+a_i\mid U_{\pa_{\cG}(i)})=\bbP(U_i=u\mid U_{\pa_{\cG}(i)})$ for some shift value $a_i$. 
In the following, we will use 
$\bbP^I_U = \prod_{i=1}^p \bbP^I(U_i\mid U_{\pa_{\cG}(i)})$ to denote the interventional distribution, where $\bbP^I(U_j\mid U_{\pa_{\cG}(j)})=\bbP(U_j\mid U_{\pa_{\cG}(j)})$ for $j \neq T_\cG(I)$. We denote the induced distribution over $X$ by $\bbP_X^I$. In cases where the referred random variable is clear from the context, we abbreviate the subscript and use $\bbP^I$ instead. 

We consider the setting where we have unpaired data from observational and interventional distributions, i.e., $\cD,\cD^{I_1},...,\cD^{I_K}$. 
Here, $\cD$ denotes samples of $X = f(U)$ where $U\sim\bbP_{U}$; $\cD^{I_k}$ denotes samples of $X$ where $U\sim\bbP_U^{I_k}$. 
We focus on the scenario where we have \emph{at least} one intervention per latent node. 
In the worst case, one intervention per node is necessary for identifiability in linear SCMs \cite{squires2023linear}. 
We note that having \emph{at least} one intervention per latent node is a strict generalization of having \emph{exactly} one intervention per latent node, since we assume no knowledge of which interventions among $I_1,...,I_K$ target the same node. 
Throughout the paper, we assume latent variables $U$ are unobserved and their dimension $p$, the DAG $\cG$, and the interventional targets of $I_1,...,I_K$ are unknown. The goal is to identify these given samples of $X$ in $\cD,\cD^{I_1},...,\cD^{I_K}$.

\section{Equivalence Class for Causal Disentanglement}\label{sec:mec}

In this section, we characterize the equivalence class for causal disentanglement, i.e., the class of latent models that can generate the same observed samples of $X$ in $\cD,\cD^{I_1},...,\cD^{I_K}$. Since we only have access to this data, the latent model can only be identified up to this equivalence class.

First, note that the data-generation process is agnostic to the re-indexing of latent variables, provided that we change the mixing function to reflect such re-indexing. 
More precisely, consider an arbitrary permutation $\pi$ of $[p]$. 
Denote $U_{\pi}=(U_{\pi(1)},...,U_{\pi(p)})$ and $f_{\pi}$ as the mixing function such that $f_{\pi}(U_\pi)=f(U)$. We define $\cG_{\pi}$ as the DAG with nodes in $[p]$ and edges $i\rightarrow j$ if and only if $\pi(i)\rightarrow \pi(j)\in \cG$. 
Then the following data-generating process,
\[
X=f_{\pi}(U_\pi),\quad U_\pi\sim \bbP_{U_\pi} = \prod\nolimits_{i=1}^{p} \bbP\big((U_\pi)_{i}\mid (U_\pi)_{\pa_{\cG_\pi}(i)}\big),
\]
satisfies $X=f(U)$.
The same argument holds when $U$ is generated from an interventional distribution $\bbP^{I}$, where this process generates the same $X$ when $U_{\pi}$ is sampled from $\bbP_{U_\pi}^{I_\pi}$.
%
Here $I_\pi$ is such that $T_{\cG_\pi}(I_\pi) = \pi^\inv(T_\cG(I))$ and the mechanism $\bbP^{I_\pi}\big((U_\pi)_{i}\mid (U_\pi)_{\pa_{\cG_\pi}(i)}\big) = \bbP^{I}\big(U_{\pi(i)}\mid U_{\pa_{\cG}(\pi(i))}\big)$.

We would also observe the same data if each $U_i$ is affinely transformed into $\lambda_i U_i+b_i$ for constants $\lambda_i\neq 0$ and $b_i$ and the mixing function is rescaled element-wise to accommodate this transformation. To account for these two types of equivalences, we define the following notion of causal disentanglement (CD) equivalence class.

\begin{definition}[CD-Equivalence]\label{def:meq}
Two sets of variables, $\langle U,\cG,I_1,...,I_K\rangle$ and $\langle \hat{U},\hat{\cG},\hat{I}_1,...,\hat{I}_K\rangle$ are \textup{CD-equivalent} if and only if there exists a permutation $\pi$ of $[p]$, non-zero constants $\lambda_1,...,\lambda_p\neq 0$, and $b_1,...,b_p$ such that \[\hat{U}_i=\lambda_{\pi(i)}U_{\pi(i)}+b_{\pi(i)},~\forall i\in [p],\quad \hat{\cG}=\cG_{\pi}, \quad\textup{and~}\hat{I}_k=(I_k)_{\pi},~\forall k\in[K].\] The same definition applies to $\langle \cG,I_1,...,I_k\rangle$ and $\langle \hat{\cG},
\hat{I}_1,...,\hat{I}_k\rangle$, where we say they are \textup{CD-equivalent} if and only if $\hat{\cG}=\cG_{\pi}$, and $\hat{I}_k=(I_k)_{\pi}$ for some permutation $\pi$. 
\end{definition}

For simplicity, we refrain from talking about transformations on the mixing function $f$ and mechanisms of latent variables. These can be obtained once $U,\cG,\textcolor{black}{T_\cG(I_1),...,T_\cG(I_K)}$ are identified. In particular, $f$ is the map from $U$ to the observed $X$; and the joint distribution $\bbP_U$ (and $\bbP_U^{I_k}$) can be decomposed with respect to $\cG$ to obtain the mechanisms $\bbP_U(U_i\mid U_{\pa_\cG(i)})$ (and $\bbP_U^{I_k}(U_i\mid U_{\pa_\cG(i)})$).
\section{Identifiability Results}
\label{sec:identifiability}

In this section, we present our main results, namely the identifiability guarantees for causal disentanglement from soft interventions. 
For this discussion, we consider the \emph{infinite-data} regime where enough samples are obtained to exactly determine the observational and interventional distributions $\bbP_X,\bbP^{I_1}_X,...,\bbP^{I_K}_X$. Detailed proofs are deferred to Appendices \ref{app:useful-lemmas} and~\ref{app:identifiability}. 

\subsection{Preliminaries}

Following \cite{ahuja2022interventional}, we pose assumptions on the support of $U$ and on the function class of the map $f$. 
Our support assumption is for example satisfied under the common additive Gaussian structural causal model \cite{peters2017elements}, and our assumption on the function class is for example satisfied if $f$ is linear and injective (Lemma \ref{lm:a1} in Appendix \ref{app:useful-lemmas}), a setting considered in many identifiability works (e.g., \cite{comon1994independent,ahuja2021properties,squires2023linear}).

\begin{restatable}{assumption}{assumptionone}\label{assumption:1}
Let $U$ be a $p$-dimensional random vector.
Following \cite{ahuja2022interventional}, we assume that the interior of the support of $\bbP_U$ is a non-empty subset of $\bbR^p$, and that $f$ is a full row rank polynomial.\footnote{There exists some integer $d$, a full row rank $H\in \bbR^{(p+...+p^d)\times n}$ and a vector $h\in \bbR^n$ such that $f(U)=(U,\kpd U^2,..., \kpd U^d)H+h$, where $\kpd U^k$ denotes the size-$p^k$ vector with degree-$k$ polynomials of $U$ as its entries.
}
\end{restatable}

Under this assumption, the authors in \cite{ahuja2022interventional} showed that if $p$ is known, $U$ is identifiable up to a linear transformation. 
This remains true when $p$ is unknown, as summarized in the following lemma.

\begin{restatable}{lemma}{lmlinearidentify}\label{lm:linear-identify}
    Under Assumption~\ref{assumption:1}, we can identify the dimension $p$ of $U$ as well as its linear transformation $U\Lambda +b$ for some non-singular matrix $\Lambda$ and vector $b$. In fact, with observational data, we can only identify $U$ up to such linear transformations.
\end{restatable}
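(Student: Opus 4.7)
The proof splits naturally into three parts: identifiability of the dimension $p$, identifiability of $U$ up to a linear transformation when $p$ is known, and tightness showing that no finer identification is possible from $\bbP_X$. The middle part is exactly the main identifiability result of \cite{ahuja2022interventional} once $p$ is in hand, so the novel content lies in the first and third parts.

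For the dimension, I would first use Assumption~\ref{assumption:1} to show $f$ is injective. Writing $f(U) = \phi(U)H + h$ with $\phi(U) = (U, \kpd U^2, \ldots, \kpd U^d)$, the full row rank of $H$ yields a right inverse $H^+$ (so that $HH^+ = I$) and hence $\phi(U) = (f(U) - h)H^+$; then $U$ is read off as the first $p$ coordinates of $\phi(U)$, making $f$ injective. Since $\mathrm{supp}(\bbP_U)$ has non-empty interior in $\bbR^p$ by Assumption~\ref{assumption:1}, and every Euclidean-open subset of $\bbR^p$ is Zariski-dense, the Zariski closure of $\mathrm{supp}(\bbP_X) = f(\mathrm{supp}(\bbP_U))$ coincides with the Zariski closure of $f(\bbR^p)$, which is an irreducible algebraic variety of dimension $p$. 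Hence $p$ is recoverable from $\bbP_X$ as the dimension of this Zariski closure.

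For tightness, I would exhibit an explicit family of alternative generative models. Fix any non-singular $\Lambda \in \bbR^{p \times p}$ and any $b \in \bbR^p$, and set $\tilde{U} = U\Lambda + b$ with mixing function $\tilde{f}(u) = f((u - b)\Lambda^{-1})$. By construction $\tilde{f}(\tilde{U}) = f(U) = X$, so both models induce the same $\bbP_X$. It remains to verify that $\tilde{f}$ still satisfies Assumption~\ref{assumption:1}. The degree remains $d$ since $(u - b)\Lambda^{-1}$ is affine in $u$. Expanding in the monomial basis, $\phi((\tilde{U} - b)\Lambda^{-1}) = \tilde{\phi}(\tilde{U}) M + c$ for a constant vector $c$ and a block upper-triangular matrix $M$ whose diagonal blocks are $\Lambda^{-1}, (\Lambda^{-1})^{\otimes 2}, \ldots, (\Lambda^{-1})^{\otimes d}$, each invertible. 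Hence $M$ is invertible, and the coefficient matrix $\tilde{H} = MH$ of $\tilde{f}$ inherits full row rank from $H$, so Assumption~\ref{assumption:1} holds for the alternative model and the two are indistinguishable from $\bbP_X$ alone.

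I expect the main obstacle to be the dimension step: making the Zariski-closure argument operational given that we only see $\bbP_X$, rather than a parameterization. Pinning down the equality ``Zariski closure of $\mathrm{supp}(\bbP_X) = $ Zariski closure of $f(\bbR^p)$'' requires the density lemma plus injectivity of $f$, both of which come from Assumption~\ref{assumption:1}. The polynomial bookkeeping in the tightness step is routine once $M$ is set up in the degree-graded basis, and the middle step is a direct invocation.
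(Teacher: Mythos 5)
Your proof is correct, but it takes a genuinely different route on the forward direction. The paper identifies $p$ and the linear transformation in a single pass: it poses the variational problem of minimizing $\hat{p}$ over feasible pairs $(\bbP_{\hat{U}},\hat{f})\in\cF_{\hat{p}}$ with $\bbP_{\hat{f}(\hat{U})}=\bbP_X$, uses the pseudo-inverses of the full-row-rank coefficient matrices to write $\hat{U}=\mathrm{poly}_1(U)$ and $U=\mathrm{poly}_2(\hat{U})$, and concludes from $U=\mathrm{poly}_2(\mathrm{poly}_1(U))$ holding on an open set that both polynomials are affine, which forces $\hat{p}=p$ at the minimum and $\hat{U}=U\Lambda+b$. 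You instead characterize $p$ intrinsically as the dimension of the Zariski closure of the support of $\bbP_X$ and then delegate the known-$p$ step to the cited result of Ahuja et al., which the paper itself states in exactly that form (and whose content is precisely the polynomial-composition argument the paper reproves). Your route buys a cleaner, model-free characterization of $p$ --- any competing feasible model must report the same Zariski dimension, so $\hat{p}=p$ automatically --- at the cost of being less self-contained. On the converse direction the two proofs coincide, and your verification that $\tilde{f}(u)=f((u-b)\Lambda^{-1})$ remains a full-row-rank polynomial, via the block-triangular change-of-basis matrix with invertible diagonal blocks $(\Lambda^{-1})^{\otimes k}$, is more explicit than the paper's one-line assertion. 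The one spot to tighten is the claim that the Zariski closure of $f(\bbR^p)$ has dimension exactly $p$: the upper bound is standard, but the lower bound should be tied explicitly to the polynomial left inverse $g$ with $g\circ f=\mathrm{id}$ that you construct from $H^{+}$, since $g$ maps that closure onto all of $\bbR^p$ and hence rules out dimension collapse; you should also record (trivially) that the transformed latent distribution $\bbP_{\tilde{U}}$ retains a support with non-empty interior so that the alternative model genuinely lies in $\cF_p$.
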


Denote all pairs of $\bbP_U, f$ that satisfy this assumption as $\cF_p$. The proof of this lemma is provided by solving the following constrained optimization problem:
\begin{align*}
 \min\nolimits_{(\bbP_{\hat{U}},\hat{f})\in\cF_{\hat{p}}} \hat{p} \quad\textrm{subject to }\bbP_{\hat{f}(\hat{U})}=\bbP_{X}.
\end{align*}
In other words, let $\hat{p}$ be the smallest dimension such that there exists a pair of $\bbP_{\hat{U}}, \hat{f}$ in $\cF_{\hat{p}}$ that generates the observational distribution $\bbP_X$.
Then $p = \hat{p}$ and we recover the latent factors up to linear transformation. 
The intuition is that (1) the support with non-empty interior guarantees that we can identify $p$ by checking its geometric dimension, and (2) the full-rank polynomial assumption ensures that we search for $f$ (and consequently $U$) in a constrained subspace. 

On the other hand, to show we cannot identify more than linear transformations, we construct a mixing function $\hat{f}$ for $\hat{U} := U\Lambda +b$ such that the induced distribution $\bbP_X$ is the same under both representations.
This also means that we cannot identify the underlying DAG $\cG$ up to any nontrivial equivalence class; we give an example showing that any causal DAG can explain the observational data in Appendix~\ref{app:useful-lemmas}.
Next, we discuss how identifiability can be improved with interventional data.

\subsection{Identifying ancestral relations}\label{sec:42}

Lemma~\ref{lm:linear-identify} guarantees identifiability up to linear transformations from solely observational data. 
This reduces the problem to the case where an unknown invertible linear mixing of the latent variables $X = f(U) = U\Lambda+b$ is observed.
Without loss of generality, we thus work with this reduction for the remainder of the section.

When the causal variables are \emph{fully observed}, we can identify causal relationships from the changes made by interventions \cite{tian2001causal}. 
In particular, an intervention on a node will not alter the marginals of its non-descendants as compared to the observational distribution, i.e., $\bbP(U_j) = \bbP^I(U_j)$ for $T_\cG(I)=i$ and $j \notin \de_{\cG}(i)$. However, it is possible that $\bbP(U_j)=\bbP^I(U_j)$ for some $j\in\de_{\cG}(i)$ in degenerate cases where the change made by $U_i$ is canceled out on the path from $i$ to $j$. 
Hence, prior works\footnote{A more detailed discussion of interventional faithfulness can be found in Appendix~\ref{app:faith}.} defined \emph{influentiality} or \emph{interventional faithfulness} \cite{tian2001causal,yang2018characterizing}, which avoids such degenerate cases by assuming that intervening on a node will always change the marginals of all its descendants, i.e., $\bbP(U_j)\neq \bbP^I(U_j)$ for $j\in \de_\cG(i)$. Under this assumption, we can identify the descendants of an intervention target in $\cG$, by testing if a node has a changed marginal interventional distribution.
 
\begin{wrapfigure}{R}{0.18\textwidth}
    \includegraphics[width=\linewidth]{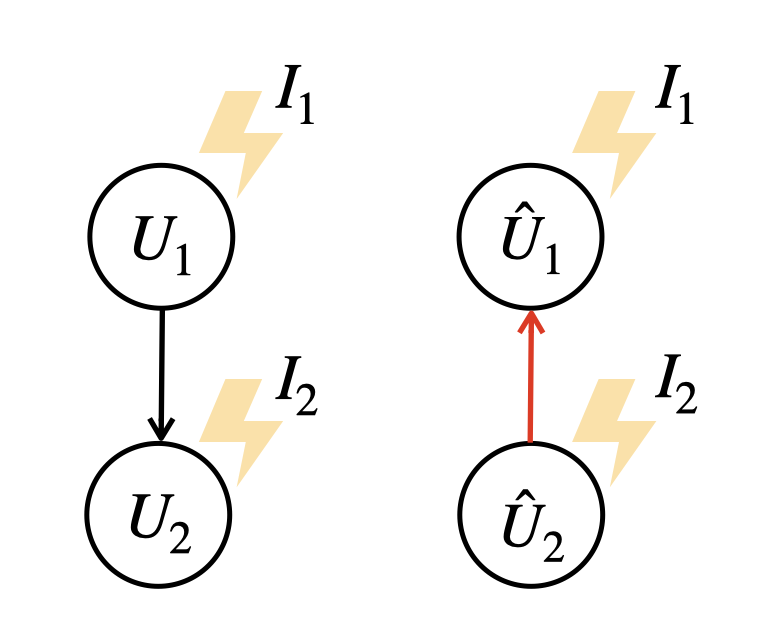}
    \vspace{-.2in}
    \caption{}\label{fig:3}
    \vspace{-0.15in}
\end{wrapfigure}

However, if we only observe a linear mixing of the causal variables, interventional faithfulness is not enough to identify such ancestral relations. Consider the following example.

\begin{example}\label{exp:1}
    Let $\cG=\{1\!\rightarrow \!2\}$ with 
    $\bbP(U_1)\!=\!\cN(0,1)$ and $\bbP(U_2\!\mid \! U_1)=\cN(U_1,1)$. 
    Suppose that $T_\cG(I_1) = 1$, with $\bbP^{I_1}(U_1\!)=\cN(1,1)$, and that $T_\cG(I_2) = 2$, with $\bbP^{I_2}(U_2\mid U_1)=\cN(U_1+1,1)$. 
    Note that this  model satisfies interventional faithfulness. 
    
    Let $f$ be the identity map, i.e., $X=U$. 
    %
    Consider latent variables $\hatU = (U_2, U_2 - U_1)$ and $\hat{f}(\hatU) = (\hatU_1-\hatU_2, \hatU_1)$.
    Then $X = \hat{f}(\hatU) = f(U)$.
    However, we have $\hatcG = \{ 2 \! \rightarrow \! 1 \}$ with $\bbP(\hatU_2) = \cN(0, 1)$ and $\bbP(\hatU_1\mid\hatU_2) = \cN(\hatU_2, 1)$, $T_\hatcG(I_1) = 1$ with $\bbP^{I_1}(\hatU_1\mid\hatU_2) = \cN(\hatU_2+1, 1)$, and $T_\hatcG(I_2) = 2$ with $\bbP^{I_2}(\hatU_2) = \cN(1, 1)$.
    We thus may reverse ancestral relations between the intervention targets, as illustrated in Figure \ref{fig:3}. 
\end{example}

This example shows that the effect on $U_2$ from intervening on $U_1$ can be canceled out by linearly combining $U_2$ with $U_1$. 
In other words, intervening on $U_1$ does not change the marginal distribution of $U_2-U_1$, even under interventional faithfulness.
Thus, we need a stronger faithfulness assumption to account for the effect of linear mixing. 
In general, we want to avoid the case that the effect of an intervention on a downstream variable $U_j$ can be canceled out by combining $U_j$ linearly with other variables.
    
\begin{restatable}{assumption}{asslinearfaith}\label{ass:linear-faith}
    Intervention $I$ with target $i$ satisfies \textup{linear interventional faithfulness} if for every $j\in \{i\}\cup\ch_{\cG}(i)$ such that $\pa_\cG(j)\cap\de_\cG(i)=\varnothing$, it holds that $\bbP(U_j + U_{S} C^\top )\neq \bbP^I(U_j +  U_{S}C^\top )$ for all constant vectors $C\in\bbR^{|S|}$, where $S=[p]\setminus (\{j\}\cup\de_\cG(i))$.
\end{restatable}

This assumption ensures that an intervention on $U_i$ not only affects its children, but that the effect remains even when we take a linear combination of a child with certain other variables.
Note that the condition need only hold for the most upstream children of $U_i$, which may be arbitrarily smaller than the set of all children of $U_i$.
To illustrate this assumption, we give a simple example on a 2-node DAG where this assumption is generically satisfied.
In general, we show in Appendix~\ref{app:identifiability} that a large class of non-linear SCMs and soft interventions satisfy this assumption. 

\begin{example}\label{exp:2}
    Consider $\cG=\{1\rightarrow 2\}$. Let $\bbP(U_2\mid U_1)=\cN(\beta U_1^2,\sigma_2^2)$ and $\bbP(U_1)=\cN(0,\sigma_1^2)$. Intervention $I$ that changes $\bbP(U_1)$ into $\cN(0,\sigma_1'^2)$ satisfies Assumption~\ref{ass:linear-faith} as long as $\beta\neq 0$. To see this, note that $\bbP(U_2+U_1C)\neq \bbP^I(U_2+U_1C)$ for any $C$, since $\bbE_\bbP(U_2+U_1C)=\beta\sigma_1^2 \neq \beta\sigma_1'^2=\bbE_{\bbP^I}(U_2+U_1C)$.
\end{example}

Under Assumption~\ref{ass:linear-faith}, we can show that we can identify causal relationships by detecting marginal changes made by interventions. 
In particular, consider an easier setting where $K=p$, i.e., we have exactly one intervention per latent node. For a source node\footnote{A source node is a node without parents.} $i$ of $\cG$, $\bbP(U_i)\neq \bbP^I(U_i)$ if and only if $T_{\cG}(I)=i$. 
Therefore the source node will have its marginal changed under one intervention amongst $\{I_1,...,I_p\}$. This is a property of the latent model that we can utilize when solving for it. 

Since we have access to $X=U\Lambda +b$, we solve for $U_i$ in the form of $XC^\top\!+c$ with $C\in \bbR^{n},c\in\bbR$, or equivalently, $UC^\top\!+c$ with $C\!\in\!\bbR^{p}$. By enforcing that $V=UC^\top+c$ only has $\bbP(V)\neq\bbP^I(V)$ for one $I\in\{I_1,...,I_p\}$, Assumption~\ref{ass:linear-faith} guarantees that $V$ can only be an affine transformation of a source node and that this $I$ corresponds to intervening on this source node. 
Otherwise: (1) if $C_j\neq 0$ for a non-source node $j$, take $j$ to be the most downstream node with $C_j\neq 0$, then $\bbP(V)\neq \bbP^I(V)$ for at least two $I$'s targeting $j$ and its most downstream parents in $\pa_{\cG}(j)$; (2) if $C_{i_1}\neq 0$ and $C_{i_2}\neq 0$ for two source nodes $i_1,i_2$, then $\bbP(V)\neq \bbP^I(V)$ for two $I$'s targeting $i_1$ and $i_2$.

In general, we can apply this argument to identify all interventions in $I_1,...,I_K$ that target source nodes of $\cG$. 
Then using an iterative argument, we can identify all interventions that target source nodes of the subgraph of $\cG$ after removing its source nodes. 
This procedure results in the ancestral relations between the targets of $I_1,\ldots,I_K$. 
Namely, if $T_\cG(I_k)\in \an_{\cG}(T_\cG(I_j))$, then $I_j$ is identified in a later step than $I_k$ in the above procedure. 
We thus have the following theorem.

\begin{restatable}{theorem}{thmone}\label{thm:1}
    Under Assumption~\ref{assumption:1} and Assumption~\ref{ass:linear-faith} for $I_1,...,I_K$, we can identify $\langle \hat{\cG},\hat{I}_1,...,\hat{I}_K\rangle$, where $\hat{\cG}=\cT\cS({\cG_{\pi}})$, and $\hat{I}_k=(I_k)_{\pi}$ for some permutation $\pi$. 
\end{restatable}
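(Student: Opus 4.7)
The plan is to follow the strategy sketched immediately before the theorem: apply Lemma~\ref{lm:linear-identify} to reduce the problem to the case $X = U\Lambda + b$ for some non-singular $\Lambda$, and then recover the ancestral order by iteratively identifying the ``layer'' in $\cG$ to which each intervention target belongs. Since any candidate recovered latent variable must be of the form $V = XC^\top + c$ with $C \in \bbR^n$, we can equivalently write $V = U\tilde{C}^\top + \tilde{c}$ for some $\tilde{C}\in\bbR^p$, so the identification problem becomes: which affine combinations of $U$ have marginal distributions that are changed by exactly one intervention in $\{I_1,\ldots,I_K\}$?

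The key lemma I would prove is that, under Assumption~\ref{ass:linear-faith}, the set of affine combinations $V=U\tilde{C}^\top+\tilde{c}$ whose marginal is perturbed by exactly one intervention is precisely the set of affine transformations of a single source node $U_i$, in which case the unique perturbing intervention targets $i$. The forward direction is a case split on the support of $\tilde{C}$: (a) if $\tilde{C}_{i_1}\neq 0$ and $\tilde{C}_{i_2}\neq 0$ for two distinct source nodes, then interventions on both $i_1$ and $i_2$ shift the marginal of $V$; (b) if $\tilde{C}_j\neq 0$ for some non-source $j$, choose $j$ to be the most downstream such coordinate, then any intervention targeting $j$ or targeting a most-downstream parent of $j$ in $\pa_\cG(j)$ perturbs $\bbP(V)$. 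The role of Assumption~\ref{ass:linear-faith} is exactly to prevent cancellation in (b): the assumption is stated for $U_j + U_S C^\top$ with $S=[p]\setminus(\{j\}\cup\de_\cG(j^{\mathrm{anc}}))$, which matches the set of coordinates whose values are unaffected by the upstream intervention and therefore exhausts the possible linear cancellations.

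Having identified the first layer (source nodes of $\cG$ and their targeting interventions), I would iterate: restrict attention to affine combinations whose marginals are changed by exactly one of the remaining interventions, and argue that under Assumption~\ref{ass:linear-faith} these correspond precisely to affine transformations of source nodes of the sub-DAG on $[p]\setminus \textup{(already identified layer)}$. This requires showing that the ``linear interventional faithfulness'' condition descends to the reduced problem, which follows because the relevant sets $\de_\cG(i)$ and $\pa_\cG(j)$ in Assumption~\ref{ass:linear-faith} are inherited from the original DAG and are unchanged by removing ancestors of $i$. Iterating over all layers assigns every intervention $I_k$ a layer number $\ell(k)$, and $I_{k}$ targets an ancestor of $I_{k'}$ in $\cG$ iff $\ell(k)<\ell(k')$ with an appropriate linking condition detectable by the same marginal-change test, yielding $\langle \hat{\cG},\hat{I}_1,\ldots,\hat{I}_K\rangle$ with $\hat{\cG}=\cT\cS(\cG_\pi)$.

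The main obstacle I expect is step (b) above: ruling out the possibility that an intricate linear combination of variables spanning several layers of the DAG accidentally masks the effect of every intervention except one. This is exactly what Assumption~\ref{ass:linear-faith} is engineered to preclude, but turning that local non-cancellation statement into a global one requires carefully choosing the ``pivot'' variable $j$ as the most downstream coordinate in $\supp(\tilde{C})$ so that its ancestral parents in $\supp(\tilde{C})\cap\pa_\cG(j)$ lie outside $\de_\cG(j)$, allowing the assumption to apply verbatim. A secondary technical point will be handling the ``at least one intervention per node'' generalization: when two interventions share the same target, the iterative procedure must group them, which falls out naturally by checking equality of the sets of affine combinations they perturb.
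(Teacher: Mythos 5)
Your proposal follows the source-node-first sketch given in Section~\ref{sec:42}, but the paper itself flags that sketch as illustrative for the special case $K=p$ and proves Theorem~\ref{thm:1} the other way around: it defines a \emph{topological representation} (Conditions 1 and 2 in Appendix~\ref{app:topological-representation}) and runs an induction that identifies \emph{sink} nodes first. The reason is technical but important: in the sink-first induction, Condition 2 supplies a $(p-1)$-dimensional space of linear combinations known to be unperturbed, which can be intersected with $\mathrm{span}(e_i,e_j)$ to produce a cancelling combination supported on exactly $\{i,j\}$ with $j\in\mch_\cG(i)$ --- precisely the form to which Assumption~\ref{ass:linear-faith} applies. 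Your source-first argument has no such device, and this shows up as two genuine gaps.

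First, your handling of $K>p$ is not correct. You propose to group interventions with the same target by ``checking equality of the sets of affine combinations they perturb.'' But Assumption~\ref{ass:linear-faith} only constrains how an intervention on $i$ perturbs combinations built from $U_j$ for $j\in\{i\}\cup\mch_\cG(i)$; it says nothing about non-maximal descendants. Two interventions targeting the same node $i$ may therefore disagree on whether they perturb $\bbP(U_m)$ for a grandchild $m$ of $i$, so your equality criterion would wrongly separate them. Moreover, your layer-identification criterion ``perturbed by exactly one intervention'' already fails when two of the $I_k$ target the same source node. The paper avoids both problems by first selecting a subset of $p$ interventions $I_{\rho_1},\dots,I_{\rho_p}$ that form a topological representation and only afterwards assigning each remaining $I_k$ the target $\tau(l)$ for the \emph{smallest} $l$ with $\bbP(\hatU_l)\neq\bbP^{I_k}(\hatU_l)$.

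Second, the final step from a layer assignment to $\hat\cG=\cT\cS(\cG_\pi)$ is missing. A layering gives only a topological order, not ancestral relations, and the naive test ``$\tau(i)\in\an_\cG(\tau(j))$ iff $\bbP(\hatU_j)\neq\bbP^{I_{\rho_i}}(\hatU_j)$'' is unsound: since $\hatU_j$ is a linear combination of $U_{\tau(1)},\dots,U_{\tau(j)}$, its marginal can be perturbed by $I_{\rho_i}$ through a component $U_{\tau(k)}$ with $\tau(k)\in\de_\cG(\tau(i))$ even when $\tau(j)\notin\de_\cG(\tau(i))$. The paper resolves this with Proposition~\ref{prop:change-transitive} and Lemma~\ref{lemma:transitive-closure-identification}: one defines a graph from the marginal changes for each upper-triangular re-mixing of the representation and shows that minimizing the number of edges of its transitive closure recovers $\cT\cS(\cG_\tau)$. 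Your ``appropriate linking condition'' has no counterpart to this minimization. A smaller repair: in your case~(b), applying Assumption~\ref{ass:linear-faith} to the intervention on a most-downstream parent $i$ of the pivot $j$ requires that the support of $\tilde C$ (minus $j$) avoid $\de_\cG(i)$, not merely $\de_\cG(j)$; this does hold when $j$ is the \emph{unique} most-downstream element of the support (since then the support lies in $\an_\cG(j)\cup\{j\}$ and $\an_\cG(j)\cap\de_\cG(i)=\varnothing$ for $j\in\mch_\cG(i)$), but that justification is absent and the claim that the assumption applies ``verbatim'' is not accurate as written.
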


Here $\cT\cS$ denotes the transitive closure of a DAG \cite{tian2001causal}, where $i\rightarrow j\in \cT\cS(\cG)$ if and only if $i\in\an_\cG(j)$. Note that this limitation is not due to the linear mixing of the causal variables. 
It was shown in \cite{tian2001causal} that with fully observed causal variables, one can only identify a DAG up to its transitive closure by detecting marginal distribution changes. In the next section, we show how to reduce $\cT\cS(\cG_\pi)$ to $\cG_\pi$, i.e., identifying the CD-equivalence class of $\langle {\cG},{I}_1,...,{I}_k\rangle$.

\subsection{Identifying direct edges}

\begin{wrapfigure}{R}{0.36\textwidth}
        \vspace{-.2in}
    \includegraphics[width=\linewidth]{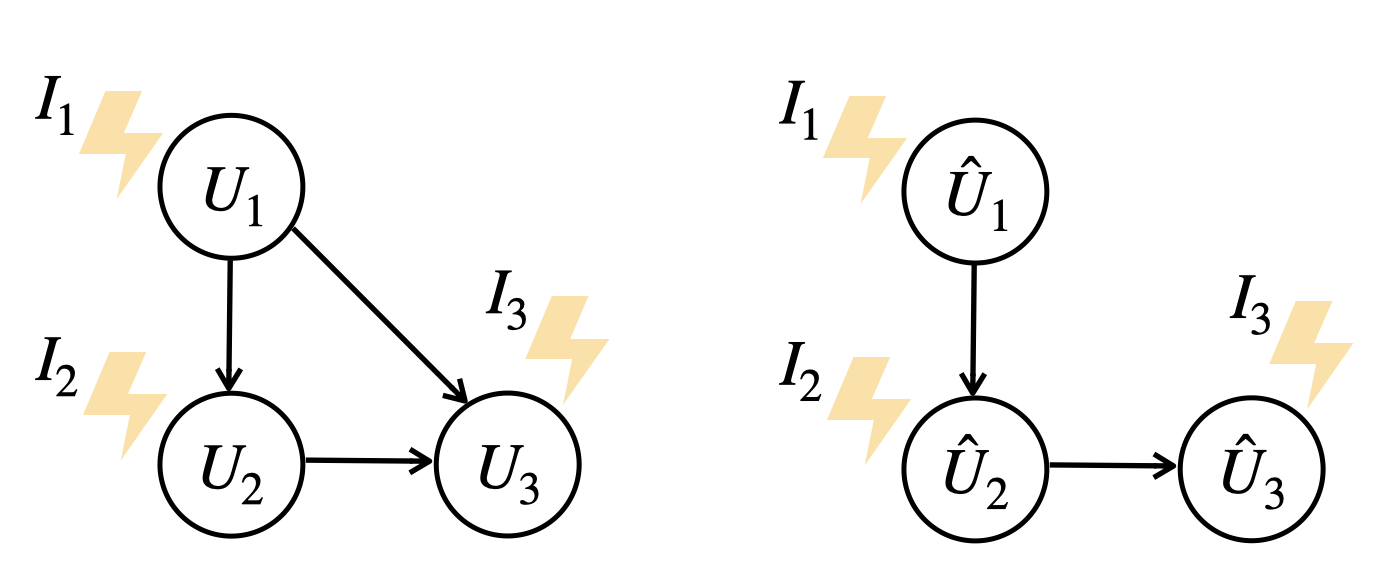}
        \vspace{-.22in}
        \caption{}\label{fig:2}
        \vspace{-.1in}
\end{wrapfigure}

DAGs with the same transitive closure can span a spectrum of sparsities; for example, a complete graph and a line graph with the same topological ordering have the same transitive closure.
%
%
The following example shows that under Assumption~\ref{ass:linear-faith}, in some cases we cannot identify more than the transitive closure.

\begin{example}\label{exp:3}
    Let $\cG$ be the $3$-node DAG shown on the left in Figure~\ref{fig:2}. Suppose that $\bbP(U_1)$ is $\cN(0,1)$, $\bbP(U_2|U_1)$ is $\cN(U_1^2,1)$, and $\bbP(U_3|U_1,U_2)$ is $\cN((U_1+U_2)^2,1)$.
    Let $f$ be the identity map and $I_1,I_2,I_3$ target nodes $1,2,3$, respectively, each changing their conditional variances to $2$.\footnote{We show in Appendix~\ref{app:identifiability} that this model satisfies Assumptions~\ref{assumption:1} and \ref{ass:linear-faith}.} 
    
    Now consider a different model with variables $
    \hat{U} = (U_1,U_1+U_2,U_3)$
    and mixing function $\hat{f}(\hat{U})=(\hat{U}_1,\hat{U}_2-\hat{U}_1,\hat{U}_3)$. Then $\hat{f}(\hat{U})=U=f(U)=X$. 
    The distributions $\bbP(\hatU)$, $\bbP^{I_1}(\hatU)$, $\bbP^{I_2}(\hatU)$, and $\bbP^{I_3}(\hatU)$ each factorizes according to the DAG $\hatcG$ that is missing the edge $1 \to 3$
    (Figure \ref{fig:2}), where we let $I_1, I_2$ and $I_3$ change the conditional variances of $\hatU_1$, $\hatU_2$, and $\hatU_3$ to 2, respectively.
\end{example}

This example shows that we cannot identify $1\rightarrow 3$ since $U_1\CI U_3|U_1+U_2$. 
In the case when the causal variables are fully observed, $1\rightarrow 3$ can be identified by assuming $U_1\nCI U_3|U_2$. 
However, when allowing for linear mixing, we need to avoid cases such as $U_1\CI U_3|U_1+U_2$ in order to be able to identify $1\rightarrow 3$. 
We will show that the following assumption guarantees identifiability of $\cG$. When $\cG$ is a polytree (a DAG whose skeleton is a tree), this assumption is implied by Assumption~\ref{ass:linear-faith} under mild regularity conditions (proven in Appendix~\ref{app:identifiability}). 
Thus if $\cG$ is the sparsest DAG within its transitive closure, we can always identify it with just the 
Assumptions~\ref{assumption:1} and \ref{ass:linear-faith}.

\begin{restatable}{assumption}{assadjancy}\label{ass:adjancy}
    For every edge $i\rightarrow j\in\cG$, there do \textup{not} exist constants $c_j,c_k\in\bbR$ for $k\in S$ such that $U_i\CI U_j+c_j U_i\mid  \{U_l\}_{l\in \pa_\cG(j)\setminus (S\textcolor{black}{\cup\{i\}})}, \{U_k+c_kU_i\}_{k\in S}$, where $S=\pa_\cG(j)\cap\de_\cG(i)$.
\end{restatable}

\begin{restatable}{theorem}{thmtwo}\label{thm:2}
Under Assumptions \ref{assumption:1},\ref{ass:linear-faith},\ref{ass:adjancy}, $\langle {\cG},{I}_1,...,{I}_K\rangle$ is identifiable up to its CD-equivalence class.
\end{restatable}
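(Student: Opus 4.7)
The plan is to build on Theorem~\ref{thm:1} and use Assumption~\ref{ass:adjancy} to distill the transitive closure down to the actual DAG $\cG$. By Theorem~\ref{thm:1} we have already identified $\cT\cS(\cG_\pi)$ and $(I_k)_\pi$ up to some permutation $\pi$, which we absorb into the notation. Moreover, the iterative source-finding argument sketched for Theorem~\ref{thm:1} also recovers each latent $U_i$ as an affine function of $X$: each source of $\cG$ is extracted as a linear combination $XC^\top+c$ whose marginal changes under exactly one intervention, and successive layers are peeled off by the same principle applied to residuals. Hence we may assume access to every $U_i$ up to a per-node affine transformation, to $\cT\cS(\cG)$, and to the intervention targets, so that the only remaining task is to identify the direct edges of $\cG$.

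The direct edges are identified by induction on the topological ordering given by $\cT\cS(\cG)$. Suppose $\pa_\cG(k)$ has been correctly identified for every $k\in\an_\cG(j)$, so that the sub-DAG on $\an_\cG(j)$ and each $\de_\cG(i)\cap\an_\cG(j)$ are known. I characterize $\pa_\cG(j)$ as the unique $P\subseteq\an_\cG(j)$ such that, for every $i\in P$, no constants $c_j$ and $\{c_k\}_{k\in P\cap\de_\cG(i)}$ make
\[
U_i \CI U_j+c_j U_i \;\Bigm|\; \{U_l\}_{l\in P\setminus (P\cap\de_\cG(i)\cup\{i\})},\; \{U_k+c_k U_i\}_{k\in P\cap\de_\cG(i)}
\]
hold, while for every $i\in\an_\cG(j)\setminus P$ this same CI with $P$ as the conditioning base and all constants zero does hold. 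For $P=\pa_\cG(j)$ both conditions are immediate: Assumption~\ref{ass:adjancy} rules out a witnessing CI for the true parents, and standard d-separation in $\cG$ supplies the trivial witness for non-parents. All the linear combinations appearing in the test are realizable from the observations because $X=U\Lambda+b$ is linearly invertible on the identified support.

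The main obstacle is establishing uniqueness of such $P$ and, relatedly, navigating the circular dependence of the test on $P$ itself. The natural route is to build $P$ incrementally in reverse topological order within $\an_\cG(j)$: sinks of $\an_\cG(j)$ must lie in $\pa_\cG(j)$ because their only route to $j$ is the direct edge, so they seed $P$ unambiguously; a subsequent candidate $i$ is then evaluated against the current $P\cap\de_\cG(i)$, which is already correctly determined by the inductively identified sub-DAG. Turning this greedy construction into a rigorous uniqueness proof will require a careful case split: if a $P\neq\pa_\cG(j)$ omitted a true parent $i^\star$, the dependence induced by $i^\star\to j$ could not be absorbed by conditioning on $P$ alone, violating the non-parent side of the test; and if $P$ included a non-parent $i^\dagger$, d-separation in $\cG$ would supply a witnessing CI, violating the parent side. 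Making these arguments airtight while accounting for the extra linear mixing freedom in the $c_k$'s---which is the very subtlety that Assumption~\ref{ass:adjancy} was designed to control---will be the technical crux of the proof.
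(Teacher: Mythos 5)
There is a genuine gap, and it sits at the very first step: you assume that the iterative peeling argument from Theorem~\ref{thm:1} "recovers each latent $U_i$ as an affine function of $X$," i.e., element-wise up to a per-node affine map. This is false, and the paper explicitly disproves it in Section~\ref{sec:44}: for $\cG=\{1\rightarrow 2\}$ with general soft interventions, $\hat{U}_2 = U_2 + cU_1$ yields an equally valid latent model for every $c$, so no procedure can pin down $U_2$ itself. What the peeling argument actually delivers (Lemma~\ref{lemma:matrix-transitive-closure}) is a representation $\hat{U}_l = \lambda_l U_{\tau(l)} + \sum_{\tau(j)\in\an_\cG(\tau(l))}\gamma_{jl}U_{\tau(j)} + c_l$, i.e., each recovered coordinate is the true variable irreducibly mixed with its \emph{ancestors}. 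Your entire parent-set test is phrased in terms of the clean variables $U_i$, $U_j$, $U_k$, so it is run on objects you do not have access to. The residual ancestor-mixing is not a technicality to be absorbed later --- it is precisely why Assumption~\ref{ass:adjancy} carries the free constants $c_j,c_k$, and why the paper's proof never tries to isolate individual $U_i$'s. Instead it defines, for each upper-triangular re-mixing $\barR'$ of the topological representation, a graph via $\barU_i\nCI\barU_j\mid\barU_1,\ldots,\barU_{i-1},\barU_{i+1},\ldots,\barU_{j-1}$ (conditioning on \emph{all} intermediate mixed coordinates, not a candidate parent set), and characterizes $\cG_\tau$ as the sparsest such graph over all choices of $\barR'$.

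The second gap is the one you flag yourself as "the technical crux": uniqueness of the parent set $P$. This is not a routine case split, because the CI statement you can actually certify from data involves the mixed coordinates and a full conditioning set, whereas Assumption~\ref{ass:adjancy} forbids a CI conditioned only on (shifted versions of) $\pa_\cG(j)$. Bridging that mismatch is where the paper needs the intersection/graphoid property (Claim~\ref{claim:intersection-property}), which is available only because Assumption~\ref{assumption:1} guarantees a density positive on a full-dimensional set: one combines the observed CI with the local Markov property $U_{\tau(j)}\CI\{U_{\tau(l)}\}_{l<j,\,\tau(l)\notin\pa_\cG(\tau(j))}\mid U_{\pa_\cG(\tau(j))}$ to contract the conditioning set down to the parents, and only then does Assumption~\ref{ass:adjancy} produce the contradiction. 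Your proposal contains neither the sparsest-over-remixings characterization (needed for the converse direction, via the Markov property applied to the unmixed representation) nor the intersection-property step, so even granting your inductive setup the argument does not close.
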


\subsection{Further remarks}\label{sec:44}

Next we discuss if it is possible to recover $U$ along with $\langle {\cG}, I_1, \ldots, I_k \rangle$ up to their CD-equivalence class. 
Note a simple contradiction with $\cG = \{ 1 \rightarrow 2 \}$: since we consider general soft interventions, there will always be a valid explanation if we add $U_1$ to $U_2$. 
Therefore even when we can identify $\langle {\cG}, I_1, \ldots, I_k \rangle$ up to its CD-equivalence class, we still cannot identify $U$ in an element-wise fashion. 
However, our identifiability results still allow us to draw causal explanations and predict the effect of unseen combinations of interventions, as we discuss below.

\noindent\textbf{Application of Theorem \ref{thm:1} and \ref{thm:2}.} 
Given unpaired data $\cD,\cD^{I_1},...\cD^{I_K}$, these two theorems guarantee that we can identify which  $I_1,...,I_K$ correspond to intervening on the same latent node. 
Furthermore, Theorem \ref{thm:1} shows that we are able to identify ancestral relationships between the intervention targets of $I_1,...,I_K$, while Theorem \ref{thm:2} guarantees identifiability of the exact causal structure. 

\noindent For example, given high-dimensional single-cell transcriptomic readout from a genome-wide knock-down screen, we can under Assumption~\ref{assumption:1} identify the number of latent causal variables (which we can interpret as the programs of a cell), under Assumption~\ref{ass:linear-faith} identify which genes belong to the same program, and under Assumption~\ref{ass:adjancy} identify the full regulatory relationships between the programs.

\noindent\textbf{Extrapolation to unseen combinations of interventions.}  Theorems \ref{thm:1} and \ref{thm:2} also guarantee that we can predict the effect of unseen combinations of interventions. Namely, consider a combinatorial intervention $\cI\subset \{I_1,...,I_K\}$, where $T_\cG(I)\neq T_\cG(I')$ for all $I \neq I' \in \cI$. 
In other words, $\cI$ is an intervention with multiple intervention targets that is composed by combining interventions among $I_1,...,I_K$ with different targets.

Denote by $\langle\hat{U},\hat{\cG},
\hat{I}_1,...,\hat{I}_K\rangle$ the latent model identified from the interventions $\{I_1,...,I_K\}$. Recall from Section~\ref{sec:mec} that we can also infer the mixing function $\hat{f}$ and mechanisms from $\hat{U},\hat{\cG},
\hat{I}_1,...,\hat{I}_K$.
From this, we can infer the interventional distribution under the combinatorial intervention $\cI$: 
\begin{equation}\label{eq:combinatorial-intervention}
    X = 
    \hat{f}(\hatU),~~\hatU \sim \bbP_{\hatU}^{\hat{\cI}} 
    = 
    \prod\nolimits_{\hat{I}\notin \cI} \bbP_{\hatU} \big( \hat{U}_{T_\hatcG(\hatI)} \mid \hatU_{\pa_{\hatcG}(T_\hatcG(\hatI))} \big) 
    \cdot 
    \prod\nolimits_{\hat{I}\in \cI} \bbP_{\hatU}^{\hat{I}} \big( \hatU_{T_\hatcG(\hatI)} \mid \hatU_{\pa_{\hatcG}(T_\hatcG(\hatI))} \big).    
\end{equation}

We state the conditions for this result informally in the following theorem. 
A formal version of this theorem together with its proof are given in Appendix~\ref{subsec_thm3}.

\begin{theorem}[Informal]\label{thm:3}
 Let $\mathcal{I}$ be a combinatorial intervention (i.e., with multiple intervention targets) combining several interventions among $I_1,...,I_K$ with different targets. 
 The above procedure allows sampling $X$ according to the distribution $X=f(U),U\sim \bbP^\cI$.
\end{theorem}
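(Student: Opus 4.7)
The plan is to reduce Theorem~\ref{thm:3} to Theorems~\ref{thm:1}--\ref{thm:2} together with the definition of CD-equivalence (Definition~\ref{def:meq}), by checking that the combinatorial intervention distribution is preserved under the equivalences that CD allows. Concretely, let $\langle\hat{U},\hat{\cG},\hat{I}_1,\dots,\hat{I}_K\rangle$ be any representative of the identified CD-equivalence class of the true latent model. By Definition~\ref{def:meq}, there is a permutation $\pi$ of $[p]$ and constants $\lambda_i\neq 0,b_i$ such that $\hat{U}_i=\lambda_{\pi(i)}U_{\pi(i)}+b_{\pi(i)}$, $\hat{\cG}=\cG_\pi$, and $\hat{I}_k=(I_k)_\pi$. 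The induced mixing function $\hat{f}$ satisfies $\hat{f}(\hat{U})=f(U)=X$ by construction (see Section~\ref{sec:mec}).

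Step one is to show that the observational and interventional mechanisms transform correctly. Since the map $U\mapsto \hat{U}$ is a coordinate-wise affine bijection composed with the index permutation $\pi$, and since $\hat{\cG}=\cG_\pi$, the parent sets transform consistently: $\pa_{\hat\cG}(i)=\pi^{-1}(\pa_\cG(\pi(i)))$. A change-of-variables calculation then gives $\bbP_{\hat U}(\hat U_i\mid \hat U_{\pa_{\hat\cG}(i)})$ equal to the push-forward under the affine map of $\bbP(U_{\pi(i)}\mid U_{\pa_\cG(\pi(i))})$, and the analogous identity for $\bbP_{\hat U}^{\hat I_k}$ versus $\bbP^{I_k}$ follows from $\hat{I}_k=(I_k)_\pi$. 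In particular, the Markov factorization of $\bbP_{\hat U}$ with respect to $\hat\cG$, as well as of each $\bbP_{\hat U}^{\hat I_k}$ with the appropriate single factor replaced, is preserved.

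Step two handles the combinatorial intervention $\cI$. Because the interventions in $\cI$ target distinct latent nodes, the true interventional distribution $\bbP^\cI_U$ is the product of observational mechanisms at non-targets and intervened mechanisms at targets---this is the standard truncated-factorization definition of a soft combinatorial intervention (see, e.g., \cite{hauser2012characterization,yang_soft_2018}), which is well-defined since atomic interventions on different nodes commute. Applying step one factor-by-factor, $\bbP_{\hat U}^{\hat\cI}$ as given by Eq.~\eqref{eq:combinatorial-intervention} is exactly the push-forward of $\bbP^\cI_U$ under $U\mapsto \hat U$. Composing with $\hat f$ and using $\hat f(\hat U)=f(U)$ yields that a sample $X=\hat f(\hat U)$ with $\hat U\sim \bbP_{\hat U}^{\hat\cI}$ has the same distribution as $X=f(U)$ with $U\sim \bbP^\cI$.

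The main obstacle I anticipate is bookkeeping rather than mathematical subtlety: carefully propagating the permutation $\pi$ and the affine rescalings through every conditional in the factorization, and verifying that the soft interventional mechanisms $\bbP^{I}(U_{T_\cG(I)}\mid U_{\pa_\cG(T_\cG(I))})$---which generally depend nontrivially on the parent values---still transform to $\bbP_{\hat U}^{\hat I}(\hat U_{T_{\hat\cG}(\hat I)}\mid \hat U_{\pa_{\hat\cG}(T_{\hat\cG}(\hat I))})$ under the affine reparameterization. A secondary issue is the fact that the identified $\hat f$ and the identified mechanisms are themselves only determined up to the CD-equivalence; the statement must therefore be interpreted as asserting that Eq.~\eqref{eq:combinatorial-intervention} gives the correct distribution over $X$ for \emph{any} representative of the class, which the above push-forward argument handles uniformly.
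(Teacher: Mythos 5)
There is a genuine gap, and it sits exactly at the point the paper itself flags in Section~\ref{sec:44}: the latent variables $U$ are \emph{not} identifiable up to CD-equivalence, even when $\langle \cG, I_1,\dots,I_K\rangle$ is. Your argument starts from a representative satisfying $\hat{U}_i=\lambda_{\pi(i)}U_{\pi(i)}+b_{\pi(i)}$, i.e.\ an element-wise affine rescaling plus permutation of the true latent variables, and then runs a change-of-variables/push-forward computation. But Theorems~\ref{thm:1} and~\ref{thm:2} do not deliver such a representative. What the identification procedure actually produces (the sparsest topological representation of Appendix~\ref{app:sparsest-topological-representation}) is $\hatU = U\hatGamma + \hatc$, where the only structural guarantee on $\hatGamma$ is that the coefficient of $U_{\tau(j)}$ in $\hatU_l$ vanishes unless $j=l$ or $\tau(l)\in\de_\cG(\tau(j))$; that is, each $\hatU_l$ is a nonzero multiple of $U_{\tau(l)}$ plus an arbitrary linear combination of the \emph{ancestors} of $U_{\tau(l)}$. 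Moreover, the graph identified in Theorem~\ref{thm:1} (to which the formal Theorem~\ref{thm:3} refers) is the transitive closure $\cT\cS(\cG_\tau)$, not $\cG_\tau$. For $\cG=\{1\to 2\}$, the representation $\hatU=(U_1,\,U_2+U_1)$ is a perfectly valid output of the procedure, and it is not CD-equivalent to $U$; your push-forward argument simply does not apply to it. Once this mixing is allowed, it is no longer immediate that plugging the learned mechanisms into Eq.~\eqref{eq:combinatorial-intervention} reproduces $\bbP^\cI$.

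Handling that mixing is the actual content of the paper's proof, which establishes three equalities: (i) $\bbP^{\hat{\cI}}(\hatU_i\mid\hatU_{\pa_{\hatcG}(i)})=\bbP^{\hat{\cI}}(\hatU_i\mid\hatU_{\an_{\hatcG}(i)})$, because $\hatcG$ is transitively closed so parents coincide with ancestors; (ii) $\bbP^{\hat{\cI}}(\hatU_i\mid\hatU_{\an_{\hatcG}(i)})=\bbP^{\cI}(U_i\mid U_{\an_{\cG}(i)})$, because the restriction of $\hatGamma$ to the ancestor block is invertible, so conditioning on $\hatU_{\an_{\hatcG}(i)}$ is equivalent to conditioning on $U_{\an_\cG(i)}$, after which the ancestor contamination of $\hatU_i$ is a deterministic offset; and (iii) $\bbP^{\cI}(U_i\mid U_{\an_\cG(i)})=\bbP^{\cI}(U_i\mid U_{\pa_\cG(i)})$ by the Markov property. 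Steps (i) and (ii) are where the real work lies, and neither appears in your argument. Your closing remark that the statement should hold ``for any representative of the class'' does not rescue the proof, because the set of representations the procedure can return is strictly larger than the CD-equivalence class of $U$. The portion of your argument that is correct --- invariance of Eq.~\eqref{eq:combinatorial-intervention} under permutation and element-wise affine reparameterization --- is true but is not the theorem.
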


\section{Discrepancy-based VAE Formulation}
\label{sec:VAE}

Having shown identifiability guarantees for causal disentanglement, we now focus on developing a practical algorithm for recovering the CD-equivalence class from data.
As indicated by our proof of Theorem \ref{thm:2}, the latent causal graph can be identified by taking the sparsest model compatible with the data.
This characterization suggests maximizing a penalized log-likelihood score, a common method for model selection in causal structure learning \cite{chickering2002optimal}.
The resulting challenging combinatorial optimization problem has been tackled using a variety of approaches, including exact search using integer linear programming \cite{cussens2020gobnilp}, greedy search \cite{chickering2002optimal,raskutti2018learning,solus2021gsp}, and more recently, gradient-based approaches where the combinatorial search space is relaxed to a continuous search space \cite{zheng2018dags,lachapelle2020gradient,lorch2021dibs,vowels2022d}.

Gradient-based approaches offer several potential benefits, including scalability, ease of implementation in automatic differentiation frameworks, and significant flexibility in the choice of components.
In light of these benefits, we opted for a gradient-based approach to optimization.
In particular, we replace the log-likelihood term of our objective function with a variational lower bound by employing the framework of autoencoding variational Bayes (AVB), widely used in prior works for causal disentanglement \cite{lippe2022citris,brehmer2022weakly}.
To employ AVB, we re-parameterize each distribution $\bbP(U_i \mid U_{\pa_\cG(i)})$ in Eq. \eqref{eq:modelnew} into $U_i = s_i(U_{\pa_\cG(i)}, Z_i)$, where $Z_i$ is an independent exogenous noise variable and $s_i$ denotes the causal mechanism that generates $U_i$ from $U_{\pa_\cG(i)}$ and $Z_i$.
We let $p(Z)$ be a prior distribution over $Z$ and $p_{\theta,\varnothing}(X \mid Z)$ be the conditional distribution of $X$ given $Z$ under no intervention,
thereby defining the marginal distribution $p_{\theta,\varnothing}(X)$.
Given an arbitrary distribution $q_\phi(Z \mid X)$, we have the following well-known inequality (often called the Evidence Lower Bound or ELBO) for any sample $x$:
\begin{align*}
    \log p_{\theta,\varnothing}(x)
    \geq
    \cL_{\theta,\phi}^\recon(x) + \cL_{\phi}^\reg(x),
    \quad\quad
    \where\quad
    \cL_{\theta,\phi}^\recon(x) &:= \bbE_{q_\phi(Z \mid x)} \log p_{\theta,\varnothing}(x \mid Z),
    \\
    \cL_{\phi}^\reg(x) &:= -D_\KL(q_\phi(Z \mid x) \| p(Z)).
\end{align*}
Putting this into the framework of an autoencoder, we call the distribution $q_\phi$ the \textit{encoder} and the distribution $p_{\theta,\varnothing}$ the \textit{decoder}.
In our case, the decoder is composed of two functions.
First, a deep structural causal model $(A_\theta, s_{\theta,\varnothing})$ maps the exogenous noise $Z$ to the causal variables $U^\varnothing$.
In particular, the adjacency matrix $A_\theta$ defines the parent set for each variable, while $s_{\theta,\varnothing} = \{ (s_{\theta,\varnothing})_i \}_{i=1}^p$ denotes the learned causal mechanisms.
Second, a mixing function $f_\theta$ maps the causal variables $U^\varnothing$ to the observed variables $X^\varnothing$.
Because of the permutation symmetry of CD-equivalence, we can fix $A_\theta$ to be upper triangular without loss of generality.
We add a loss term $\cL^\sparse_\theta := - \| A_\theta \|_1$ to encourage $A_\theta$ to be sparse.

\begin{figure}[t]
    \centering
    \vspace{-.1in}
    \includegraphics[width=.85\linewidth]{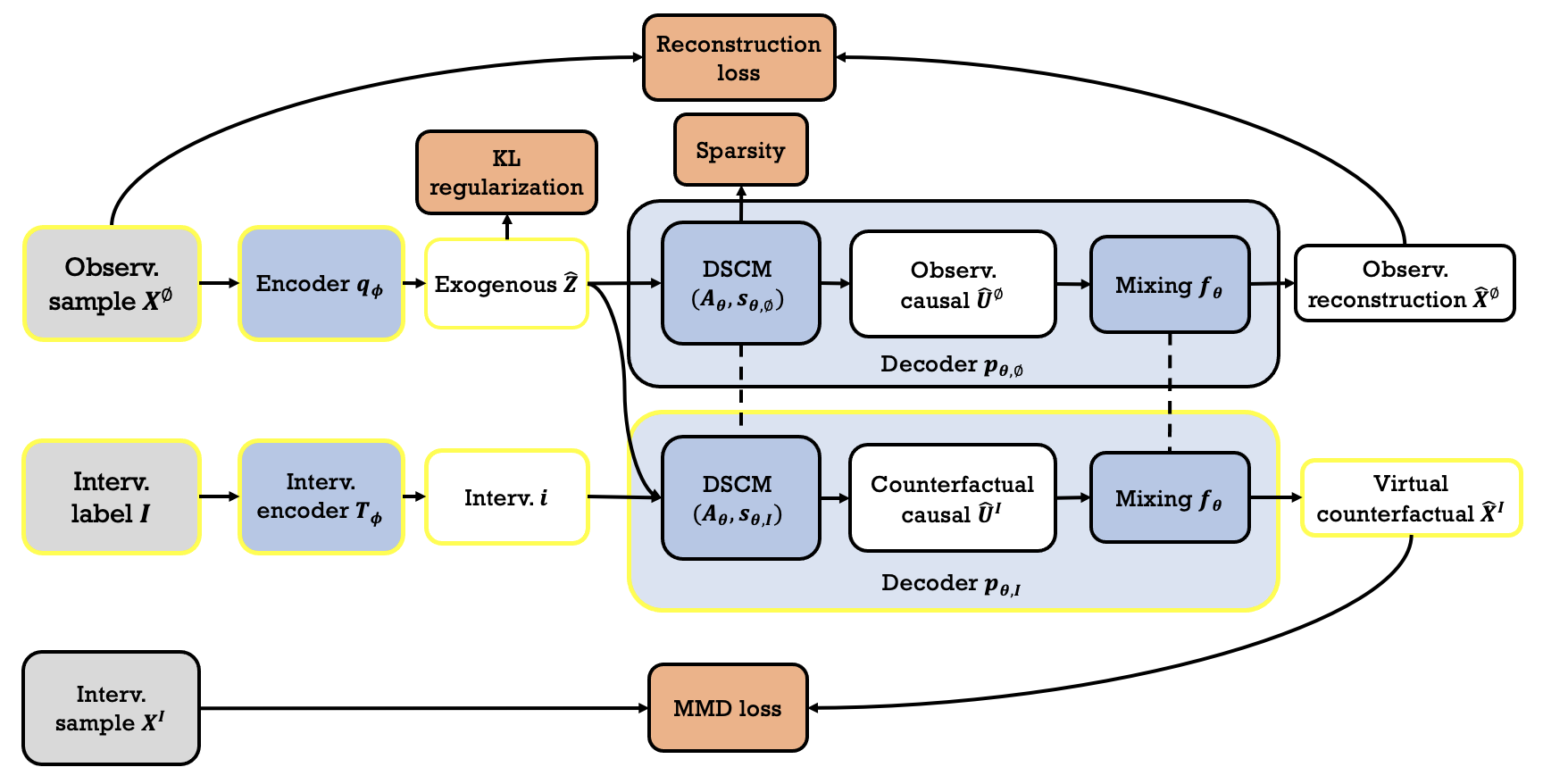}
    \caption{\textbf{Our proposed CausalDiscrepancyVAE architecture}. Gray boxes represent inputs, white boxes the generated values, blue boxes the trainable modules, and orange boxes the terms of the loss function. Dashed lines indicate copies of the same module or related modules. Highlighted boxes show the procedure to generate virtural counterfactual samples.}
    \label{fig:causaldiscrepancyvae}
\end{figure}

While the observational samples are generated from the distribution $p_{\theta,\varnothing}$, the interventional samples are drawn from a different but related distribution $p_{\theta,I}$.
The modularity of our decoder allows us to replace $(A_\theta, m_{\theta,\varnothing})$ with an interventional counterpart $(A_\theta, m_{\theta,I})$, while keeping the mixing function $f_\theta$ constant.
This is illustrated by the highlighted boxes in Fig.~\ref{fig:causaldiscrepancyvae}. 
For each intervention label $I$, the corresponding intervention target $i$ and a shift\footnote{For simplicity, we parameterize interventions in DSCM as shifts, though the theoretical results hold for general nonparameteric interventions.} $a_i$ is determined by an intervention encoder $T_\phi$, which uses softmax normalization to approximate a one-hot encoding of the intervention target.
Given these intervention targets, we generate ``virtual" counterfactual samples for each observational sample.
Such samples follow the distribution $\bbP_{\theta,\phi}(\hatX^{\hatI_k})$, the pushforward of $\bbP_X^\varnothing$ under the action of the encoder $q_\phi$ and decoder $p_{\theta,I}$. These samples are compared to real samples from the corresponding interventional distribution.
A variety of discrepancy measures can be used for this comparison.
To avoid the saddle point optimization challenges that come with adversarial training, we do not consider adversarial methods (e.g. the dual form of the Wasserstein distance in \cite{arjovsky2017wasserstein}). 
This leaves non-adversarial discrepancy measures, such as the MMD (Maximum Mean Discrepancy) \cite{gretton2012kernel}, the entropic Wasserstein distance \cite{frogner2019learning}, and the sliced Wasserstein distance \cite{wu2019sliced}. 
In this work, we focus on the MMD measure, whose empirical estimate we recall in Appendix~\ref{app:mmd-definition}.
We take $\cL^\discrep_{\theta,\phi} := - \sum_{k=1}^K \MMD(\bbP_{\theta,\phi}(\hatX^{\hatI_k}), \bbP_X^{I_k})$.
Thus, the full loss function used during training is
\begin{equation}
\cL_{\theta,\phi}^{\alpha, \beta, \lambda} 
:= \bbE_{X^\varnothing} \left[ 
\cL^\reg_{\theta,\phi}(X) + \beta \cL^\recon_\phi(X)
\right]
+
\alpha \cL^\discrep_{\theta,\phi}
+ 
\lambda \cL^\sparse_\theta.
\label{eq:ELBOunassoc}
\end{equation}
A diagram of the proposed architecture is shown in Fig. \ref{fig:causaldiscrepancyvae}.
Values of the hyperparameters $\alpha, \beta, \lambda$ used in our loss function as well as other hyperparameters are described in Appendix~\ref{app:implementation}.

Our loss function exhibits several desirable properties.
First, as we show in Appendix \ref{app:lower-bound}, the unpaired data loss function lower bounds the \textit{paired} data log-likelihood that one would directly optimize in the oracle setting where true counterfactual pairs were available.
Second, as we show in Appendix~\ref{app:discrepancy-vae-consistency}, this procedure is \textit{consistent}, in the sense that optimizing the loss function in the limit of infinite data will recover the generative process (under suitable conditions).
This consistency result also guarantees that the learned model can consistently predict the effect of multi-node interventions; see Appendix \ref{app:multi-node-consistency}.

\section{Experiments}
\label{sec:experiments}

We now demonstrate our method on a biological dataset. 
We use the large-scale Perturb-seq study from \cite{norman2019exploring}. After pre-processing, the data contains 8,907 unperturbed cells (observational dataset $\cD$) and 99,590 perturbed cells.
The perturbed cells underwent CRISPR activation \cite{gilbert2014genome} targeting one or two out of 105 genes (interventional datasets $\cD^1$,...,$\cD^K$, $K = 217$).
CRISPR activation experiments modulate the expression of their target genes, which we model as a shift intervention. 
Each interventional dataset comprises 50 to 2,000 cells. Each cell is represented as a 5,000-dimensional vector (observed variable $X$) measuring the expressions of 5,000 highly variable genes. 

\begin{figure}[t]
    \centering
    \vspace{-.1in}
    \includegraphics[width=.85\textwidth]{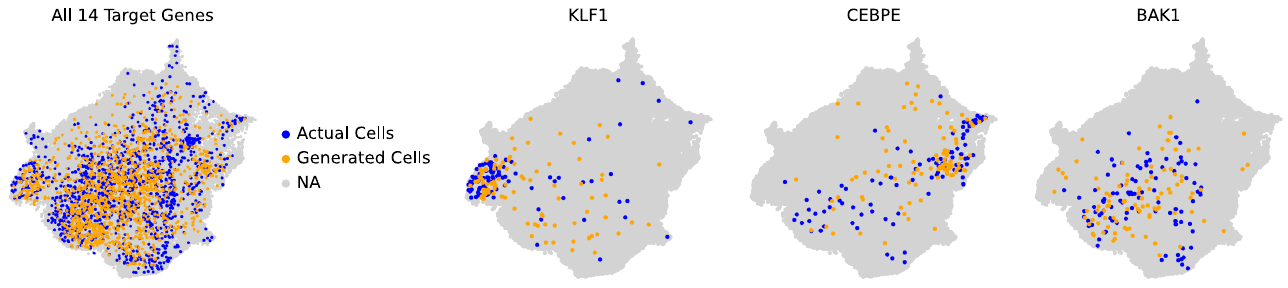}
    \caption{\textbf{The distribution of generated samples mirrors the distribution of actual samples.}
    Samples are visualized using UMAP.
    \textit{Left:} Samples from the 14 single target-node interventions with more than $800$ cells. 
    \textit{Middle-Right:} Samples for target genes SET, CEBPE, and KLF1.}
    \label{fig:7}
    \vspace{-.2in}
\end{figure}

To test our model, we set the latent dimension $p=105$, corresponding to the total number of targeted genes. 
During training, we include all the unperturbed cells from $\cD$ and the perturbed cells from the single-node interventional datasets $\cD^1,...,\cD^{105}$ that target one gene. 
For each single-node interventional dataset with over 800 cells, we randomly extract 96 cells and reserve these for testing. 
The double-node interventions (112 distributions $\cD^{106},...,\cD^{217}$) targeting two genes are entirely reserved for testing. 
The following results summarize the model with the best training performance.
Extended evaluations and detailed implementation can be found in Appendix \ref{app:implementation} and \ref{app:extended-results}. In additional, we also provide ablation studies on biological data and a simple simulation study in Appendix~\ref{app:experiment}.

\textbf{Single-node Interventional Distributions.} 
To study the generative capacity of our model for interventions on single genes, we produce $96$ samples for each single-node intervention with over 800 cells (14 interventions). 
We compare these against the left-out $96$ cells of the corresponding distributions. 
Figure \ref{fig:7} illustrates this for $3$ example genes in 2 dimensions using UMAP~\cite{mcinnes2018umap} with all other cells in the dataset as background (labeled by ‘NA’). 
Our model is able to discover subpopulations of the interventional distributions (e.g., for KLF1, the generated samples are concentrated in the middle left corner). 
We provide a quantitative evaluation for all $105$ single-node interventions in Figure~\ref{fig:rmse-singlenode}. 
The model is able to obtain close to perfect $R^2$ (on average 0.99 over all genes and 0.95 over most differentially expressed genes). 

\begin{figure}[ht]
    \centering
    \vspace{.2in}
    \includegraphics[width=.39\textwidth]{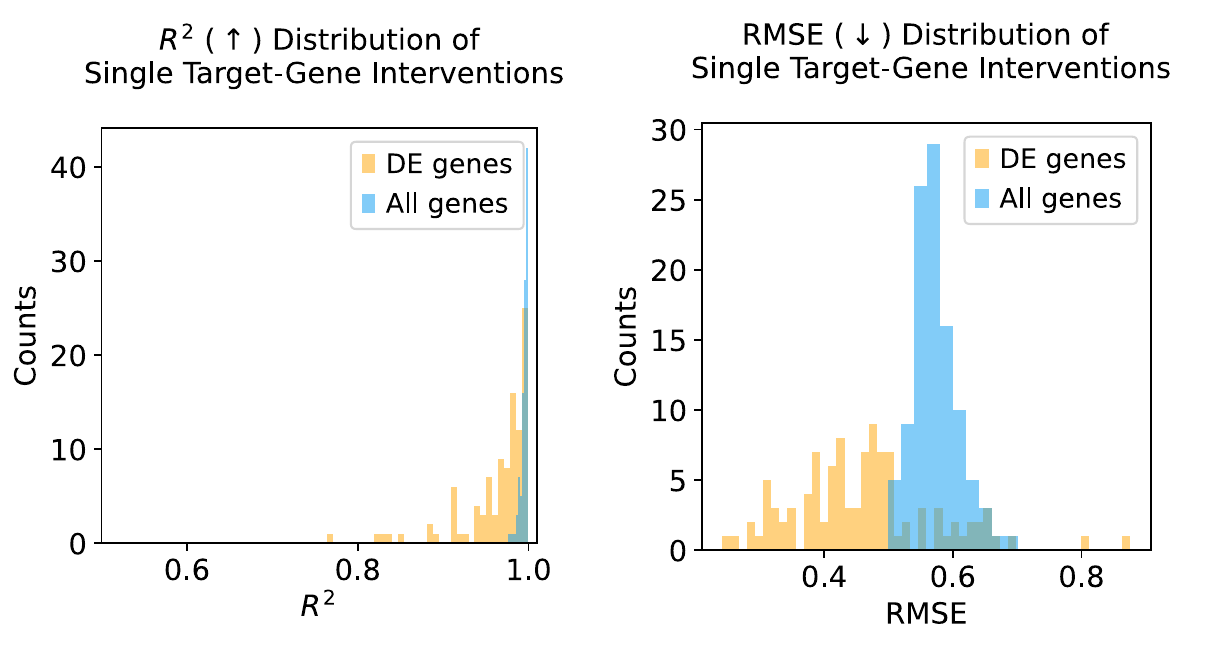}
    \vspace{-.1in}
    \caption{
    \textbf{Our model accurately predicts the effect of single-node interventions.}
    `All genes' indicates measurements using the entire $5000$-dimensional vectors; `DE genes' indicates measurements using the $20$-dimensional vectors corresponding to the top $20$ most differentially expressed genes.}
    \vspace{-.2in}
    \label{fig:rmse-singlenode}
\end{figure}

\textbf{Double-node Interventional Distributions.} Next, we analyze the generalization capabilities of our model to the $112$ double-node interventions. Despite \emph{never} observing any cells from these interventions during training, we obtain reasonable $R^2$ values (on average 0.98 over all
genes and 0.88 over most differentially expressed genes).
However, when looking at the generated samples for individual pairs of interventions, it is apparent that our model performs well on many pairs, but recovers different subpopulations for some pairs (examples shown in Figure \ref{fig:umap-double-example} in Appendix \ref{app:extended-results}). 
The wrongly predicted intervention pairs could indicate that the two target genes act non-additively, which needs to be further evaluated and is of independent interest for biological research \cite{horlbeck2018mapping,norman2019exploring}.

\begin{wrapfigure}{R}{0.49\textwidth}
    \centering
    \vspace{-.15in}
    \includegraphics[width=.49\textwidth]{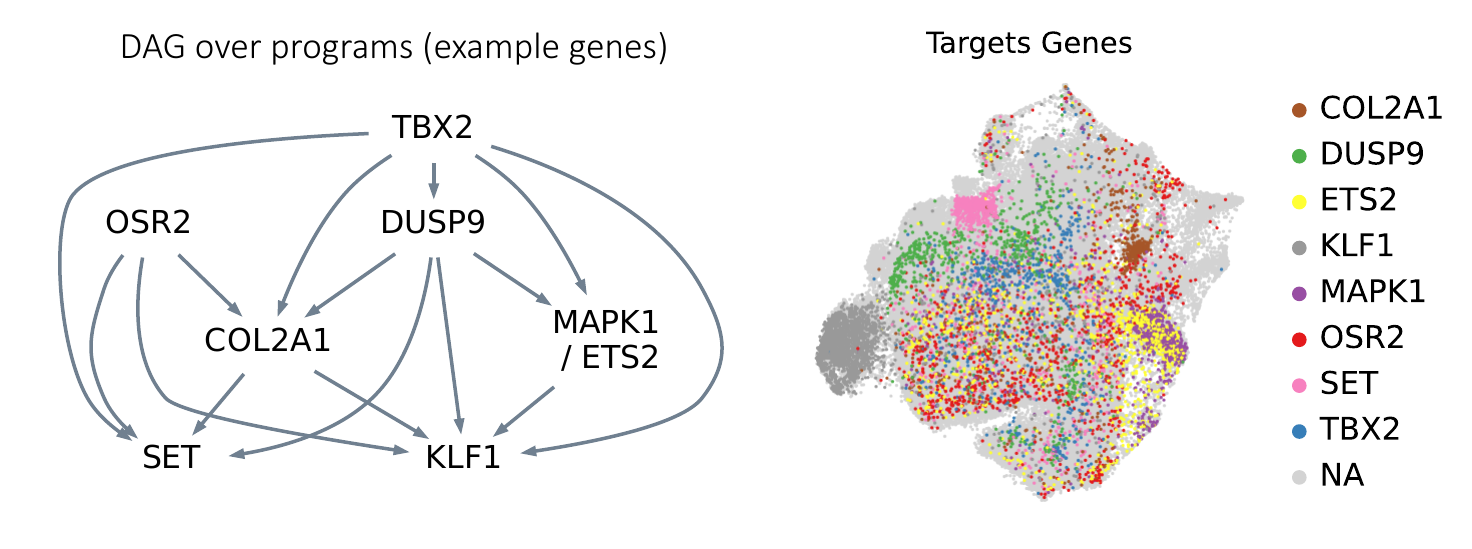}
    \caption{\textbf{Structure learning on the biological dataset.} \emph{Left}: learned DAG between target genes (colors indicate edge weights). \emph{Right}: UMAP visualization of the distributions.}
    \label{fig:learned-structure}
    \vspace{-.1in}
\end{wrapfigure}
\textbf{Structure Learning.} Lastly, we examine the learned DAG between the intervention targets. Specifically, this corresponds to a learned gene regulatory network between the learned programs of the target genes. For this, we reduce $p$ from $105$ until the learned latent targets of $\cD^1,...,\cD^{105}$ cover all $p$ latent nodes. This results in $p=7$ groups of genes, where genes are grouped by their learned latent nodes. We then run our algorithm with fixed $p=7$ multiple times and take the learned DAG with the least number of edges. This DAG over the groups of targeted genes is shown with example genes in Figure \ref{fig:learned-structure} (left).
This learned structure is in accordance with previous findings. For example, we successfully identified the edges DUSP9$\rightarrow$MAPK1 and DUSP9$\rightarrow$ETS2, which is validated in \cite{norman2019exploring} (see their Fig.~5).
We also show the interventional distributions targeting these example genes in Figure \ref{fig:learned-structure} (right).
Among these, MAPK1 and ETS2 correspond to clusters that are heavily overlapping, which explains why the model maps both distributions to the same latent node.

\section{Conclusion}\label{sec:conclusion}
We derived identifiability results for causal disentanglement from single-node interventions, and presented an autoencoding variational Bayes framework to estimate the latent causal representation from interventional samples. Identification of the latent causal structure and generalization to multi-node interventions was demonstrated experimentally on genetic data.

\subsection{Limitations and Future Work} 

This paper opens up several direction for future theoretical and empirical work, which we now discuss.

\textbf{Theoretical Perspective.} 
We have focused on the setting where a single-node intervention on each latent node is available, similar to prior works on causal disentanglement \cite{ahuja2022interventional,squires2023linear}.
However, we highlight three issues in this setup and discuss potential remedies. 
First, by assuming access to data from intervening on every single latent node, we inherently possess partial knowledge of all the latent variables, even though we are unaware of their specific values or whether multiple interventions act on the same variable.
The setups that do not assume interventions but the existence of anchored observed variables (i.e., variables with only one latent parent) \cite{halpern2015anchored, cai2019triad,xie2020generalized, xie2022identification} face the same issue.
This assumption can be unsatisfying in the context of causal representation learning, where the causal variables are assumed to be entirely unknown. 
Second, it may be impossible to intervene on all latent causal variables, especially in scenarios involving latent confounding. 
For instance, in climate research, it might be impossible to intervene on a variable like the precipitation level in a particular region.
Finally, the assumption of single-node interventions can be overly optimistic in many applications.
For example, in the case of chemical perturbations on cells, it is known that drugs often target multiple variables.

Nevertheless, the results obtained in the current setup can serve as a foundation and stepping stone towards the ultimate goal of general causal representation learning.
On one hand, our analysis showed what can be learned from each intervention. 
This is helpful when considering cases where only a subset of the latent causal variables can be intervened on.
On the other hand, the key techniques employed in our proofs can be extended to the multi-node setting. 
Specifically, in the latent space, one should expect only the marginals of variables downstream of a multi-node intervention to change. 
%

Moreover, we have primarily focused on the infinite data regime for analyzing identifiability. 
Considering the expensive nature of obtaining interventional samples in practice, there is ample room for further investigation concerning sample complexity.
Aside from the feasibility of identifiability, many applications are concerned with specific downstream tasks. 
Full identification of the underlying causal representations provides a comprehensive understanding of the system and would be beneficial for multiple downstream tasks. 
However, in certain cases, full identification may be unnecessary or inefficient for a particular task. 
Therefore, it is of interest to develop task-specific identifiability criteria for causal representation learning.

\textbf{Empirical Perspective.} We make two remarks on the VAE framework proposed in this work. 
First, as shown in our experiments in Section~\ref{sec:experiments}, our proposed framework can still be applied in settings with multi-node interventions and fewer single-node interventions. 
For instance, one can model multi-node interventions by reducing the temperature in the softmax layer. 
Second, due to the permutation symmetry of CD-equivalence, we impose an upper-triangular structure on the adjacency matrix in the deep SCM and learn the intervention targets.
Alternatively, when there is exactly one intervention available for each latent node, one can instead prefix the intervention targets and learn the adjacency matrix.
Specifically, we can set the intervention targets of $I_1,...,I_p$ to be a random permutation of $[p]$. Subsequently, the adjacency matrix can be learned for example via the nontears penalty \cite{zheng2018dags} to enforce acyclicity. However, both methods inherit the combinatorial nature of learning a DAG, and therefore their performance may require large sample sizes and can be sensitive to initialization \cite{kaiser2021unsuitability}. Consequently, endeavors to improve the optimization process and robustness of such models would be valuable.

\subsection{Discussion of Contemporaneous Works.}

This work is concurrent with a number of other works in interventional causal representation learning.
Unless otherwise noted, all of these works consider single-node interventions, as we do in this paper.
Most similar to our setting is \cite{varici2023score}, which studies identifiability of nonparametric latent SCMs under linear mixing.
They consider the case where exactly one intervention per latent node is available, which is an easier setting as we discussed in Section~\ref{sec:setup}.
In that setting, they provide a characterization of the learned causal variables. 
On the other hand, \citet{buchholz2023learning} studies identifiability of a linear latent SCM under nonparametric mixing. They also consider both hard and soft interventions, but in the form of linear SCM with additive Gaussian noises.
Three concurrent works \cite{jiang2023learning,von2023nonparametric,liang2023causal} consider \textit{both} nonparametric SCMs and nonparametric mixing functions:
\citet{von2023nonparametric} prove identifiability for the case of $p = 2$ latent variables when there is one intervention per latent variable.
They provide an extension to arbitrary $p$ for settings where there are paired interventions on each latent variable.
Meanwhile, \cite{jiang2023learning} consider arbitrary $p$, without paired interventions.
However, they use only conditional independence statements over the observed variables $X$ to recover the latent causal graph.
As a result, their identifiability guarantees place restrictions on the latent causal graph, unlike the other works discussed here.
The third work \cite{liang2023causal} studies the \textit{Causal Component Analysis} problem, where the latent causal graph is assumed to be known.
Finally, we note that other concurrent works study causal representation learning without interventional data \cite{markham2023neuro,kong2023identification} or with vector-valued contexts instead of interventions \cite{komanduri2023learning}.

\section*{Acknowledgements}

We thank the Causal Representation Learning Workshop at Bellairs Institute for helpful discussions. All authors acknowledge support by the MIT-IBM Watson AI Lab.  In addition, J.~Zhang, C.~Squires and C.~Uhler acknowledge support by the NSF TRIPODS program (DMS-2022448), NCCIH/NIH (1DP2AT012345), ONR (N00014-22-1-2116), the United States Department of Energy (DOE), Office of Advanced Scientific Computing Research (ASCR), via the M2dt MMICC center (DE-SC0023187), the Eric and Wendy Schmidt Center at the Broad Institute, and a Simons Investigator Award. 

%
%

\bibliography{ref}
\bibliographystyle{apalike}
\flushcolsend

\clearpage
\tableofcontents
\addtocontents{toc}{\protect\setcounter{tocdepth}{2}}

\newpage
\appendix
\onecolumn

\section{Useful Lemmas}\label{app:useful-lemmas}

\subsection{Remarks on Assumption~\ref{assumption:1}}

Here we show that the assumption on the functional class of $f$ is satisfied if $f$ is linear and injective, whenever the support of $\bbP_U$ has non-empty interior. 
Recall Assumption~\ref{assumption:1}.
\assumptionone*

Denote the support of $\bbP_U,\bbP_X$ as $\bbU,\bbX$ respectively. Let $\bbU^\circ$ be the interior of $\bbU$.

\begin{lemma}\label{lm:a1}
Suppose $\bbU^\circ$ is a non-empty subset of $\bbR^p$. If $f:\bbU \rightarrow \bbX$ is linear and injective, then it must be a full row rank polynomial.
\end{lemma}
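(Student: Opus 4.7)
The plan is to take $d=1$ in the definition of a full row rank polynomial and identify the claimed matrix $H$ with the linear coefficient of $f$. Since $f$ is linear, I can write $f(U) = UA + b$ for some $A \in \bbR^{p \times n}$ and $b \in \bbR^n$ (treating $U$ as a row vector to match the paper's convention). This is already in the target form with $H := A$ and $h := b$, so the only nontrivial step is verifying that $A$ has full row rank $p$.

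For that, I will argue by contradiction using a small perturbation inside the open set $\bbU^\circ$. Suppose $A$ is not full row rank; then there exists a nonzero $v \in \bbR^p$ with $vA = 0$. Fix any $u_0 \in \bbU^\circ$, which exists since $\bbU^\circ$ is nonempty. Because $\bbU^\circ$ is open in $\bbR^p$, there exists $\epsilon > 0$ such that $u_0 + \epsilon v \in \bbU^\circ \subseteq \bbU$. Then
\[
f(u_0 + \epsilon v) = (u_0 + \epsilon v) A + b = u_0 A + b = f(u_0),
\]
while $u_0 + \epsilon v \neq u_0$, contradicting injectivity of $f$ on $\bbU$. Hence $A$ has rank $p$, and so $f(U) = U H + h$ exhibits $f$ as a full row rank polynomial of degree $d=1$.

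The argument is essentially a one-line perturbation, and the only subtlety is making sure the perturbed point stays in the domain $\bbU$; this is handled exactly by the hypothesis that $\bbU^\circ$ is a nonempty open subset of $\bbR^p$, which gives us room to move in the direction $v$. I do not anticipate any substantial obstacle beyond this bookkeeping.
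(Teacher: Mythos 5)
Your proof is correct and is essentially identical to the paper's: both write $f(U)=UH+h$, assume $H$ lacks full row rank to obtain a nonzero $v$ with $vH=0$, and use the nonempty interior of the support to perturb a point by $\epsilon v$ without leaving $\bbU$, contradicting injectivity. No differences worth noting.
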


\begin{proof}
Since $f$ is linear, it can be written as $f(U) = UH+h$ for some $H \in \bbR^{p\times n}$ and $h\in\bbR^p$. If $H$ is not of full row rank, then there exists a non-zero vector $V\in \bbR^p$ such that $VH=0$. Let $U\in \bbU^{\circ}$, then there exists $\epsilon>0$ such that $U+\epsilon V\in \bbU$. We have $f(U+\epsilon V)=f(U)$, which violates $f$ being injective. Therefore $H$ must have full row rank. 
\end{proof}

\subsection{Proof of Lemma~\ref{lm:linear-identify}}

The proof of Lemma~\ref{lm:linear-identify} follows from \cite{ahuja2022interventional}. For completeness, we present a concise proof here. Then we state a few remarks. 
Recall Lemma~\ref{lm:linear-identify}.
\lmlinearidentify*

\begin{proof}
We solve for the smallest integer $\hat{p}$ such that there exists a full row rank polynomial $\hat{f}:\bbR^{\hat{p}}\rightarrow \bbR^n$ where $\hat{U}:=\hat{f}^{-1}(X)$ for $X\in\bbX$ has non-empty support $\hat{\bbU}^\circ\subseteq\bbR^{\hat{p}}$. In other words, denote all pairs of $\bbP_U,f$ that satisfy Assumption~\ref{assumption:1} as $\cF_p$, we solve for
\begin{equation}\label{lm1:eq1}
\min\nolimits_{(\bbP_{\hat{U}},\hat{f})\in\cF_{\hat{p}}} \hat{p} \quad\textrm{subject to }\bbP_{\hat{f}(\hat{U})}=\bbP_{X}.
\end{equation}

Note that $\hat{f}(\hat{U})=X=f(U)$ for all $U\in\bbU$. Since $\hat{f},f$ are full row rank polynomials, there exist full row rank matrices $\hat{H}\in\bbR^{(p+...+p^{\hat{d}})\times n}, H\in\bbR^{(p+...+p^d)\times n}$ and vectors $\hat{h},h\in\bbR^n$ such that
\begin{equation}\label{lm1:eq2}
    (\hat{U},\kpd \hat{U}^2, ..., \kpd \hat{U}^d)\hat{H} + \hat{h} =\hat{f}(\hat{U})=X=f(U)= (U,\kpd U^2,..., \kpd U^d)H+h.
\end{equation}
Since $\hat{H},H$ are of full rank, they have pseudo-inverses $\hat{H}^\dagger, H^\dagger$ such that $\hat{H}\hat{H}^\dagger=\bI_{p+...+p^{\hat{d}}}$ and $HH^\dagger = \bI_{p+...+p^{d}}$. Multiplying $\hat{H}^\dagger$ to Eq.~\eqref{lm1:eq2}, we have
\[
(\hat{U},\kpd \hat{U}^2, ..., \kpd \hat{U}^d) = (U,\kpd U^2,..., \kpd U^d)H\hat{H}^\dagger+(h-\hat{h})\hat{H}^\dagger.
\]
Therefore $\hat{U}$ can be written as a polynomial of $U$, i.e., $\hat{U}=poly_1(U)$. Similarly, we have $U=poly_2(\hat{U})$. Therefore $U=poly_2(poly_1(U))$ for all $U\in\bbU$. Since $\bbU^\circ$ is non-empty, we know that $U=poly_2(poly_1(U))$ on some open set. By the fundamental theorem of algebra \cite{fine1997fundamental}, we know that $poly_1$ and $poly_2$ must have degree $1$. Thus $\hat{U}=U\Lambda + b$ for some full row rank matrix $\Lambda$ and vector $b$. Since $\Lambda\in\bbR^{p\times \hat{p}}$ is of full row rank, it indicates that $p\leq \hat{p}$. Since $\bbP_U,f\in\cF_p$ satisfy $\bbP_{f(U)}=\bbP_X$, by Eq.~\eqref{lm1:eq1}, we must have $\hat{p}\leq p$. Thus $\hat{p}=p$ and $\hat{U}=U\Lambda+b$ for some non-singular matrix $\Lambda$ and vector $b$.

This proof also shows that we can only identify $U$ up to such linear transformations with observational data. Since for any non-singular matrix $\Lambda$ and vector $b$, let $\hat{f}(\hat{U})=f((\hat{U}-b)\Lambda^{-1})$. We have $\bbP_{\hat{U}},\hat{f}$ satisfy Assumption~\ref{assumption:1} and they generate the same observational data.
\end{proof}

\begin{remark}\label{rmk:1}
With observational data $X=f(U)\in\cD$, we can identify $\hat{U}=\hat{g}(X)$ such that $\hat{U}=U\Lambda +b$ for non-singular $\Lambda$. Then for any interventional data $X=f(U)\in\cD^I$, the analytic continuation of $\hat{g}$ to $\cD^I$ satisfies $\hat{U}:=\hat{g}(X)=U\Lambda +b$ for all $X\in \cD^I$.
\end{remark}

\begin{proof}
    The proof follows immediately by writing $\hat{g},f^{-1}$ as polynomial functions.
\end{proof}

Next, we discuss identifiability of the underlying DAG $\cG$. First, we give an example showing that any causal DAG can explain the observational data. 

\begin{example}
    Suppose the ground-truth DAG is an empty graph $\cG=\varnothing$. With observational data alone, any DAG can explain the data.
\end{example}

\begin{proof}
    Let $\hat{\cG}$ be an arbitrary DAG with topological order $\tau(1),...,\tau(p)$, i.e., $\tau(j)\in\pa_{\hat{\cG}}(\tau(i))$ only if $j<i$. 
    Let $\Lambda$ be the permutation matrix such that $\hat{U}=U\Lambda$ satisfies $\hat{U}_{\tau(i)}=U_i$ for any $i\in [p]$. Then $\hat{U}$ factorizes as $\bbP(\hat{U})=\bbP(U)=\prod_{i=1}^p\bbP(U_i)=\prod_{i=1}^p\bbP(\hat{U}_{\tau(i)})$.
    This implies $\hat{U}_{\tau(i)}\CI \hat{U}_{\tau(j)}$ for $j\leq i-1$. Therefore $\bbP(\hat{U}_{\tau(i)})=\bbP(\hat{U}_{\tau(i)}\mid \hat{U}_{\pa_{\hat{\cG}(i)}})$ and $\bbP(\hat{U})=\prod_{i=1}^p \bbP(\hat{U}_{\tau(i)}\mid \hat{U}_{\pa_{\hat{\cG}(i)}})$ factorizes with respect to $\hat{\cG}$. Thus $\hat{\cG}$ can explain the data.
\end{proof}

Therefore with observational data alone, we cannot identify the underlying DAG $\cG$ up to any nontrivial equivalence class. In \cite{ahuja2022interventional}, it was shown that with a \emph{do} intervention\footnote{Do interventions are a special type of hard interventions where the intervention target collapses to one specific value.} per latent node and assuming the interior of the support of the non-targeted variables is non-empty, one can identify $U$ up to a finer class of linear transformations. Namely, one can identify $U$ up to CD-equivalence (permutation and element-wise affine transformation); see Definition~\ref{def:meq}. Then, assuming for example faithfulness and influentiality~\cite{tian2001causal}, one can identify $\cG$. 

While several extensions beyond do-interventions are discussed in \cite{ahuja2022interventional},  they all involve manipulating the \emph{support} of the intervention targets. In the case where the support of the intervention targets remains unchanged (e.g., additive Gaussian SCMs with shift interventions), a completely new approach and theory needs to be developed. 
\section{Proof of Identifiability with Soft Interventions}\label{app:identifiability}

In this section, we provide the proofs for the results in Section~\ref{sec:identifiability}. 
While our main focus is on general types of soft interventions, our results also apply to hard interventions which include do-interventions as a special case.

\textbf{Notation.} We let $e_i$ denote the indicator vector with the $i$-th entry equal to one and all other entries equal to zero. To be consistent with other notation in the paper, let $e_i\in\bbR^p$ be a row vector. We call $j\in\ch_\cG(i)$ a \textit{maximal child} of $i$ if $\pa_\cG(j)\cap\de_\cG(i)=\varnothing$. Denote the set of all maximal children of $i$ as $\mch_\cG(i)$. For node $i$, define $\bde_\cG(i):=\de_\cG(i)\cup\{i\}$.
Given a DAG $\cG$, we denote the transitive closure of $\cG$ by $\cT\cS(\cG)$, i.e., $i\rightarrow j\in \cT\cS(\cG)$ if and only if there is a directed path from $i$ to $j$ in $\cG$.

\subsection{Faithfulness Assumptions}\label{app:faith}
We start by discussing previous interventional faithfulness assumptions. Prior interventional faithfulness assumptions \cite{tian2001causal,yang2018characterizing,jaber2020causal} vary by a few technicalities; but they all assume that all causal variables are observed (causal sufficiency), and, more importantly, that intervening on a node will always change the marginal of its descendants. In particular, \cite{tian2001causal} (Definition 2, called “influentiality”) only made this assumption and showed that the causal graph is identifiable up to its transitive closure by detecting marginal changes. \cite{tian2001causal} showed that their algorithm consistently identifies the full causal graph by assuming additionally that intervening on a node changes the conditional distribution of its direct children giving its neighbors (details can be found in Assumption 4.5 of \cite{yang2018characterizing}). A similar notion was also introduced in \cite{jaber2020causal} where they made further assumptions regarding changes in the conditional distributions.

We now show our linear interventional faithfulness (Assumption~\ref{ass:linear-faith}) is satisfied by a large class of nonlinear SCMs and soft interventions. Recall Assumption~\ref{ass:linear-faith}.
\asslinearfaith*

In Example~\ref{exp:2}, we discussed a $2$-node graph where Assumption~\ref{ass:linear-faith} is satisfied. This example can be extended in the following way, which subsumes the case in Example~\ref{exp:3}.
\begin{example}\label{exp:5}
Consider an SCM with additive noise, where each mechanism $\bbP(U_k\mid U_{\pa_\cG(k)})$ is specified by $U_k=s_k(U_{\pa_\cG(k)})+\epsilon_k$, where $\epsilon_k$ for $k\in [p]$ are independent exogenous noise variables. Assumption~\ref{ass:linear-faith} is satisfied if $I$ only changes the variance of $\epsilon_i$ and $s_j$ is a quadratic function with non-zero coefficient of $U_i^2$ for each $j\in\mch_\cG(i)$.
\end{example}

\begin{proof}
If $j=i$ in Assumption~\ref{ass:linear-faith}, then $S=[p]\setminus\bde_\cG(i)\supset\pa_\cG(i)$. Since $U_S\CI\epsilon_i$, we have 
\[
\Var(U_i+U_SC^\top) = \Var(\epsilon_i)+\Var(s_i(U_{\pa_\cG(i)})+U_SC^\top).
\] 
Note that $\bbP^I$ does not change the joint distribution of $U_{S}$, and therefore 
\[
\Var_\bbP(s_i(U_{\pa_\cG(i)})+U_SC^\top)
=
\Var_{\bbP^I}(s_i(U_{\pa_\cG(i)})+U_SC^\top).
\]
By $\Var_\bbP(\epsilon_i)\neq\Var_{\bbP^I}(\epsilon_i)$, we then know that$\Var_\bbP(U_i+U_SC^\top)\neq \Var_{\bbP^I}(U_i+U_SC^\top)$. Thus $\bbP(U_i+U_SC^\top)\neq \bbP^I(U_i+U_SC^\top)$.

If $j\neq i$ in Assumption~\ref{ass:linear-faith}, then by linearity of expectation $\bbE(U_j+U_SC^\top)=\bbE(U_j)+\bbE(U_S)C^\top$. Note that $S=[p]\setminus (\{j\}\cup \de_\cG(i))=[p]\setminus \de_\cG(i)$, and therefore $\bbE_\bbP(U_S)=\bbE_{\bbP^I}(U_S)$. Next we show that $\bbE_\bbP(U_j)\neq \bbE_{\bbP^I}(U_j)$. Once this is proven, then we have that $\bbE_\bbP(U_j+U_SC^\top)\neq \bbE_{\bbP^I}(U_j+U_SC^\top)$, which concludes the proof for $\bbP(U_j+U_SC^\top)\neq \bbP^I(U_j+U_SC^\top)$.

Since $s_j$ is a quadratic function of $U_i$, suppose the coefficient of $U_i^2$ in $s_j$ is $\beta\neq 0$. Then
\begin{equation}\label{exp5:eq1}
    \begin{aligned}
    \bbE(U_j)-\bbE(\epsilon_j) &=  \bbE(U_j-\epsilon_j)\\
    & = \bbE\big(s_{j,0}(U_{\pa_\cG(j)\setminus\{i\}}) + s_{j,1}(U_{\pa_\cG(j)\setminus\{i\}})\cdot U_i+\beta U_i^2 \big) \\
    & = \bbE\big(s_{j,0}(U_{\pa_\cG(j)\setminus\{i\}}) + s_{j,1}'(U_{\pa_\cG(j)\setminus\{i\}}, U_{\pa_\cG(i)})\cdot \epsilon_i+\beta \epsilon_i^2 \big) \\
    & = \bbE\big(s_{j,0}(U_{\pa_\cG(j)\setminus\{i\}})\big)  + \bbE\big(s_{j,1}'(U_{\pa_\cG(j)\setminus\{i\}}, U_{\pa_\cG(i)})\cdot \epsilon_i\big) +\beta \bbE(\epsilon_i^2), 
    \end{aligned}
\end{equation}
for some functions $s_{j,0}, s_{j,1}$ and $s'_{j,1}$.
Since $\pa_\cG(j)\cap\de_\cG(i)=\varnothing$, we know that $\bbP^I$ will not change the joint distribution of $U_{\pa_\cG(j)\setminus\{i\}}$ and that $\epsilon_i\CI U_{\pa_\cG(j)\setminus\{i\}}, U_{\pa_\cG(i)}$. Therefore we have
\begin{align*}
    \bbE_{\bbP}\big(s_{j,0}(U_{\pa_\cG(j)\setminus\{i\}})\big) & = \bbE_{\bbP^I}\big(s_{j,0}(U_{\pa_\cG(j)\setminus\{i\}})\big),\\
    \bbE_{\bbP}\big(s_{j,1}'(U_{\pa_\cG(j)\setminus\{i\}}, U_{\pa_\cG(i)})\cdot \epsilon_i\big) & = \bbE_{\bbP}\big(s_{j,1}'(U_{\pa_\cG(j)\setminus\{i\}}, U_{\pa_\cG(i)})\big)\cdot\bbE_{\bbP}(\epsilon_i)\\
    &=\bbE_{\bbP^I}\big(s_{j,1}'(U_{\pa_\cG(j)\setminus\{i\}}, U_{\pa_\cG(i)})\big)\cdot\bbE_{\bbP^I}(\epsilon_i)\\
    & =\bbE_{\bbP^I}\big(s_{j,1}'(U_{\pa_\cG(j)\setminus\{i\}}, U_{\pa_\cG(i)})\cdot \epsilon_i\big).
\end{align*}
By $\bbE_\bbP(\epsilon_j)=\bbE_{\bbP^I}(\epsilon_j)$, $\bbE_\bbP(\epsilon_i^2)\neq\bbE_{\bbP^I}(\epsilon_i^2)$ and Eq.~\eqref{exp5:eq1}, we have $\bbE_\bbP(U_j)\neq \bbE_{\bbP^I}(U_j)$, which concludes the proof. 
\end{proof}

This example shows how we may check $\bbP(U_j+U_SC^\top)\neq \bbP^I(U_j+U_SC^\top)$ by examining the mean and variance of $U_j+U_SC^\top$. In general, this can be extended to checking any finite moments of $U_j+U_SC^\top$ as stated in the following lemma.

\begin{lemma}\label{lm:3}
    Assumption~\ref{ass:linear-faith} is satisfied if for each $i\in[p]$ one of the following conditions holds:
    \begin{itemize}
      \item[(1)] if $\bbE_{\bbP}(U_i\mid U_{\pa_\cG(i)})=\bbE_{\bbP^I}(U_i\mid U_{\pa_\cG(i)})$, then there exits an integer $m>1$ such that \[\bbE_{\bbP}(U_i^m\mid U_{\pa_\cG(i)})\neq \bbE_{\bbP^I}(U_i^m\mid U_{\pa_\cG(i)}),\] and the smallest $m$ that satisfies this also satisfies $\bbE_{\bbP}(U_i^m)\neq \bbE_{\bbP^I}(U_i^m)$. In addition, for all $j\in\mch_\cG(i)$, it holds that $\bbE_\bbP(U_j)\neq \bbE_{\bbP^I}(U_j)$;
      \item[(2)] if $\bbE_{\bbP}(U_i)\neq \bbE_{\bbP^I}(U_i)$, then for all $j\in\mch_\cG(i)$, there exists an integer $m>1$ such that \[\bbE_\bbP((U_j+c_j U_i)^m\mid U_{S\setminus\{i\}})\neq \bbE_{\bbP^I}((U_j+c_j U_i)^m\mid U_{S\setminus\{i\}}),\]
       where $S$ is as defined in Assumption~\ref{ass:linear-faith}, and the smallest $m$ that satisfies this also satisfies $\bbE_\bbP((U_j+c_j U_i)^m)\neq \bbE_{\bbP^I}((U_j+c_j U_i)^m)$, where 
       $$c_j=-\frac{(\bbE_\bbP(U_j)-\bbE_{\bbP^I}(U_j))}{(\bbE_\bbP(U_i)-\bbE_{\bbP^I}(U_i))}.$$
    \end{itemize}
\end{lemma}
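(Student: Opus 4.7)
\textbf{Proof plan for Lemma~\ref{lm:3}.}
The goal is to show $\bbP(U_j + U_S C^\top)\neq \bbP^I(U_j + U_S C^\top)$ for every admissible $j$ and every coefficient vector $C$. A distribution is determined by all its moments, so it suffices to exhibit one integer moment that changes. I will therefore reduce everything to a computation of $\bbE_\bbP[(U_j + U_S C^\top)^m] - \bbE_{\bbP^I}[(U_j + U_S C^\top)^m]$ for a well-chosen $m$, using the binomial theorem together with the local Markov property on $\cG$ to reuse the hypotheses of the lemma. The argument splits into the four sub-cases $(j=i\text{ or }j\in\mch_\cG(i))\times(\text{condition (1) or (2)})$.

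\textbf{First moment sub-cases.} For condition~(1) with $j\in\mch_\cG(i)$, I will use only the first moment. Because $S=[p]\setminus(\{j\}\cup\de_\cG(i))$ consists of non-descendants of $i$ together with $i$ itself, and because the hypothesis $\bbE_\bbP(U_i\mid U_{\pa_\cG(i)}) = \bbE_{\bbP^I}(U_i\mid U_{\pa_\cG(i)})$ implies $\bbE_\bbP(U_i)=\bbE_{\bbP^I}(U_i)$, every coordinate of $\bbE(U_S)$ is invariant under the intervention. Hence $\bbE_\bbP(U_j+U_S C^\top) - \bbE_{\bbP^I}(U_j+U_S C^\top) = \bbE_\bbP(U_j) - \bbE_{\bbP^I}(U_j)\neq 0$ by the second clause of (1). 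The sub-case of condition~(2) with $j=i$ is even more direct: now $S=[p]\setminus\bde_\cG(i)$ contains only non-descendants of $i$, so $\bbE_\bbP(U_SC^\top)=\bbE_{\bbP^I}(U_SC^\top)$ and the first moment differs by $\bbE_\bbP(U_i)-\bbE_{\bbP^I}(U_i)\neq 0$.

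\textbf{Higher-moment sub-cases.} For condition~(1) with $j=i$, the first moment of $U_i+U_SC^\top$ matches, so I expand the $m$-th power for the smallest $m>1$ of the hypothesis. Since $\pa_\cG(i)\subseteq S\subseteq [p]\setminus\{i\}$ and $U_S$ consists of non-descendants of $i$, the local Markov property gives $\bbE(U_i^k\mid U_S)=\bbE(U_i^k\mid U_{\pa_\cG(i)})$; by minimality of $m$ this conditional moment is the same function under $\bbP$ and $\bbP^I$ for every $k<m$, and the marginal law of $U_S$ is unchanged. Thus all cross terms in the binomial expansion agree across $\bbP$ and $\bbP^I$, leaving only the $k=m$ term, whose difference is $\bbE_\bbP(U_i^m)-\bbE_{\bbP^I}(U_i^m)\neq 0$. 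For condition~(2) with $j\in\mch_\cG(i)$, writing $c_i$ for the coefficient of $U_i$ in $C$, the first-moment difference equals $(\bbE_\bbP(U_j)-\bbE_{\bbP^I}(U_j))+c_i(\bbE_\bbP(U_i)-\bbE_{\bbP^I}(U_i))$; this vanishes only when $c_i$ equals the specific constant $c_j$ defined in the lemma, so for all other $c_i$ we are done on the first moment. In the remaining case $c_i=c_j$, set $Y:=U_j+c_jU_i$ and $W:=U_{S\setminus\{i\}}C_{-i}^\top$, so that $U_j+U_SC^\top=Y+W$. The set $S\setminus\{i\}=[p]\setminus\bde_\cG(i)$ still consists of non-descendants of $i$, so the marginal law of $U_{S\setminus\{i\}}$ (and hence of $W$) is invariant. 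Expanding $(Y+W)^m$ and conditioning each cross term on $U_{S\setminus\{i\}}$ reduces the analysis to the conditional moments $\bbE(Y^k\mid U_{S\setminus\{i\}})$, which by the minimality of $m$ in condition~(2) agree across $\bbP$ and $\bbP^I$ for all $k<m$; only the $k=m$ term $\bbE(Y^m)$ survives, and its difference is nonzero by the second clause of (2).

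\textbf{Expected obstacle.} The delicate step is the sub-case $c_i=c_j$ of condition~(2). It requires that the minimality of $m$ really does propagate down through every $k<m$, including $k=1$: that is, that the conditional first moment of $Y$ is itself invariant, which is implicit in the lemma's reading of ``smallest $m$ that satisfies this'' as the smallest index at which any conditional moment of $Y$ changes. I will make this reading explicit at the start of the proof, after which the binomial-expansion argument above goes through cleanly. Everything else reduces to careful bookkeeping of the descendant sets and the local Markov property $U_i\CI U_{[p]\setminus\bde_\cG(i)}\mid U_{\pa_\cG(i)}$.
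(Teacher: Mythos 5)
Your proposal is correct and follows essentially the same route as the paper's proof: split into the four sub-cases, dispatch two of them by a first-moment comparison (using invariance of $\bbE(U_S C^\top)$, which in case (1) requires noting that $\bbE_\bbP(U_i\mid U_{\pa_\cG(i)})=\bbE_{\bbP^I}(U_i\mid U_{\pa_\cG(i)})$ forces $\bbE_\bbP(U_i)=\bbE_{\bbP^I}(U_i)$), and handle the remaining two by a binomial expansion of the $m$-th moment, conditioning on $U_S$ (resp.\ $U_{S\setminus\{i\}}$) and invoking the local Markov property and the minimality of $m$ to cancel all cross terms. The subtlety you flag about the $k=1$ conditional moment in the $c_i=c_j$ sub-case is present in the paper's argument as well, and your explicit reading of the hypothesis resolves it in the same way the paper implicitly does.
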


\begin{proof}
Suppose (1) holds true. If $j=i$ in Assumption~\ref{ass:linear-faith}, then $\bbP(U_S)=\bbP^I(U_S)$ for $S=[p]\setminus \bde_\cG(i)$, and
\begin{align*}
    &\bbE_{\bbP}\big((U_i+U_SC^\top)^m\big) \\
    ={} & \bbE_{\bbP}(U_i^m)+\sum_{l=0}^{m-1}\binom{m}{l}\bbE_{\bbP}\big(U_i^l (U_SC^\top)^{m-l}\big) \\ 
    ={} & \bbE_{\bbP}(U_i^m)+\sum_{l=0}^{m-1}\binom{m}{l}\bbE_{\bbP}\Big(\bbE_\bbP(U_i^l|U_S)\cdot (U_SC^\top)^{m-l}\Big) \quad (law~of~total~expectation)\\
    ={} & \bbE_{\bbP}(U_i^m)+\sum_{l=0}^{m-1}\binom{m}{l}\bbE_{\bbP}\Big(\bbE_\bbP(U_i^l|U_{\pa_\cG(i)})\cdot (U_SC^\top)^{m-l}\Big)\quad (\textrm{since } U_i\CI U_{S\setminus \pa_\cG(i)}\mid U_{\pa_\cG(i)}) \\
    \neq{}& \bbE_{\bbP^I}(U_i^m)+\sum_{l=0}^{m-1}\binom{m}{l}\bbE_{\bbP}\Big(\bbE_{\bbP^I}(U_i^l|U_{\pa_\cG(i)})\cdot (U_SC^\top)^{m-l}\Big) \\
    ={} & \bbE_{\bbP^I}(U_i^m)+\sum_{l=0}^{m-1}\binom{m}{l}\bbE_{\bbP^I}\Big(\bbE_{\bbP^I}(U_i^l|U_{\pa_\cG(i)})\cdot (U_SC^\top)^{m-l}\Big) =\bbE_{\bbP^I}\big((U_i+U_SC^\top)^m\big),
\end{align*}
where the inequality is because of $\bbE_{\bbP}(U_i^m)\neq \bbE_{\bbP^I}(U_i^m)$ and $\bbE_{\bbP}(U_i^l|U_{\pa_\cG(i)})=\bbE_{\bbP^I}(U_i^l|U_{\pa_\cG(i)})$ for any $l<m$. Therefore $\bbP(U_i+U_SC^\top)\neq \bbP^I(U_i+U_SC^\top)$. 

If $j\neq i$ in Assumption~\ref{ass:linear-faith}, then 
$\bbE_\bbP(U_j)\neq \bbE_{\bbP^I}(U_j)$ implies $\bbE_\bbP(U_j+U_SC^\top)\neq \bbE_{\bbP^I}(U_j+U_SC^\top)$, which proves that $\bbP(U_j+U_SC^\top)\neq \bbP^I(U_j+U_SC^\top)$.

Suppose (2) holds true. If $j=i$ in Assumption~\ref{ass:linear-faith}, then $\bbE_\bbP(U_i)\neq \bbE_{\bbP^I}(U_i)$ implies $\bbE_\bbP(U_i+U_SC^\top)\neq \bbE_{\bbP^I}(U_i+U_SC^\top)$, which proves that $\bbP(U_i+U_SC^\top)\neq \bbP^I(U_i+U_SC^\top)$.

If $j\neq i$ in Assumption~\ref{ass:linear-faith}, then for $C\in\bbR^{|S|}$, if the coordinate for $U_i$ is not $c_j$, then $\bbE_\bbP(U_i+U_SC^\top)=\bbE_\bbP(U_i)+\bbE_\bbP(U_S)C^\top\neq\bbE_{\bbP^I}(U_i)+\bbE_{\bbP^I}(U_S)C^\top=\bbE_{\bbP^I}(U_i+U_SC^\top)$, since $\bbE_{\bbP}(U_{S\setminus\{i\}})= \bbE_{\bbP^I}(U_{S\setminus\{i\}})$. If the coordinate for $U_i$ in $C$ is $c_j$,  denote $U_SC^\top = U_{S\setminus\{i\}} C_{-j}^\top + c_jU_i$,  and then similar to above we obtain 
\begin{align*}
& \bbE_{\bbP}\big((U_j+U_SC^\top)^m\big)\\
    ={} & \bbE_{\bbP}\big((U_j+ c_jU_i+U_{S\setminus\{i\}} C_{-j}^\top )^m\big) \\
    ={} & \bbE_{\bbP}\big((U_i+ c_jU_i)^m\big)+\sum_{l=0}^{m-1}\binom{m}{l}\bbE_{\bbP}\big((U_i+ c_jU_i)^l (U_{S\setminus\{i\}} C_{-j}^\top)^{m-l}\big) \\ 
    ={} & \bbE_{\bbP}\big((U_i+ c_jU_i)^m\big)+\sum_{l=0}^{m-1}\binom{m}{l}\bbE_{\bbP}\Big(\bbE_\bbP\big((U_i+ c_jU_i)^l|U_{S\setminus\{i\}}\big)\cdot (U_{S\setminus\{i\}} C_{-j}^\top)^{m-l}\Big)\\
    \neq{} & \bbE_{\bbP^I}\big((U_i+ c_jU_i)^m\big)+\sum_{l=0}^{m-1}\binom{m}{l}\bbE_{\bbP}\Big(\bbE_{\bbP^I}\big((U_i+ c_jU_i)^l|U_{S\setminus\{i\}}\big)\cdot (U_{S\setminus\{i\}} C_{-j}^\top)^{m-l}\Big)\\
    ={} & \bbE_{\bbP^I}\big((U_i+ c_jU_i)^m\big)+\sum_{l=0}^{m-1}\binom{m}{l}\bbE_{\bbP^I}\Big(\bbE_{\bbP^I}\big((U_i+ c_jU_i)^l|U_{S\setminus\{i\}}\big)\cdot (U_{S\setminus\{i\}} C_{-j}^\top)^{m-l}\Big)\\
    ={} & \bbE_{\bbP^I}\big((U_j+U_SC^\top)^m\big).  
\end{align*}
Thus $\bbP(U_j+U_SC^\top)\neq \bbP^I(U_j+U_SC^\top)$, which completes the proof.
\end{proof}

This lemma gives a sufficient condition for Assumption~\ref{ass:linear-faith} to hold. Since it involves only finite moments of the variables, one can easily check if this is satisfied for a given SCM associated with soft interventions. Note that Example~\ref{exp:5} satisfies the first condition of Lemma~\ref{lm:3} for $m=2$. 

Next we show that Assumption~\ref{ass:adjancy} is satisfied on a tree graph if Assumption~\ref{ass:linear-faith} holds, under mild regularity conditions such as that the interventional support lies within the observational support. Recall Assumption~\ref{ass:adjancy}.

\assadjancy*

\begin{lemma}
Suppose $\cG$ is a polytree and Assumption~\ref{ass:linear-faith} holds for an intervention $I$ targeting node $i$. Then for any edge $i\rightarrow j\in\cG$, Assumption~\ref{ass:adjancy} holds if \footnote{For simplicity, we assume $U$ is continuous and treat $\bbP$ as the density. For discrete $U$, the proofs extend by replacing $\int$ with~$\sum$.}
\begin{equation}\label{lm4:eq0}
    \bbP(U_i=u\mid U_{\pa_\cG(j)\setminus\{i\}})=0\quad \Rightarrow\quad\bbP^I(U_i=u\mid U_{\pa_\cG(j)\setminus\{i\}})=0,
\end{equation}
for almost every $u$ and all realizations of $U_{\pa_\cG(j)\setminus\{i\}}$.
\end{lemma}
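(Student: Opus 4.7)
The plan is a proof by contradiction that uses the polytree structure to collapse Assumption~\ref{ass:adjancy} to a single-coefficient statement, and then derives an equality of observational and interventional marginals of $U_j + c_j U_i$ that violates Assumption~\ref{ass:linear-faith}.

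\textbf{Reduction via the polytree.} In a polytree, any edge $i \rightarrow j$ forces $\pa_\cG(j) \cap \de_\cG(i) = \varnothing$: if some $k \neq i$ lay in this intersection, the directed path $i \to \cdots \to k$ together with the edges $k \to j$ and $i \to j$ would yield a cycle in the skeleton. Hence the set $S$ in Assumption~\ref{ass:adjancy} is empty, and $j$ is a maximal child of $i$. So it suffices to show that no $c_j \in \bbR$ satisfies $U_i \CI U_j + c_j U_i \mid U_{\pa_\cG(j) \setminus \{i\}}$.

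\textbf{Translating the CI into a mechanism statement and transferring to $\bbP^I$.} Suppose for contradiction that such a $c_j$ exists and set $W := U_j + c_j U_i$. Conditioning on $U_i = u$ and $U_{\pa_\cG(j) \setminus \{i\}} = v$, the density of $W$ equals the mechanism:
\[
\bbP(W = w \mid U_i = u, U_{\pa_\cG(j) \setminus \{i\}} = v) = \bbP\bigl(U_j = w - c_j u \,\bigm|\, U_{\pa_\cG(j)} = (u, v)\bigr).
\]
The CI says that the right-hand side is independent of $u$ on the observational support of $U_i \mid U_{\pa_\cG(j) \setminus \{i\}} = v$. Because $T_\cG(I) = i \neq j$, the $U_j$-mechanism is unchanged under $I$, so the same identity describes $\bbP^I(W \mid U_i, U_{\pa_\cG(j) \setminus \{i\}})$; hypothesis~\eqref{lm4:eq0} restricts the interventional support of $U_i \mid U_{\pa_\cG(j) \setminus \{i\}}$ to the observational one, so the $u$-independence persists under $\bbP^I$ and
\[
\bbP(W \mid U_{\pa_\cG(j) \setminus \{i\}}) = \bbP^I(W \mid U_{\pa_\cG(j) \setminus \{i\}}).
\]

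\textbf{Marginalizing and contradicting Assumption~\ref{ass:linear-faith}.} Every element of $\pa_\cG(j) \setminus \{i\}$ is a non-descendant of $i$ (by the polytree argument), so its joint distribution is preserved by $I$. Integrating the conditional equality against this shared marginal yields $\bbP(W) = \bbP^I(W)$. On the other hand, apply Assumption~\ref{ass:linear-faith} to $j \in \mch_\cG(i)$ with the vector $C \in \bbR^{|S'|}$, where $S' = [p] \setminus (\{j\} \cup \de_\cG(i))$, whose entry at index $i$ is $c_j$ and whose other entries are zero; note that $i \in S'$ since $i \neq j$ and $i \notin \de_\cG(i)$. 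Then $U_j + U_{S'} C^\top = U_j + c_j U_i = W$, so Assumption~\ref{ass:linear-faith} forces $\bbP(W) \neq \bbP^I(W)$, contradicting the equality just established. The most delicate step is the transfer of the CI from $\bbP$ to $\bbP^I$: without the support-containment condition~\eqref{lm4:eq0}, the interventional law of $U_i$ could reach points at which the mechanism's $u$-independence has not been established, and the whole chain would break. The rest is bookkeeping enabled by the polytree structure.
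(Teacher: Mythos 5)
Your proposal is correct and follows essentially the same route as the paper's proof: the polytree structure forces $\pa_\cG(j)\cap\de_\cG(i)=\varnothing$, the conditional independence is transferred to $\bbP^I$ via the invariance of the $U_j$-mechanism together with the support condition~\eqref{lm4:eq0}, and marginalizing over the $I$-invariant distribution of $U_{\pa_\cG(j)\setminus\{i\}}$ yields $\bbP(U_j+c_jU_i)=\bbP^I(U_j+c_jU_i)$, contradicting Assumption~\ref{ass:linear-faith}. The only cosmetic difference is that the paper carries out the transfer as an explicit integral against $\bbP^I(U_i\mid U_{\pa_\cG(j)\setminus\{i\}})$, whereas you phrase it as persistence of the $u$-independence; these are the same argument.
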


\begin{proof}
    Suppose $\cG$ is a tree graph and Assumption \ref{ass:linear-faith} holds for $I$ targeting $i$. For any edge $i\rightarrow j\in\cG$, since there is only one undirected path between $i$ and $j$, we have $S=\pa_\cG(j)\cap \de_\cG(i)=\varnothing$. Therefore we only need to show that $U_i$ and $U_j+c_jU_i$ are not conditionally independent given $U_{\pa_\cG(j)\setminus\{i\}}$ for any $c_j\in \bbR$.

    The regularity condition in Eq.~\eqref{lm4:eq0} ensures that 
    \begin{equation}\label{lm4:eq1}
        \int_{\bbP(U_i=r\mid U_{\pa_\cG(j)\setminus\{i\}})\neq 0} \bbP^I(U_i=r\mid U_{\pa_\cG(j)\setminus\{i\}}) dr= 1,
    \end{equation}
    for any realization of $ U_{\pa_\cG(j)\setminus\{i\}}$.

    Suppose $U_i$ and $U_j+c_jU_i$ are conditionally independent given $U_{\pa_\cG(j)\setminus\{i\}}$. Then for any $l\in\bbR$ and realization of $U_{\pa_\cG(j)\setminus\{i\}}, U_i$ (denote the realization of $U_i$ as $r$), 
    \begin{align*}
        \bbP(U_j+c_jU_i=l\mid U_{\pa_\cG(j)\setminus\{i\}}) & = \bbP(U_j+c_jU_i=l\mid U_i=r, U_{\pa_\cG(j)\setminus\{i\}}) \\
        & = \bbP(U_j=l-c_jr\mid U_i=r, U_{\pa_\cG(j)\setminus\{i\}}).
    \end{align*}
    Since this is true for any $r$ with $\bbP(U_i=r\mid U_{\pa_\cG(j)\setminus\{i\}})\neq 0$, by Eq.~\eqref{lm4:eq1}, we have 
    \begin{align*}
        &\bbP(U_j+c_jU_i=l\mid U_{\pa_\cG(j)\setminus\{i\}}) \\
        ={} & \int_{\bbP(U_i=r\mid U_{\pa_\cG(j)\setminus\{i\}})\neq 0} \bbP(U_j+c_j U_i=l\mid U_{\pa_\cG(j)\setminus\{i\}})\cdot\bbP^{I}(U_i=r\mid U_{\pa_\cG(j)\setminus\{i\}})dr \\
        ={}& \int_{\bbP(U_i=r\mid U_{\pa_\cG(j)\setminus\{i\}})\neq 0} \bbP(U_j=l-c_jr\mid U_i=r, U_{\pa_\cG(j)\setminus\{i\}})\cdot\bbP^{I}(U_i=r\mid U_{\pa_\cG(j)\setminus\{i\}})dr.
    \end{align*}
    Note that $\bbP(U_j\mid U_i, U_{\pa_\cG(j)\setminus\{i\}})=\bbP^{I}(U_j\mid U_i, U_{\pa_\cG(j)\setminus\{i\}})$ since $I$ targets $i$, and we therefore have 
    \begin{align*}
        & \bbP(U_j+c_jU_i=l\mid U_{\pa_\cG(j)\setminus\{i\}}) \\
         ={} & \int_{\bbP(U_i=r\mid U_{\pa_\cG(j)\setminus\{i\}})\neq 0} \bbP^{I}(U_j=l-c_jr\mid U_i=r,U_{\pa_\cG(j)\setminus\{i\}})\cdot\bbP^{I}(U_i=r\mid U_{\pa_\cG(j)\setminus\{i\}})dr\\
        ={} & \int_{\bbP(U_i=r\mid U_{\pa_\cG(j)\setminus\{i\}})\neq 0} \bbP^{I}(U_j=l-c_jr,U_i=r\mid U_{\pa_\cG(j)\setminus\{i\}}) dr \\
        ={} & \int_{\bbP^I(U_i=r\mid U_{\pa_\cG(j)\setminus\{i\}})\neq 0} \bbP^{I}(U_j=l-c_jr,U_i=r\mid U_{\pa_\cG(j)\setminus\{i\}}) dr \\
        ={} &  \bbP^{I}(U_j+c_jU_i=l\mid U_{\pa_\cG(j)\setminus\{i\}}),
    \end{align*}
    where the second-to-last equality uses the regularity condition in Eq.~\eqref{lm4:eq0}.
    
    Since $\pa_\cG(j)\cap \de_\cG(i)=\varnothing$, it holds that $\bbP(U_{\pa_\cG(j)\setminus\{i\}})=\bbP^I(U_{\pa_\cG(j)\setminus\{i\}})$, and thus $\bbP(U_j+c_jU_i)=\bbP^I(U_j+c_jU_i)$, which is a contradiction to linear interventional faithfulness of $I$. Therefore, we must have that $U_i$ and $U_j+c_jU_i$ are not conditionally independent given $U_{\pa_\cG(j)\setminus\{i\}}$, which completes the proof.
\end{proof}

Essentially, Assumption~\ref{ass:linear-faith} guarantees influentiality and Assumption~\ref{ass:adjancy} guarantees adjacency faithfulness. These assumptions differ from existing faithfulness conditions (c.f., \cite{tian2001causal,uhler2013geometry,yang_soft_2018}) due to the fact that we can only observe a linear mixing of the causal variables.

\subsection{Summary of representations}

In the remainder of this appendix, we will develop a series of representations which are increasingly related to the underlying representation $U$.
These representations are summarized in Table~\ref{tab:representations}.
\begin{table}[h]
    \begin{center}
    \begin{tabular}{r|l|l}
        \toprule
        \textbf{Symbol} & \textbf{Definition} & \textbf{Section}
        \\
        \hhline{---} 
        $U$ & & Section~\ref{sec:setup}
        \\
        $X$ & $X = U \Lambda + b, \quad \Lambda \in \bbR^{p \times p}$, $b \in \bbR^p$ & Section~\ref{sec:setup}
        \\
        $\tU$ & $\tU = U \tGamma + \tc, \quad \tGamma = \Lambda \Pi, \tc = b \Pi$ for $\Pi \in \bbR^{p \times p}$ & Appendix~\ref{app:topological-representation}
        \\
        $\hatU$ & $\hatU = U \hatGamma + \hatc, \quad \hatGamma = \tGamma \hatR, \hatc = \tc \hatR$ for $\hatR \in \bbR^{p \times p}$ upp. tri. & Appendix~\ref{app:sparsest-topological-representation}
        \\
        $\barU$ & $\barU = U \barGamma + \barc, \quad \barGamma = \hatGamma \barR, \barc = \hatc \barR$ for $\barR \in \bbR^{p \times p}$ upp. tri. & Appendix~\ref{app:thm2-proof}
        \\
        \bottomrule
    \end{tabular}
    \end{center}
    \caption{
    \textbf{Representations of $U$ that are used in this appendix.}
    Note that, under Assumption~\ref{assumption:1}, we can assume $X = U \Lambda + b$ without loss of generality, by Lemma~\ref{lm:linear-identify} and Remark~\ref{rmk:1}.
    }
    \label{tab:representations}
\end{table}

\subsection{Proof of Theorem~\ref{thm:1}}\label{app:thm1-proof}
In the main text (Section~\ref{sec:42}), we laid out an illustrative procedure to identify the transitive closure of $\cG$ when we consider a simpler setting with $K=p$. 
This process relies on iteratively finding source nodes of $\cG$. 
In the generalized setting with $K\geq p$, the proof works in the reversed way, where we iteratively identify the sink nodes\footnote{A sink node is a node without children} of $\cG$.

In Section~\ref{app:topological-representation}, we introduce the concept of a \textit{topological representation}: a representation $\tU$ of the data for which marginal distributions change in a way consistent with an assignment $\rho_1, \ldots, \rho_p$ of intervention targets.
In Lemma~\ref{lemma:topological-representation-existence}, we show that under Assumptions~\ref{assumption:1} and \ref{ass:linear-faith}, a topological representation is guaranteed to exist.
In Lemma~\ref{lemma:topological-order-identification}, we show that any topological representation is also topologically consistent in a natural way with the underlying representation $U$.

In Section~\ref{app:sparsest-topological-representation}, we consider transforming a topological representation $\tU$ into a different topological representation $\hatU$.
For any such representation $\hatU = \tU \hatR'$, we define an associated graph $\hatcG^{\hatR'}$.
In Lemma~\ref{lemma:transitive-closure-identification}, we show that picking $\hatR$ so that $\hatcG^{\hatR'}$ has the fewest edges will yield that $\hatcG^{\hatR} = \cT\cS(\cG_\tau)$.

Together, these results are used to prove Theorem~\ref{thm:1}: that we can identify $\cG$ up to transitive closure.

\subsubsection{Topologically ordered representations}\label{app:topological-representation}

We begin by introducing the concept of a topological representation.

\begin{definition}
    Suppose $X = U \Lambda + b$.
    Let $\Pi \in \bbR^{p \times p}$ be a non-singular matrix, let $\tU = X \Pi$, and let $\rho_1, \ldots, \rho_p \in [K]$.
    We call $\tU$ a \emph{topological representation} of $X$ with intervention targets $\rho_1, \ldots, \rho_p$ if the following two conditions are satisfied for all $j \in [p]$:
    \begin{itemize}
        \item[](Condition 1)~~$\bbP(\tU_j) \neq \bbP^{I_{\rho_j}}(\tU_j)$.

        \item[](Condition 2)~~$\bbP(\tU_{1:j-1}C^\top)=\bbP^{I_{\rho_j}}(\tU_{1:j-1}C^\top)$ and for any $C\in\bbR^{j-1}$.
    \end{itemize}
    Here, $\bbP(\tU), \bbP^{I_k}(\tU)$ are the induced distributions for $\tU$ when $X \sim \bbP_X$ and $X \sim \bbP^{I_k}_X$, respectively.
\end{definition}

The next result shows that a topological representation always exists.
In particular, we show that a topological representation can be recovered simply be re-ordering the nodes of $\cG$.

\begin{lemma}\label{lemma:topological-representation-existence}
    Suppose that Assumption~\ref{ass:linear-faith} hold.
    Then, there exists a topological representation of $X$.
\end{lemma}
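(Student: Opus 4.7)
The plan is to construct a topological representation explicitly by permuting the latent coordinates according to a topological ordering of $\cG$. Let $\pi$ be a permutation of $[p]$ such that $\pi(1), \ldots, \pi(p)$ is a topological order of $\cG$, i.e., if $\pi(i) \in \pa_\cG(\pi(j))$ then $i < j$. Because there is at least one intervention per latent node, for each $j \in [p]$ I can pick $\rho_j \in [K]$ with $T_\cG(I_{\rho_j}) = \pi(j)$. The goal is to arrange so that $\tU_j$ is (an affine function of) $U_{\pi(j)}$.

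Concretely, by Lemma~\ref{lm:linear-identify} and Remark~\ref{rmk:1} we may work under the reduction $X = U\Lambda + b$ with $\Lambda$ non-singular. Letting $\Pi_\pi \in \bbR^{p \times p}$ be the permutation matrix associated to $\pi$ (so that $U\Pi_\pi$ has $j$th entry $U_{\pi(j)}$), I would set $\Pi := \Lambda^{-1}\Pi_\pi$ and define $\tU := X\Pi$. Then $\tU_j = U_{\pi(j)} + c_j$ for a constant $c_j$, and the additive shift does not affect equality of marginal distributions.

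To check Condition 1, I would invoke Assumption~\ref{ass:linear-faith} applied to $I_{\rho_j}$ in the special case $j = i = \pi(j)$ with $C = 0$; this directly yields $\bbP(U_{\pi(j)}) \neq \bbP^{I_{\rho_j}}(U_{\pi(j)})$, hence $\bbP(\tU_j) \neq \bbP^{I_{\rho_j}}(\tU_j)$. To check Condition 2, note that because $\pi$ is a topological ordering, each of $\pi(1), \ldots, \pi(j-1)$ is a non-descendant of $\pi(j)$ in $\cG$; an intervention on $\pi(j)$ preserves the joint distribution of all its non-descendants, so $\bbP(U_{\pi(1)}, \ldots, U_{\pi(j-1)}) = \bbP^{I_{\rho_j}}(U_{\pi(1)}, \ldots, U_{\pi(j-1)})$, and in particular every linear functional $\tU_{1:j-1} C^\top$ has the same distribution under $\bbP$ and under $\bbP^{I_{\rho_j}}$.

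The main obstacle is essentially absent here, since the lemma is merely an existence statement and the construction is forced once one observes that a topological ordering of $\cG$ automatically satisfies Condition 2 via the standard fact that interventions do not affect the distribution of non-descendants, while Assumption~\ref{ass:linear-faith} supplies Condition 1 in its degenerate $C=0$ form. The harder content, namely recovering such a $\Pi$ from observable distributions alone without knowing $\cG$ or $\Lambda$ a priori, is deferred to the subsequent lemmas (notably Lemma~\ref{lemma:topological-order-identification} and~\ref{lemma:transitive-closure-identification}).
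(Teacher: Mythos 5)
Your proof is correct and takes essentially the same approach as the paper: the paper assumes without loss of generality that the topological order is the identity and sets $\Pi = \Lambda^{-1}$, whereas you carry the permutation $\Pi_\pi$ explicitly, but the substance is identical. Both arguments obtain Condition 1 from Assumption~\ref{ass:linear-faith} in the degenerate $j=i$, $C=0$ case and Condition 2 from the fact that an intervention on a node leaves the joint distribution of its non-descendants unchanged.
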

\begin{proof}
    Assume without loss of generality that $\cG$ has topological order $\tau=(1,2,...,p)$, i.e., $i\rightarrow j\in \cG$ only if $i<j$. 
    Let $\Pi=\Lambda^{-1}$, then $\tU = U + \tc$ for constant vector $\tc = b\Pi$. Set $\rho_1,...,\rho_p$ to be such that $T(I_{\rho_j}) = j$ for $j \in [p]$. Let $j\in [p]$ and $C\in\bbR^{j-1}$.

    \textit{Condition 1.} 
    Since $I_{\rho_j}$ targets $U_j$, by Assumption~\ref{ass:linear-faith}, we have $\bbP({U}_j)\neq \bbP^{I_{\rho_j}}({U}_j)$, and thus $\bbP(\tU_j)\neq \bbP^{I_{\rho_j}}(\tU_j)$.
    
    \textit{Condition 2.} Since $I_{\rho_j}$ targets $U_j$ and $U_{1:j-1}\subset U_{[p]\setminus\bde_\cG(j)}$, we have $\bbP({U}_{1:j-1}C^\top)\neq \bbP^{I_{\rho_j}}({U}_{1:j-1}C^\top)$, and thus $\bbP(\tilde{U}_{1:j-1}C^\top)\neq \bbP^{I_{\rho_j}}(\tilde{U}_{1:j-1}C^\top)$. 
\end{proof}

Now, we show that any topological representation is also consistent with the underlying representation $U$ up to some linear transformation which respects the topological ordering.

\begin{lemma}\label{lemma:topological-order-identification}
    Suppose that Assumptions~\ref{ass:linear-faith} hold.
    Let $\tU = X \Pi$ be a topological representation of $X$ and denote $\tGamma = \Lambda \Pi$.
    Then, there exists a topological ordering $\tau$ of $\cG$ such that for any $j \in [p]$, we have that
    \begin{equation}\label{thm1:eq1}
    i<j \quad \Longrightarrow \quad 
    \tGamma_{\tau(j),i} = 0 
    \quad \mathrm{and} \quad
    \tGamma_{\tau(j),j} \neq 0.      
    \end{equation}
\end{lemma}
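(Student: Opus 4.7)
The plan is to prove the lemma by induction on $j$, constructing $\tau(1),\ldots,\tau(p)$ sequentially while maintaining two invariants at each step: (i) the rows $\tGamma_{\tau(1),\cdot},\ldots,\tGamma_{\tau(j),\cdot}$ form an upper-triangular block with non-zero diagonal (when viewed under the permutation $\tau$), and (ii) $(\tau(1),\ldots,\tau(j))$ extends to a topological ordering of $\cG$. Writing $t_j := T_\cG(I_{\rho_j})$, the heuristic motivating the construction is that $(t_1,\ldots,t_p)$ should itself form a topological ordering of $\cG$: Condition~2 requires intervention $I_{\rho_j}$ to leave the first $j-1$ components of $\tU$ joint-invariant, mirroring how a graph intervention only affects descendants of its target.

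The central technical ingredient I would establish is an invariance principle derived from Assumption~\ref{ass:linear-faith}: if $V=\sum_k D_k U_k$ has marginal invariant under $I_{\rho_j}$, then $D_k=0$ for every $k\in\bde_\cG(t_j)$. To prove this, suppose $\mathrm{supp}(D)\cap\bde_\cG(t_j)$ is non-empty and let $k^\star$ be a $\cG$-maximal element. If $k^\star=t_j$, then $D$ vanishes on $\de_\cG(t_j)$ and so $V=D_{t_j}(U_{t_j}+U_S C^\top)$ with $S=[p]\setminus\bde_\cG(t_j)$; the ``$j=i$'' clause of Assumption~\ref{ass:linear-faith} then contradicts invariance. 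If $k^\star\in\mch_\cG(t_j)$ and is the unique descendant in $\mathrm{supp}(D)$, the maximal-child clause of Assumption~\ref{ass:linear-faith} gives the analogous contradiction. For the general case where $k^\star$ lies deeper in $\de_\cG(t_j)$, I would exploit the fact that Condition~2 provides a whole $(j-1)$-dimensional family of invariants parameterized by $C$, and choose specific $C$'s to isolate single maximal-child coordinates, peeling the ancestor chain from $k^\star$ up to $t_j$ layer by layer and reducing each peel to one of the two base cases.

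Granted the invariance principle, the inductive step is clean. Condition~2 at step $j$ says every element of the column span of $\tGamma_{:,1:j-1}$ is invariant under $I_{\rho_j}$, so the invariance principle forces $\tGamma_{k,i}=0$ for all $k\in\bde_\cG(t_j)$ and $i<j$. Condition~1 at step $j$, combined with the elementary forward direction (any $V$ supported on $[p]\setminus\bde_\cG(t_j)$ is automatically invariant), forces some $\tGamma_{k,j}\neq 0$ with $k\in\bde_\cG(t_j)$. I would then pick $\tau(j)$ to be a $\cG$-minimal such $k$ among the unassigned nodes of $\bde_\cG(t_j)$; repeated application of the invariance principle to previously assigned steps shows that every $\cG$-ancestor of $\tau(j)$ already lies in $\{\tau(1),\ldots,\tau(j-1)\}$, so appending $\tau(j)$ preserves the topological-ordering invariant. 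The required zero/non-zero structure on row $\tau(j)$ then follows immediately.

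The main obstacle will be establishing the invariance principle in its general case, when $k^\star$ is strictly interior to $\de_\cG(t_j)$ and is neither $t_j$ nor a maximal child. Assumption~\ref{ass:linear-faith} as stated only controls linear combinations of the very specific form $U_j+U_S C^\top$ with $j\in\{t_j\}\cup\mch_\cG(t_j)$, so when an invariant functional simultaneously carries non-zero coefficients on multiple interior descendants with possibly delicate cancellations, a direct invocation of the assumption is blocked. The layered peeling argument sketched above, which uses the $(j-1)$-dimensional family of invariants from Condition~2 to isolate single maximal children one at a time, is the intended remedy; making this peeling rigorous while simultaneously tracking which graph ancestors remain unassigned is the bulk of the technical work.
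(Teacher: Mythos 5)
Your plan has a genuine gap, and it sits exactly where you flagged it: the ``invariance principle'' in the generality you need is not derivable from Assumption~\ref{ass:linear-faith} together with the step-$j$ information your forward induction has available. Concretely, take $\cG = \{1\to2,\ 1\to3\}$ with $U_2 = U_1+\epsilon_2$, $U_3 = U_1+\epsilon_3$. Then $U_2-U_3 = \epsilon_2-\epsilon_3$ is independent of $U_1$, so it is invariant under \emph{any} intervention targeting node $1$, even though both of its support coordinates lie in $\de_\cG(1)$; meanwhile Assumption~\ref{ass:linear-faith} can perfectly well hold (a mean shift of $U_1$ changes the marginals of $U_1$, $U_2+cU_1$, $U_3+cU_1$ for every $c$, since the assumption only constrains combinations $U_m+U_SC^\top$ with $S$ disjoint from $\de_\cG(1)\setminus\{m\}$). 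Now set $\tGamma_{:,1}=(0,1,-1)^\top$ and let $I_{\rho_1}$ target node $2$ and $I_{\rho_2}$ target node $1$: Conditions 1 and 2 at steps $1$ and $2$ are all satisfied, yet $W_1=\SPAN(e_2-e_3)$ violates your principle at step $2$, and no linear combination within this one-dimensional family isolates a single maximal-child coordinate — so the ``peeling'' cannot even begin. (This configuration is of course ruled out eventually, but only by Condition 2 at step $3$, i.e., by information from \emph{later} steps that your forward induction does not use.) So the hard case you identified is not merely technical; the reduction you propose is impossible with the tools you allot yourself.

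The paper's proof avoids this entirely by running the induction \emph{backwards}, peeling off sinks. At step $p$ the invariant family $\SPAN(\tGamma_{:,1},\dots,\tGamma_{:,p-1})$ has dimension $p-1$, so for any target $i$ that is not a sink and any $m\in\mch_\cG(i)$, a dimension count forces a nonzero invariant inside $\SPAN(e_i,e_m)$ — i.e., of the exact two-term form $aU_i+bU_m$ that Assumption~\ref{ass:linear-faith} directly forbids. Hence the target must be a sink of the remaining graph, its row of $\tGamma$ is then killed by the $j=i$ clause of the assumption, and the step recurses on the reduced graph where the codimension-one count again applies. The lesson is that the invariant family is only large enough to force the conclusion at the \emph{last} position, which is why the order of the induction matters. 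If you want to salvage a forward argument, you would have to import Condition 2 from all later steps into step $j$, at which point you are essentially reconstructing the backward proof.
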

\begin{proof}
We prove by induction.
Let $\tc = b \Pi$.
Note that $\tU = U \tGamma + \tc$.

\underline{Base case.} 
\\
Consider $I_{\rho_p}$.
Let $T(I_{\rho_p}) = i$.
We will show that $i$ must be a sink node in $\cG$.

Suppose $i$ is not a sink node, and let $j \in \mch_\cG(i)$.
Since $\tGamma$ is nonsingular, $\rank(\SPAN(\tGamma_{:,1},...,\tGamma_{:,p-1})) = p - 1$.
Therefore, we must have $\SPAN(e_i, e_j) \cap \SPAN(\tGamma_{:,1},...,\tGamma_{:,p-1}) \neq \{ 0 \}$.
Thus,
\[
\tGamma \gamma^\top = a e_i^\top + b e_j^\top
\quad
\textnormal{for~some}~
a, b \in \bbR, \gamma \in \bbR^p
~\textnormal{such~that}~a^2 + b^2 \neq 0, \gamma_p = 0
\]
By Condition 2, we have that $\bbP(\tU \gamma^\top) = \bbP_{I_{\rho_p}}(\tU \gamma^\top)$.
Since $\tc \gamma^\top$ is a constant, this implies that $\bbP(a U_i + b U_j) = \bbP^{I_{\rho_p}}(a U_i + b U_j)$.
However, this contradicts Assumption~\ref{ass:linear-faith}.
Thus, $i$ must be a sink node, which we denote by $\tau(p)$.

We also have that $\tGamma_{\tau(p),i}=0$ for any $i< p$. 
Otherwise suppose $\tGamma_{\tau(p),i}\neq 0$, then $\tilde{U}_i = U\tGamma_{:,i}+h_i$ can be written as $\tGamma_{\tau(p),i}\cdot U_{\tau(p)}+U_SC^\top+\tc_i$ with $S=[p]\setminus(\{\tau(p)\})=[p]\setminus\bde_\cG(\tau(p))$.
By Assumption~\ref{ass:linear-faith}, we have $\bbP(\tilde{U}_i)\neq\bbP^{I_{\rho_p}}(\tilde{U}_i)$, a contradiction to Condition 2.

\underline{Induction step.} 
\\
Suppose that we have proven the statement for $q \leq p$.
Denote the intervention targets of $I_{\rho_q}, \ldots, I_{\rho_p}$ as $\tau(q), \ldots, \tau(p)$, respectively.
Let $K = [p] \setminus \{ \tau(q), \ldots, \tau(p) \}$.

Consider $I_{\rho_{q-1}}$ with $T(I_{\rho_{q-1}}) = i$.
Let $\cG_q$ denote the graph $\cG$ after removing the nodes $\tau(q), \ldots, \tau(p)$.
We will show that $i$ must be a sink node in $\cG_q$.

Suppose that $i$ is not a sink node $\cG_q$ and let $j$ be a maximal child of $i$ in $\cG_q$.
Since $\tGamma_{[p] \setminus K, [q]} = 0$, $|K| = q$, and $\tGamma$ is nonsingular, we have that $\tGamma_{K,[q]}$ is nonsingular.
Thus, as above,
\[
\tGamma \gamma^\top = a e_{i(q)}^\top + b e_{j(q)}^\top
\quad
\textnormal{for~some}~
a, b \in \bbR, \gamma \in \bbR^p
~\textnormal{such~that}~a^2 + b^2 \neq 0, \gamma_q = \gamma_{q+1} = \ldots = \gamma_p = 0
\]
where $e_{i(q)},e_{j(q)}$ are indicator vectors in $\bbR^q$ with ones at positions of $i,j$ in $1,...,p$ after removing $\tau(q+1),...,\tau(p)$, respectively.
Thus, by Condition 2, we have that $\bbP(\tilde{U}\gamma^\top)=\bbP^{I_{\rho_q}}(\tilde{U}\gamma^\top)$, which contradicts Assumption~\ref{ass:linear-faith}.
Therefore $I_{\rho_{q}}$ targets a sink node of $\cG_q$.

To show that $\tGamma_{\tau(q),i}=0$ for any $i<q$, 
use $\tGamma_{\tau(k),i}=0$ for all $k \geq q$ and write $\tU_i=U\tGamma_{:,i}+\tc_i$ as $\tGamma_{\tau(q-1),i} \cdot U_{\tau(q-1)} + U_SC^\top + \tc_i$ with $S = [p] \setminus \{ \tau(q-1), \tau(q), ..., \tau(p) \} \subset [p] \setminus \bde_\cG(\tau(q))$.
By Assumption~\ref{ass:linear-faith}, we have $\bbP(\tU_i) \neq \bbP^{I_{\rho_q}}(\tU_i)$ if $\tGamma_{\tau(q),i} \neq 0$, a contradiction to Condition 2.

By induction, we have thus proven that the solution to Condition 1 and Condition 2 satisfies $i<j \Rightarrow \tGamma_{\tau(j),i} = 0$. 
Therefore $\tGamma_{\tau,:}$ is upper triangular. 
Since it is also non-singular, it must hold that $\tGamma_{\tau(j),j} \neq 0$. 
Thus Eq.~\eqref{thm1:eq1} holds for some unknown $\tau$. 
Furthermore, the proof shows that $I_{\rho_1},...,I_{\rho_p}$ target $U_{\tau(1)},...,U_{\tau(p)}$ respectively.
\end{proof}
\subsubsection{Sparsest topological representation}\label{app:sparsest-topological-representation}

In the section, we will introduce a graph associated to any topological representation.
We consider picking a topological representation such that the associated graph is as sparse is possible, and we show that this choice recovers the underlying graph $\cG$ up to transitive closure.

We begin by establishing the following property of a topological representation $\tU$, which relates ancestral relationships in the underlying graph $\cG$ to changes in marginals of $\tU$.

\begin{proposition}\label{prop:change-transitive}
    Suppose that Assumptions~\ref{ass:linear-faith} hold.
    Let $\tU$ be a topological representation with intervention targets $\rho_1, \ldots, \rho_p$.
    
    Then, for any $i<j$ such that $\tau(j) \in \de_\cG(\tau(i))$, we must have 
    \[
    \bbP(\tU_j) \neq \bbP^{I_{\rho_k}}(\tU_j)
    \quad \textrm{for~some~}
    i \leq k < j
    \quad \textrm{such~that~}
    \tau(k) \in \bde_\cG(\tau(i)).
    \]
\end{proposition}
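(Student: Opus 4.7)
By relabeling, I will assume without loss of generality that $\tau$ is the identity permutation, so that $\cG$ is topologically ordered $1,2,\ldots,p$; by Lemma~\ref{lemma:topological-order-identification}, $\tGamma$ is upper triangular with $\tGamma_{l,l}\neq 0$; and $I_{\rho_k}$ targets $U_k$. The claim becomes: for any $i<j$ with $j\in\de_\cG(i)$, some $k\in[i,j)\cap\bde_\cG(i)$ satisfies $\bbP(\tU_j)\neq\bbP^{I_{\rho_k}}(\tU_j)$. Expanding, $\tU_j = \tGamma_{j,j}U_j+\sum_{l<j}\tGamma_{l,j}U_l+\tc_j$, and my goal is to put this into a form that Assumption~\ref{ass:linear-faith} can act on.

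The natural candidate for $k$ is $w := \max\{l\in\pa_\cG(j)\cap\bde_\cG(i)\}$, which exists because $j\in\de_\cG(i)$ forces at least one parent of $j$ to lie in $\bde_\cG(i)$. A short topological argument shows $j$ is a maximal child of $w$: any $v\in\pa_\cG(j)\cap\de_\cG(w)$ would lie in $\pa_\cG(j)\cap\bde_\cG(i)$ (since $\de_\cG(w)\subseteq\bde_\cG(i)$) yet have index $>w$, contradicting the choice of $w$. Clearly $w\in[i,j)\cap\bde_\cG(i)$. If in addition $\tGamma_{l,j}=0$ for every $l\in\de_\cG(w)\cap\{1,\ldots,j-1\}$, then I can rewrite $\tU_j=\tGamma_{j,j}(U_j+U_{S_w}C^\top)+\tc_j$ with $S_w=[p]\setminus(\{j\}\cup\de_\cG(w))$, and apply Assumption~\ref{ass:linear-faith} to $I_{\rho_w}$ with maximal child $j$ to conclude $\bbP(\tU_j)\neq\bbP^{I_{\rho_w}}(\tU_j)$, completing the case.

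I would then proceed by strong induction on $|\de_\cG(i)\cap\{1,\ldots,j\}|$. The base case of size $1$ (only $j$ itself) makes the coefficient-vanishing condition vacuous, so the direct argument applies. In the inductive step, if the condition holds, the argument above finishes; otherwise there is some $l^*\in\de_\cG(w)\cap\{1,\ldots,j-1\}$ with $\tGamma_{l^*,j}\neq 0$. Since $l^*<j$ and $l^*\in\de_\cG(i)$, the set $\de_\cG(i)\cap\{1,\ldots,l^*\}$ is strictly smaller, so the inductive hypothesis gives some $k'\in[i,l^*)\cap\bde_\cG(i)$ with $\bbP(\tU_{l^*})\neq\bbP^{I_{\rho_{k'}}}(\tU_{l^*})$. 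The main obstacle, and the crux of the proof, is then to transfer this marginal-change statement from $\tU_{l^*}$ to $\tU_j$: a priori, the contributions of other $U_l$'s to $\tU_j$ could conspire to cancel the change induced at $U_{l^*}$. Overcoming this requires a careful conditional argument --- for example, conditioning on $U_{[p]\setminus\bde_\cG(k')}$, whose joint law is invariant under $I_{\rho_{k'}}$, and examining how the fixed linear map $\tGamma_{:,j}$ acts on the conditional law of the affected coordinates --- combined with a second invocation of Assumption~\ref{ass:linear-faith} applied to an appropriately chosen maximal child of $k'$ inside this conditional representation to rule out the cancellation. This is precisely where linear interventional faithfulness (as opposed to classical interventional faithfulness) carries the argument, and I expect the bookkeeping of the ``intermediate descendants and their coefficients'' to be the most technical step.
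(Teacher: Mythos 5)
Your first step --- choosing $w=\max\{l\in\pa_\cG(j)\cap\bde_\cG(i)\}$, verifying that $j\in\mch_\cG(w)$, and invoking Assumption~\ref{ass:linear-faith} when the coefficients $\tGamma_{l,j}$ vanish for all $l\in\de_\cG(w)\cap\{1,\dots,j-1\}$ --- is correct and matches how the paper applies the assumption in its ``maximal child'' cases. But the inductive step has a genuine gap. Your inductive hypothesis, applied to the pair $(i,l^*)$, delivers $\bbP(\tU_{l^*})\neq\bbP^{I_{\rho_{k'}}}(\tU_{l^*})$, which is a statement about the wrong linear functional: the proposition requires a marginal change of $\tU_j$, and a change in the law of one linear combination of $U$ under an intervention does not in general imply a change in the law of a different linear combination (this non-implication is exactly the phenomenon Example~\ref{exp:1} exhibits). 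The ``careful conditional argument'' you gesture at to transfer the change from $\tU_{l^*}$ to $\tU_j$ is the entire content of the claim in this branch, and it is not carried out; as written, nothing rules out that the coefficients $\tGamma_{l,j}$ on the intermediate descendants cancel the effect on $\tU_j$ even though they fail to cancel it on $\tU_{l^*}$.

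The paper sidesteps this by choosing the pivot differently: instead of the largest \emph{parent} of $j$ in $\bde_\cG(i)$, it takes $k_0$ to be the largest index in $[i,j)$ with $\tau(k_0)\in\bde_\cG(\tau(i))$ \emph{and} nonzero coefficient in $\tU_j$ (falling back to $k_0=i$ if none exists). Maximality of $k_0$, combined with the upper-triangular structure from Lemma~\ref{lemma:topological-order-identification}, forces every other variable appearing in $\tU_j$ with nonzero coefficient to lie in $[p]\setminus\bde_\cG(\tau(k_0))$; one then applies Assumption~\ref{ass:linear-faith} \emph{directly to $\tU_j$}, either with target $\tau(k_0)$ itself (when $\tau(j)\notin\de_\cG(\tau(k_0))$) or with a node $\tau(k)$ on a longest path from $\tau(k_0)$ to $\tau(j)$ of which $\tau(j)$ is a maximal child. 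No induction and no transfer between functionals is needed. If you want to salvage your write-up, replace $w$ by this maximal-coefficient pivot; the case analysis then closes in one pass.
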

\begin{proof}
    By Lemma~\ref{lemma:topological-order-identification}, Eq.~\eqref{thm1:eq1}, $\tU_j$ is a linear combination of $U_{\tau(1)},...,U_{\tau(j)}$ with nonzero coefficient of $U_{\tau(j)}$. 
    Let $k_0$ be
    \begin{itemize}
    \setlength{\itemindent}{4em}
        \item[\emph{(Case 1)}] the largest such that $i\leq k_0<j$ where $\tau(k_0)\in\bde_\cG(\tau(i))$ and the coefficient of $U_{\tau(k_0)}$ in $\tU_j$ is nonzero,
        \item[\emph{(Case 2)}] $i$, if no $k_0$ satisfies Case 1.
    \end{itemize}
    Then let $k=k_0$ if $\tau(j) \notin \de_\cG(\tau(k_0))$; otherwise let $k$ be such that $\tau(k)\in\bde_\cG(\tau(k_0))$ and $\tau(j) \in \mch_\cG(\tau(k))$ (such $k$ exists by considering the parent of $\tau(j)$ on the longest directed path from $\tau(k_0)$ to $\tau(j)$ in $\cG$). 
    Figure~\ref{afig:1} illustrates the different scenarios for $k_0,k$.
    Note that we always have $\tau(k) \in \bde_\cG(\tau(i))$.
    
    \begin{figure}[ht]
         \centering
         \begin{subfigure}[b]{0.27\textwidth}
             \centering
             \includegraphics[width=0.6\textwidth]{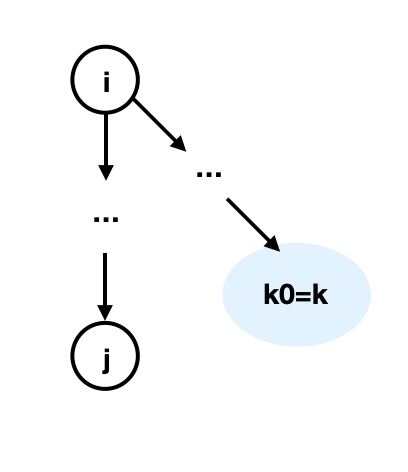}
             \caption{Case 1-1}
         \end{subfigure}
         \begin{subfigure}[b]{0.27\textwidth}
             \centering
             \includegraphics[width=0.6\textwidth]{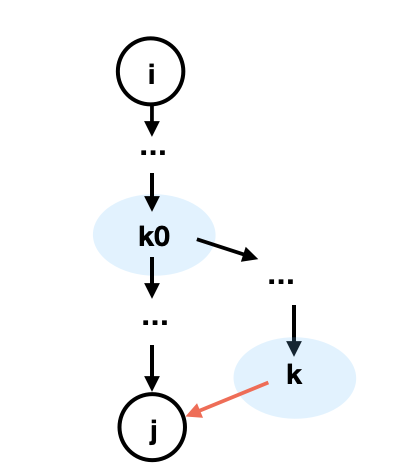}
             \caption{Case 1-2}
         \end{subfigure}
         \begin{subfigure}[b]{0.27\textwidth}
             \centering
             \includegraphics[width=0.6\textwidth]{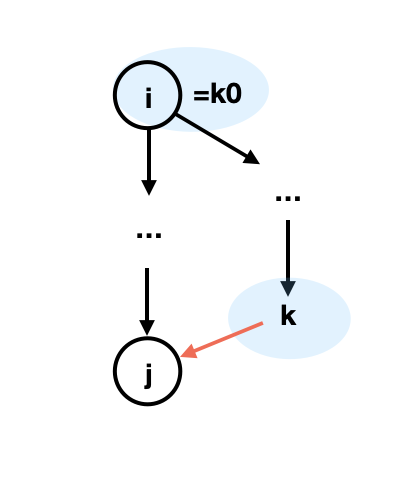}
             \caption{Case 2}
         \end{subfigure}
         \caption{Illustration of $k_0,k$.}\label{afig:1}
    \end{figure}
    
    \emph{(Case 1):} We first show that $\tU_j$ can be written as a linear combination of $U_{\tau(j)},U_{\tau(k_0)}$ and $U_S$ for $S \subset [p] \setminus \bde_\cG(\tau(k_0))$ with nonzero coefficient for $U_{\tau(j)},U_{\tau(k_0)}$.
    Consider an arbitrary $l\in [p]$. 
    If the coefficient for $U_{\tau(l)}$ in $\hat{U}_j$ is nonzero, by Eq.~\eqref{thm1:eq1}, we have $l\leq j$. 
    Also since $k_0$ is the largest, we have $l=k_0$ or $l=j$ or $l<k_0$ or $\tau(l) \notin \bde_\cG(\tau(i))$. 
    If $l<k_0$, then by the topological order, it holds that $\tau(l) \notin \de_\cG(\tau(k_0))$. 
    If $\tau(l) \notin \bde_\cG(\tau(i))$, since $\tau(k_0) \in \bde_\cG(\tau(i))$, it also holds that $\tau(l) \notin \bde_\cG(\tau(k_0))$. 
    Therefore $\tU_j$ can be written as a linear combination of $U_{\tau(j)},U_{\tau(k_0)}$ and $U_S$ with nonzero coefficient for $U_{\tau(j)},U_{\tau(k_0)}$. 
    Next, we show that $\bbP(\tU_j) \neq \bbP^{I_{\rho_k}}(\tU_j)$ by considering two subcases of Case 1.
    
    If $\tau(j) \notin \de_\cG(\tau(k_0))$, then $k=k_0$ (illustrated in Figure~\ref{afig:1}A). 
    Then $S \cup \{ \tau(j) \} \subset [p] \setminus \bde_\cG(\tau(k))$. 
    Therefore $\tU_j$ can be written as a linear combination of $U_{\tau(k)}$ and $U_{S'}$ for $S' \subset [p] \setminus \bde_\cG(\tau(k))$ with nonzero coefficient for $U_{\tau(k)}$. 
    By Assumption~\ref{ass:linear-faith}, we have $\bbP(\tU_j) \neq \bbP^{I_{\rho_k}}(\tU_j)$.
    
    If $\tau(j) \in \de_\cG(\tau(k_0))$, then since $\tau(k) \in \bde_\cG(\tau(k_0))$, we have $\tau(k_0) \in [p] \setminus \de_\cG(\tau(k))$ (illustrated in Figure~\ref{afig:1}B).
    Then we have $S\subset [p] \setminus \bde_\cG(\tau(k_0)) \subset [p] \setminus \de_\cG(\tau(k))$, and thus $S \cup \{ \tau(k_0) \} \subset [p] \setminus \de_\cG(\tau(k))$. 
    In fact, $S \cup \{\tau(k_0)\}$ is a subset of $[p] \setminus (\de_\cG(\tau(k)) \cup \{ \tau(j) \})$, since $\tau(j) \in \de_\cG(\tau(k))$ by $\tau(k) \in \pa_\cG(\tau(i))$ (definition of $k$).
    Therefore $\tU_j$ can be written as a linear combination of $U_{\tau(j)}$ and $U_{S'}$ for $S' \subset [p] \setminus (\de_\cG(\tau(k)) \cup \{ \tau(j) \})$ with nonzero coefficient for $U_{\tau(j)}$. 
    Since $\tau(j) \in \mch_\cG(\tau(k))$ (definition of $k$), by Assumption~\ref{ass:linear-faith}, we have $\bbP(\tU_j) \neq \bbP^{I_{\rho_k}}(\tU_j)$, as $I_{\rho_k}$ targets $U_{\tau(k)}$.

    \emph{(Case 2):} In this case $k_0=i$ (illustrated in Figure~\ref{afig:1}C). 
    Then for any $l< j$ such that $\tau(l) \in \de_\cG(\tau(k))$, the coefficient of $U_{\tau(l)}$ in $\tU_j$ is zero. 
    Otherwise since $\de_\cG(\tau(k)) \subset \de_\cG(\tau(k_0)) = \de_\cG(\tau(i))$, it holds that $\tau(l) \in \de_\cG(\tau(i))$, which by Eq.~\eqref{thm1:eq3} implies $i<l<j$. 
    Thus $l$ satisfies Case 1, a contradiction. 
    Therefore, also by Eq.~\eqref{thm1:eq3}, $\tU_i$ can be written as a linear combination of $U_{\tau(j)}$ and $U_{S}$ with nonzero coefficient of $U_{\tau(j)}$, where $S\subset [p] \setminus \de_\cG(\tau(k))$.
    
    Since $\tau(j) \in \de_\cG(\tau(i)) = \de_\cG(\tau(k_0))$, by definition of $k$, we have $\tau(j) \in \mch_\cG(\tau(k))$. 
    Note that $\tU_i$ can be written as a linear combination of $U_{\tau(j)}$ and $U_{S'}$ with nonzero coefficient of $U_{\tau(j)}$, where $S' \subset [p] \setminus (\de_\cG(k) \cup \{ \tau(j) \})$. 
    By Assumption~\ref{ass:linear-faith}, $\bbP(\tU_j) \neq \bbP^{I_{\rho_k}}(\tU_j)$, as $I_{\rho_k}$ targets $U_{\tau(k)}$.
    
    Therefore, in both cases it holds that $\bbP(\tU_j) \neq \bbP^{I_{\rho_k}}(\tU_j)$. 
    Since $i\leq k<j$ and $\tau(k) \in \bde_\cG(\tau(i))$, the claim is proven.
\end{proof}

Now, we use marginal changes to define a graph associated to any topologically-ordered representation.
We use Proposition~\ref{prop:change-transitive} to show that picking the the topologically-ordered representation which yields the sparsest graph will recover the transitive closure of $\cG$.

\begin{lemma}\label{lemma:transitive-closure-identification}
    Let $\tU$ be a topological representation of $X$ with intervention targets $\rho_1,...,\rho_p$.
    Let $\hatR' \in \bbR^{p \times p}$ be an invertible upper triangular and let $\hat{U} = \tilde{U} \hatR'$.
    Define the following:
    \begin{itemize}
        \item Let $\hat{\cG}^{\hatR'}_0$ be the DAG such that $i \to j \in \hat{\cG}_0$ if and only if $i < j \in [p]$ and $\bbP(\hat{U}_j) \neq \bbP^{I_{\rho_i}}(\hat{U}_j)$.

        \item Let $\hatcG^{\hatR'} = \cT\cS(\hatcG^{R'}_0)$. 
    \end{itemize}
    
    Let $\hatR$ be such that $\hatcG^{\hatR}$ has the fewest edges over any choice of $\hatR'$.
    Then $\hatcG^{\hatR} = \cT\cS(\cG_\tau)$.
    We call $\hatU$ a \emph{sparsest topological representation} of $X$.
\end{lemma}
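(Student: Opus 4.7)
My strategy is to match a universal lower bound on the edge count of $\hatcG^{\hatR'}$ with an explicit upper triangular $\hatR'$ that achieves it. More precisely, I will first argue that $\hatcG^{\hatR'} \supseteq \cT\cS(\cG_\tau)$ for \emph{every} invertible upper triangular $\hatR'$, and then construct a particular $\hatR'$ for which $\hatcG^{\hatR'} = \cT\cS(\cG_\tau)$. Any minimizer $\hatR$ of the edge count must then satisfy $\hatcG^{\hatR} = \cT\cS(\cG_\tau)$, since a graph on $p$ nodes containing $\cT\cS(\cG_\tau)$ and having no more edges than $\cT\cS(\cG_\tau)$ must equal it.

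For the lower bound, the key preparatory step is to verify that $\hatU = \tilde{U}\hatR'$ is itself a topological representation of $X$ with the \emph{same} intervention targets $\rho_1, \ldots, \rho_p$. Condition 2 follows immediately because $\hatR'$ being upper triangular makes $\hatU_{1:j-1} C^\top$ a linear function of $\tilde{U}_{1:j-1}$, to which Condition 2 of $\tilde{U}$ applies. Condition 1 requires combining Lemma~\ref{lemma:topological-order-identification} (which shows $\hatU_j$ is a linear combination of $U_{\tau(1)}, \ldots, U_{\tau(j)}$ with a nonzero coefficient on $U_{\tau(j)}$) with the $j=i$ case of Assumption~\ref{ass:linear-faith}, using that none of $\tau(1), \ldots, \tau(j-1)$ lie in $\bde_\cG(\tau(j))$. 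Once $\hatU$ is known to be topological, I apply Proposition~\ref{prop:change-transitive} to $\hatU$: for every pair $i < j$ with $\tau(j) \in \de_\cG(\tau(i))$ there exists some $i \leq k < j$ with $\tau(k) \in \bde_\cG(\tau(i))$ and $\bbP(\hatU_j) \neq \bbP^{I_{\rho_k}}(\hatU_j)$, which by definition gives an edge $k \to j$ in $\hatcG^{\hatR'}_0$. A straightforward induction on $j - i$ then produces a directed path $i = k_0 \to k_1 \to \cdots \to j$ in $\hatcG^{\hatR'}_0$ (at each step the intermediate vertex remains a descendant of $i$ in $\cG_\tau$), so $i \to j \in \hatcG^{\hatR'} = \cT\cS(\hatcG^{\hatR'}_0)$.

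For achievability, I set $\hatR' := M^{-1}$ where $M := \tGamma_{\tau,:}$, which by Lemma~\ref{lemma:topological-order-identification} is upper triangular with nonzero diagonal, so $M^{-1}$ is also upper triangular and invertible. A direct computation shows $\hatU_j = U_{\tau(j)} + \text{const}$. For this choice, the edge $i \to j$ appears in $\hatcG^{\hatR'}_0$ only if intervening on $U_{\tau(i)}$ changes the marginal distribution of $U_{\tau(j)}$; since interventions do not alter the marginal of a non-descendant, this forces $\tau(j) \in \de_\cG(\tau(i))$, i.e., $i \to j \in \cT\cS(\cG_\tau)$. Hence $\hatcG^{\hatR'}_0 \subseteq \cT\cS(\cG_\tau)$, and since $\cT\cS(\cG_\tau)$ is already transitively closed we get $\hatcG^{\hatR'} \subseteq \cT\cS(\cG_\tau)$. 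Combined with the lower bound, $\hatcG^{\hatR'} = \cT\cS(\cG_\tau)$, completing the proof.

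The main obstacle is Step~2, where one has to argue that passing to the transitive closure of the marginal-change graph faithfully recovers the ancestral relations in $\cG_\tau$, not merely a weaker relation such as a chain of ``most upstream descendants.'' The iterative use of Proposition~\ref{prop:change-transitive} must be set up so that the intermediate index $k$ it produces remains strictly between $i$ and $j$ in the topological order while still being a descendant of $i$ in $\cG$; this is exactly what enables the induction on $j - i$ to terminate and yield a directed path. The verification that $\hatU$ is a topological representation (needed before Proposition~\ref{prop:change-transitive} is even applicable) is a secondary but unavoidable bookkeeping step, relying on the upper triangular structure of $\hatR'$.
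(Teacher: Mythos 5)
Your proposal is correct and follows essentially the same two-direction argument as the paper: a universal containment $\cT\cS(\cG_\tau)\subseteq\hatcG^{\hatR'}$ obtained by iterating Proposition~\ref{prop:change-transitive}, paired with an explicit upper-triangular $\hatR'$ recovering $(U_{\tau(1)},\ldots,U_{\tau(p)})$ up to constants, for which the marginal-change graph lies inside $\cT\cS(\cG_\tau)$. Your explicit verification that $\hatU=\tU\hatR'$ is itself a topological representation (so that Proposition~\ref{prop:change-transitive} applies to $\hatU_j$ rather than $\tU_j$) is a bookkeeping step the paper leaves implicit, and is a welcome addition rather than a deviation.
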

\begin{proof}
~\\
\underline{Direction 1.}
\\
We first show that for any $\hatR'$,
\begin{equation}\label{thm1:eq2}
    \cT\cS(\cG_{\tau}) \subseteq \hatcG^{\hatR'}.
\end{equation}

Let $i \to j \in \cT\cS(\cG_\tau)$, so $i < j$.
By Proposition~\ref{prop:change-transitive}, we have $k$ such that $i \leq k < j$ with $\tau(k) \in \bde_\cG(\tau(i))$.
By definition of $\hatcG^{\hatR'}_0$, we have $k \rightarrow j \in \hatcG^{\hatR'}_0$. 
Repeating this argument iteratively, we obtain a directed path from $i$ to $j$ in $\hatcG_0^{\hatR'}$.
Thus, by definition of $\hatcG^{\hatR'}$, we have $\cT\cS(\cG_{\tau}) \subseteq \hatcG^{\hatR'}$.

\underline{Direction 2.}
\\
Now we give an example of $\hatR$ such that the constructed $\hatcG^\hatR$ satisfies 
\[
\hat{\cG}^\hatR \subseteq \cT\cS(\cG_\tau).
\]
Denote $\tGamma \hatR = \hatGamma$ and $\hatc = \tc \hatR$. 
Since $\hatR$ is upper-triangular and invertible, by Eq.~\eqref{thm1:eq1}, we have $\hatU = U \hatGamma + \hatc$, where
\begin{equation}\label{thm1:eq3}
    i<j \Rightarrow \hatGamma_{\tau(j),i} = 0 
    \quad \mathrm{and} \quad
    \hatGamma_{\tau(j),j} \neq 0, 
\end{equation}
where $\tau$ is the topological order in Eq.~\eqref{thm1:eq1}. 
By Eq.~\eqref{thm1:eq3}, there exists an invertible upper-triangular matrix $R \in \bbR^{p\times p}$ such that $\hatU=  (U \tGamma + \tc) R = (U_{\tau(1)},...,U_{\tau(p)}) + c$ for some constant vector $c$. 
Now for $i<j \in [p]$, we have $i \rightarrow j \in \hatcG^R_0 \Leftrightarrow \bbP(U_{\tau(j)}) \neq \bbP^{I_{\rho_i}}(U_{\tau(j)})$. 
Since $I_{\rho_i}$ targets $U_{\tau(i)}$, this would only be true when $\tau(j)\in\de_\cG(\tau(i))$. 
Therefore $i \rightarrow j \in \hatcG^\hatR_0 \Rightarrow \tau(j) \in \de_\cG(\tau(i))$. 
Thus $\hatcG^\hatR_0 \subseteq \cT\cS(\cG_\tau)$. 
As $\cT\cS(\cG_\tau)$ is a transitive closure, this means $\hatcG^\hatR = \cT\cS(\hatcG^\hatR_0) \subseteq \cT\cS(\cG_\tau)$.
\end{proof}

Analogously to Lemma~\ref{lemma:topological-order-identification}, the following result shows that a sparsest topological representation is topologically consistent with $U$ in a stronger sense than a topological representation.

\begin{lemma}\label{lemma:matrix-transitive-closure}
    Let Assumption~\ref{ass:linear-faith} hold.
    For $\tGamma \in \bbR^{p \times p}$ and $\tc \in \bbR^p$, let $\tU = U \tGamma + \tc$ be a sparsest topological representation of $U$ with intervention targets $\rho_1, \ldots, \rho_p$.
    Let $\tau$ be a topological ordering of $\cG$ such that
    \begin{equation}\label{eq:sparsest-topological-matrix}
        i < j
        \Rightarrow
        \tGamma_{\tau(j),i} = 0 
        \quad \mathrm{and} \quad
        \tGamma_{\tau(j),j} \neq 0. 
    \end{equation}

    Then $\tGamma_{\tau(j),l} = 0$ for $\tau(l) \notin \de_\cG(\tau(j))$.
\end{lemma}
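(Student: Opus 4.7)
The plan is to argue by contradiction. Suppose $\tGamma_{\tau(j),l}\neq 0$ for some indices with $\tau(l)\notin\bde_\cG(\tau(j))$ (I read the statement with $\bde_\cG$ rather than $\de_\cG$, since the case $l=j$ would otherwise contradict Eq.~\eqref{eq:sparsest-topological-matrix}). Eq.~\eqref{eq:sparsest-topological-matrix} forces $j<l$; combined with $\tau$ being a topological ordering of $\cG$ and $\tau(l)$ being neither $\tau(j)$ nor a descendant of $\tau(j)$, the vertices $\tau(j)$ and $\tau(l)$ must be incomparable in $\cG$. Because $\tU$ is a sparsest topological representation, Lemma~\ref{lemma:transitive-closure-identification} yields $\hatcG^I=\cT\cS(\cG_\tau)$, hence $\hatcG^I_0\subseteq\cT\cS(\cG_\tau)$. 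The incomparability implies $j\to l\notin\cT\cS(\cG_\tau)$, and therefore $\bbP(\tU_l)=\bbP^{I_{\rho_j}}(\tU_l)$. I will derive a contradiction by showing, via Assumption~\ref{ass:linear-faith}, that in fact $\bbP(\tU_l)\neq\bbP^{I_{\rho_j}}(\tU_l)$.

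I would proceed by strong induction on $l$. The base case $l=1$ is immediate from Eq.~\eqref{eq:sparsest-topological-matrix}. For the inductive step, decompose $\tU_l=\tGamma_{\tau(j),l}\,U_{\tau(j)} + V + W$, where $V=\sum_{m\in D}\tGamma_{\tau(m),l}\,U_{\tau(m)}$ gathers the remaining descendant contributions with $D=\{m\neq j:\tGamma_{\tau(m),l}\neq 0,\ \tau(m)\in\de_\cG(\tau(j))\}$, and $W$ collects the terms supported on $[p]\setminus\bde_\cG(\tau(j))$. If $D=\varnothing$, then $\tU_l$ has the exact form $\tGamma_{\tau(j),l}\bigl(U_{\tau(j)} + U_S C^\top\bigr)$ up to an additive constant, with $S=[p]\setminus\bde_\cG(\tau(j))$, and Assumption~\ref{ass:linear-faith} applied with $i^*=j^*=\tau(j)$ immediately gives $\bbP(\tU_l)\neq\bbP^{I_{\rho_j}}(\tU_l)$, contradicting the sparsity-forced equality.

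The main obstacle, and hence the core of the proof, is the case $D\neq\varnothing$, where direct application of Assumption~\ref{ass:linear-faith} with $j^*=\tau(j)$ is unavailable because of the extra descendant terms. The plan is to invoke the inductive hypothesis for each $m\in D$: since $m<l$, $\tU_m$ is a linear combination of $U_{\tau(m)}$ and $U_k$ for $k\in\an_\cG(\tau(m))$ only. Processing $m\in D$ in reverse topological order, I iteratively subtract suitable multiples of $\tU_m$ from $\tU_l$ to cancel the $U_{\tau(m)}$ contribution; by the inductive hypothesis each subtraction only reintroduces terms indexed by ancestors of $\tau(m)$, which split into either non-descendants of $\tau(j)$ (absorbed into $W$) or strictly more upstream descendants of $\tau(j)$, so the procedure terminates after finitely many steps. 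The delicate bookkeeping step is to verify that the coefficient of $U_{\tau(j)}$ in the reduced expression remains nonzero, which will follow from applying Assumption~\ref{ass:linear-faith} inductively along the maximal children of $\tau(j)$ that lie on the paths to the eliminated descendants. Once this is established, the reduced expression falls under the $D=\varnothing$ case and a final application of Assumption~\ref{ass:linear-faith} produces the contradiction with $\bbP(\tU_l)=\bbP^{I_{\rho_j}}(\tU_l)$, completing the induction.
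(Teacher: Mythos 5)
Your setup is sound: reading the conclusion as $\tau(l)\notin\bde_\cG(\tau(j))$ (which is indeed the intended statement), deducing $j<l$ from Eq.~\eqref{eq:sparsest-topological-matrix}, and using sparsity of the topological representation to conclude $\bbP(\tU_l)=\bbP^{I_{\rho_j}}(\tU_l)$ from $j\to l\notin\cT\cS(\cG_\tau)=\hatcG$. Your $D=\varnothing$ branch is exactly the right argument. However, the part you call ``the core of the proof,'' the $D\neq\varnothing$ branch, does not work. Subtracting multiples of $\tU_m$ for $m\in D$ from $\tU_l$ produces a \emph{different} random variable $V=\tU_l-\sum_m c_m\tU_m$; applying Assumption~\ref{ass:linear-faith} to this reduced expression yields $\bbP(V)\neq\bbP^{I_{\rho_j}}(V)$, which says nothing about $\bbP(\tU_l)$ versus $\bbP^{I_{\rho_j}}(\tU_l)$ and therefore cannot contradict the equality you derived. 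You also cannot transfer the conclusion back: the $\tU_m$ with $m\in D$ satisfy $\tau(m)\in\de_\cG(\tau(j))$, so their marginals (let alone the joint with $\tU_l$) are in general \emph{not} invariant under $I_{\rho_j}$, and no relation between $\bbP(V)$ and $\bbP(\tU_l)$ is available.

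The missing idea is a maximality choice that makes $D=\varnothing$ automatically, so that neither the induction on $l$ nor the subtraction scheme is needed. Fix the column $l$ and, assuming a violation exists, let $j$ be the \emph{largest} index with $\tGamma_{\tau(j),l}\neq 0$ and $\tau(l)\notin\bde_\cG(\tau(j))$. If some $m$ had $\tGamma_{\tau(m),l}\neq 0$ and $\tau(m)\in\de_\cG(\tau(j))$, then $m>j$ by the topological order, and $\tau(l)\in\de_\cG(\tau(m))$ would force $\tau(l)\in\de_\cG(\tau(j))$ by transitivity; hence $\tau(l)\notin\de_\cG(\tau(m))$, contradicting the maximality of $j$. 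So for this choice of $j$, every other nonzero entry of column $l$ is indexed by a node in $[p]\setminus\bde_\cG(\tau(j))$, you are in your $D=\varnothing$ case, and Assumption~\ref{ass:linear-faith} (applied with the target node $\tau(j)$ itself) gives $\bbP(\tU_l)\neq\bbP^{I_{\rho_j}}(\tU_l)$, the desired contradiction. This is the paper's proof; it is two lines once the right $j$ is chosen.
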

\begin{proof}
    For sake of contradiction, let $l, j \in [p]$.
    Without loss of generality, let $j$ be the largest value for which $\tGamma_{\tau(j), l} \neq 0$ and $\tau(l) \not\in \de_\cG(\tau(j))$.
    By transitivity of $\de_\cG$ and the choice of $j$ as the largest value, there is no $j'$ such that $\tau(j') \in \de_\cG(\tau(j))$ and $\tGamma_{\tau(j'),l} \neq 0$.
    Therefore $\tU_l$ can be written as a linear combination of $U_{\tau(j)}$ and $U_S$ with nonzero coefficient of $U_{\tau(j)}$, where $S \subset [p] \setminus \bde_\cG(\tau(j))$. 
    
    By Assumption~\ref{ass:linear-faith}, we have $\bbP(\tU_l) \neq \bbP^{I_{\rho_{j}}}(\tU_l)$. 
    Since $\tau(l) \notin \de_\cG(\tau(j))$ and $\hatcG = \cT\cS(\cG_\tau)$, we have $j \rightarrow l \notin \hatcG$, in which case $\bbP(\tU_l) \neq \bbP^{I_{\rho_{j'}}}(\tU_l)$ violates Condition 1, a contradiction.
\end{proof}
\subsubsection{Proof of Theorem~\ref{thm:1}}
\thmone*

Here, we combine the results of the previous two sections to show that we can recover $\cG$ up to transitive closure and permutation, and that we recover the intervention targets $I_1, \ldots, I_K$ up to the same permutation.

\begin{proof}
By Lemma~\ref{lm:linear-identify} and Remark~\ref{rmk:1}, we can assume, without loss of generality, that $p$ is known and that $X=f(U)=U\Lambda+b$ for some non-singular matrix $\Lambda$, as this can be identified from observational data $\cD$.

By Lemma~\ref{lemma:transitive-closure-identification}, we can identify a topological representation $\hatU = U \hatGamma + \hatc$ with intervention targets $\rho_1,...,\rho_p\in [K]$, where $\hatGamma \in \bbR^{p \times p}$ and $\hatc \in \bbR^p$.
Further, for some unknown topological ordering $\tau$ of $\cG$, $\hatGamma$ satisfies Eq.~\eqref{thm1:eq3}, $T(I_{\rho_i}) = \tau(i)$ for $i \in [p]$, and we identify $\hatcG = \cT\cS(\cG_\tau)$.

\underline{Identifying additional intervention targets.}\\
So far, we only guarantee that we identify the intervention targets for $I_{\rho_1}, \ldots, I_{\rho_p}$.
Now, consider any $k \in [K] \setminus \{ \rho_1,...,\rho_p \}$.
Let $l$ be such that $T_\cG(I_{k}) = \tau(l)$. 
We now argue that $l$ can be identified as the smallest $l'$ in $[p]$ such that $\bbP(\hat{U}_{l'}) \neq \bbP^{I_k}(\hat{U}_{l'})$. 

By Assumption~\ref{ass:linear-faith}, we have $\bbP(\hat{U}_l)\neq\bbP^{I_k}(\hat{U}_l)$, since $I_k$ targets $U_{\tau(l)}$ and $\hat{U}_l$ can be written as a linear combination of $U_{\tau(1)},...,U_{\tau(l)}$ with nonzero coefficient $U_{\tau(l)}$ (note that $U_{\tau(1)},...,U_{\tau(l-1)} \in [p] \subset \bde_\cG(\tau(l))$). 

On the other hand, for $l'<l$, we have $\bbP(\hat{U}_l) = \bbP^{I_k}(\hat{U}_l)$, since $\hat{U}_{l'}$ can be written as a linear combination of $U_{\tau(1)},...,U_{\tau(l')}$ and $\tau$ is the topological order.
\end{proof}

\subsection{Proof of Theorem~\ref{thm:2}}\label{app:thm2-proof}

In this section, we show that by introducing Assumption~\ref{ass:adjancy}, we can go beyond recovering the transitive closure of $\cG$, and we instead recover $\cG$.
We begin by establishing a basic fact about conditional independences in our setup.

\begin{claim}\label{claim:intersection-property}
    Under Assumption~\ref{assumption:1},
    let $\bA, \bB, \bC, \bD$ denote (potentially linear combinations of) components of $U$, and assume that $\bA \CI \bB \mid \bC, \bD$ and $\bA \CI \bC \mid \bB, \bD$.
    Then $\bA \CI \bB \mid \bD$.
\end{claim}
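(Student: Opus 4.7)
The plan is to reduce this to the classical \emph{intersection property} of conditional independence, which is known to hold under a strict positivity condition on the joint density. Under Assumption~\ref{assumption:1}, the interior $\bbU^\circ$ of the support of $\bbP_U$ is non-empty, so $U$ admits a density that is positive on the open set $\bbU^\circ \subseteq \bbR^p$. Since $\bA, \bB, \bC, \bD$ are affine functions of $U$, the joint $(\bA,\bB,\bC,\bD)$ is an affine pushforward of this density.

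First, I would handle possible degeneracies. Let $M$ be the linear map sending $U$ to $(\bA,\bB,\bC,\bD)$. If $M$ has nontrivial kernel, replace $(\bA,\bB,\bC,\bD)$ by a maximal linearly independent sub-vector $(\bA', \bB', \bC', \bD')$ that preserves the information content of each block; since conditional independences only depend on the $\sigma$-algebras generated, the hypotheses and conclusion are unchanged. After this reduction, the joint density of $(\bA,\bB,\bC,\bD)$ is positive on the image $M(\bbU^\circ)$, an open subset of the ambient Euclidean space. Crucially, because $M$ is linear and $\bbU^\circ$ is an open set, $M(\bbU^\circ)$ contains an open product neighborhood around any of its points; this is what allows the classical argument to proceed.

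Next I would run the standard proof of the intersection property on this positive-density setting. The first hypothesis $\bA \CI \bB \mid \bC, \bD$ gives $p(\ba \mid \bb, \bc, \bd) = p(\ba \mid \bc, \bd)$ and the second gives $p(\ba \mid \bb, \bc, \bd) = p(\ba \mid \bb, \bd)$, so $p(\ba \mid \bc, \bd) = p(\ba \mid \bb, \bd)$ on the support. Fix $\bd$ and any two values $\bb_0, \bb_1$ in the $\bd$-section; the right-hand side evaluated at $(\bb_0, \bd)$ and $(\bb_1, \bd)$ both equal a common left-hand side $p(\ba \mid \bc, \bd)$ (varying $\bc$ over the product neighborhood), so $p(\ba \mid \bb, \bd)$ does not depend on $\bb$, hence equals $p(\ba \mid \bd)$. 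This yields $\bA \CI \bB \mid \bD$.

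The main obstacle is the degeneracy step: one must be careful that quotienting out linear dependencies among the four blocks does not alter the conditional independence statements, and that the reduced density really is positive on a set containing product rectangles (rather than only on a thin affine slice). A cleaner alternative is to invoke directly a general intersection theorem for absolutely continuous distributions whose density is positive on an open set (e.g., Pearl 1988, Thm. 1.1.5, or Peters, Janzing, and Schölkopf 2017, Prop.~6.29), with Assumption~\ref{assumption:1} supplying exactly the required positivity hypothesis.
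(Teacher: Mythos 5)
Your proposal is correct and takes essentially the same route as the paper: the paper's proof simply notes that Assumption~\ref{assumption:1} gives $\bbP_{\bA,\bB,\bC,\bD}$ positive measure on a full-dimensional set and then invokes the intersection (graphoid) property via Proposition~2.1 of Studen\'y (2006), which is exactly the "cleaner alternative" you describe at the end. Your more hands-on version, including the reduction to a linearly independent sub-vector and the explicit density manipulation, is a valid unpacking of that same citation rather than a different argument.
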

\begin{proof}
    By Assumption~\ref{assumption:1}, $\bbP_{\bA, \bB, \bC, \bD}$ has positive measure on some full-dimensional set.
    By Proposition 2.1 of \cite{studeny2006probabilistic}, $\bbP_{\bA, \bB, \bC, \bD}$ is a graphoid, i.e., it obeys the intersection property.
    Invoking this property, we obtain $\bA \CI \bB \mid \bD$, as desired.
\end{proof}

With this, we are ready to prove Theorem~\ref{thm:2}, which we recall here.
\thmtwo*

Note that, from Theorem~\ref{thm:1}, we have already identified the interventions $I_1, \ldots, I_K$ up to CD-equivalence for a permutation $\tau$.
Thus, the only remaining result to show is that we identify $\cG$ up to the same permutation.

In particular, we can again characterize the solution in terms of the sparsest solution.
\begin{namedenv}[Theorem~\ref{thm:2}, Constructive]
    Let $\hat{U}$ be a sparsest topological representation of $X$ with intervention targets $\rho_1,...,\rho_p$.
    Let $\barR' \in \bbR^{p \times p}$ be an invertible upper triangular matrix, and let $\barU = \hat{U} \barR'$.
    Define the following:
    \begin{itemize}
        \item Let $\barcG^{\barR'}$ be the DAG such that $i \to j$ for $i < j \in [p]$, if and only if $\barU_i \nCI \barU_j \mid \barU_1, \ldots, \barU_{i-1}, \barU_{i+1}, \ldots, \barU_{j-1}$
    \end{itemize}
    Let $\barR$ be such that $\barcG^{\barR}$ has the fewest edges over any choice of $\barR'$.
    Then $\barcG^{\barR} = \cG_\tau$ for $\tau$ satisfying Eq.~\eqref{eq:sparsest-topological-matrix}.
\end{namedenv}

\begin{proof} 
By Lemma~\ref{lemma:transitive-closure-identification}, we have $\hatU = U \hatGamma$ for some matrix $\hatGamma \in \bbR^{p \times p}$ satisfying Eq.~\eqref{thm1:eq1} under some topological order $\tau$ of $\cG$.
Further, we identify $\hatcG = \cT\cS(\cG_\tau)$ .
    

Denoting $\hatGamma \barR = \barGamma$ and $\barc = \hatc \barR$, by Lemma~\ref{lemma:matrix-transitive-closure}, we have $\barU = U \barGamma + \barc$ with
\begin{equation}\label{thm2:eq1}
\begin{aligned}
    & i < j
     \Rightarrow
    \barGamma_{\tau(j),i} = 0 
    \quad \mathrm{and} \quad
    \barGamma_{\tau(j),j} \neq 0,\\
    & \tau(l)\notin\de_\cG(\tau(j))  \Rightarrow\barGamma_{\tau(j),l} = 0.
\end{aligned}
\end{equation}


\underline{Direction 1.}
\\
First, we show that
\[
\cG_{\tau} \subseteq \barcG^\barR.
\]

Assume on the contrary that there exists $\tau(i) \to \tau(j) \in \cG$ such that $i \to j \not\in \barcG^{\barR}$.
By definition, we have $\barU_i \CI \barU_j \mid \barU_1, \ldots, \barU_{i-1}, \barU_{i+1}, \ldots, \barU_{j-1}$.
By Eq.~\eqref{thm2:eq1}, we know that we can retrieve $U_{\tau(1)},...,U_{\tau(i-1)}$ by linearly transforming $\bar{U}_{1},...,\bar{U}_{i-1}$; this implies $U_{\tau(i)}\CI \bar{U}_{j}\mid U_{\tau(1)},...,U_{\tau(i-1)},\bar{U}_{i+1},...,\bar{U}_{j-1}$. By subtracting terms in $U_{\tau(1)},...,U_{\tau(i-1)}$ from $\bar{U}_{i+1},...,\bar{U}_{j}$ and then subtracting terms $\bar{U}_{l}$ from $\bar{U}_{l+1},...,\bar{U}_{j}$ for $l=i+1,...,j-1$, we have that
\begin{equation}\label{thm1:s2-1}
\begin{aligned}
    U_{\tau(i)}\CI&U_{\tau(j)}+c_jU_{\tau(i)}\mid U_{\tau(1)},...,U_{\tau(i-1)},\\
    &U_{\tau(i+1)}+c_{i+1}U_{\tau(i)},...,{U}_{\tau(j-1)}+c_{j-1}U_{\tau(i)},.
\end{aligned}
\end{equation}
for some $c_{i+1},...,c_j\in\bbR$. Since by Eq.~\eqref{thm2:eq1} there is $\barGamma_{\tau(i),l}=0$ for any $\tau(l)\notin \de_\cG(\tau(i))$, this subtraction gives us $c_l=0$ if $\tau(l)\notin \de_\cG(\tau(i))$.

Therefore let
\begin{align*}
    \bA &= U_{\tau(j)} + c_j U_{\tau(i)},
    & \bB &= U_{\tau(i)}
    \\
    \bC &= \{ U_{\tau(l)} + c_ l U_{\tau(i)} \}_{l\leq j-1,\tau(l)\notin \pa_\cG(\tau(j))},
    \textnormal{and}
    & \bD &= \{ U_{\tau(l)} + c_l U_{\tau(i)} \}_{\tau(l) \in \pa_\cG(\tau(j)) \setminus \{ \tau(i) \}},
\end{align*}
where $c_{1}=...=c_{i-1}=0$. There is $\bA \CI \bB \mid \bC, \bD$.

On the other hand, since $\tau(i) \to \tau(j) \in \cG$, i.e., $\tau(i) \in \pa_\cG(\tau(j))$.
We will now show that this implies $\bA \CI \bC \mid \bB, \bD$.
Starting with the local Markov property, we have for any $c_1, \ldots, c_{i-1}, c_{i+1}, \ldots, c_j$ that
\begin{equation}
    \begin{aligned}
        U_{\tau(j)} 
        &\CI 
         \{ U_{\tau(l)}\}_{l\leq j-1,\tau(l)\notin \pa_\cG(\tau(j))} 
        \mid 
        \{ U_{\tau(l)} \}_{\tau(l) \in \pa_\cG(\tau(j))}
        \\
        \Longrightarrow\quad
        U_{\tau(j)} + c_j U_{\tau(i)}
        &\CI 
        \{ U_{\tau(l)} + c_l U_{\tau(i)} \}_{l\leq j-1,\tau(l)\notin \pa_\cG(\tau(j))} 
        \mid 
        \{ U_{\tau(l)} \}_{\tau(l) \in \pa_\cG(\tau(j))}
        \\
        \Longrightarrow\quad
        U_{\tau(j)} + c_j U_{\tau(i)}
        &\CI 
       \{ U_{\tau(l)} + c_l U_{\tau(i)} \}_{l\leq j-1,\tau(l)\notin \pa_\cG(\tau(j))} 
        \\
        & \qquad\qquad\mid 
        U_{\tau(i)}, \{ U_{\tau(l)} + c_l U_{\tau(i)} \}_{\tau(l) \in \pa_\cG(\tau(j)) \setminus \{\tau(i)\}}
    \end{aligned}
\end{equation}
where the first implication follows from the definition of conditional independence, and the second implication follows since $\{ U_{\tau(l)} \}_{\tau(l) \in \pa_\cG(\tau(j))}$ is a deterministic function of $U_{\tau(i)}, \{ U_{\tau(l)} + c_l U_{\tau(i)} \}_{\tau(l) \in \pa_\cG(\tau(j)) \setminus \{\tau(i)\}}$.

Thus, by Claim~\ref{claim:intersection-property}, if $i \to j \not\in \barcG^{\barR}$ and $\tau(i) \to \tau(j) \in \cG$, then $\bA \CI \bB \mid \bD$, i.e.,
\[
U_{\tau(i)}
\CI U_{\tau(j)} + c_j U_{\tau(i)}
\mid
\{ U_{\tau(l)} + c_l U_{\tau(i)} \}_{l \in \pa_\cG(\tau(j)) \setminus \{ \tau(i) \}}.
\]
Since $c_l=0$ for any $\tau(l) \notin \de_{\cG}(\tau(i))$ and $\tau$ is the topological order, this can be further written as 
\[
U_{\tau(i)}
\CI U_{\tau(j)} + c_j U_{\tau(i)}
\mid
\{ U_{\tau(l)} \}_{l \in \pa_{\cG}(\tau(j)) \setminus (S \cup \{ \tau(i) \})}, \{ U_{\tau(l)} + c_l U_{\tau(i)} \}_{l \in S},
\]
where $S = \pa_\cG(\tau(j)) \cap \de_\cG(\tau(i))$, which violates Assumption \ref{ass:adjancy}. 
Therefore we must have $\cG_\tau \subseteq \barcG^{\barR}$.

\underline{Direction 2.}
\\
There exists an invertible upper-triangular matrix $\bar{R} \in \bbR^{p \times p}$ such that $\barU = \hat{U} \bar{R} = (U_{\tau(1)},...,U_{\tau(p)}) + \barc$ for some constant vector $\barc$. 
Note that clearly $\barU$ satisfies Condition 1. 
Also for $i<j \in [p]$ such that $\tau(i) \rightarrow \tau(j) \notin \cG$, by the Markov property and $\tau$ being the topological order, we have $\barU_{i} \CI \barU_{j} \mid \barU_{1},...,\barU_{i-1},\barU_{i+1},...,\barU_{j-1}$. 
Thus $\tau(i) \rightarrow \tau(j) \notin \cG \Rightarrow i\rightarrow j \notin \barcG$, and hence $\barcG \subseteq \cG_\tau$, which completes the proof.
\end{proof}

\begin{remark}\label{rmk:5}
These proofs (Lemma~\ref{lm:linear-identify}, Theorem~\ref{thm:1},\ref{thm:2}) together indicate that under Assumptions~\ref{assumption:1},\ref{ass:linear-faith},\ref{ass:adjancy}, we can identify $\langle\cG,I_1,...,I_K\rangle$ up to its CD-equivalence class by solving for the smallest $\hat{p}$, an encoder $\hat{g}:\bbR^n\rightarrow\bbR^{\hat{p}}$, $\hat{\cG}$ and $\hat{I}_1,...,\hat{I}_{\hat{K}}$ that satisfy 
\begin{enumerate}
    \vspace{-.05in}
    \item[(1)] there exists a full row rank polynomial decoder $\hat{f}(\cdot)$ such that $\hat{f}\circ \hat{g}(X)=X$ for all $X\in \cD\cup\cD^{I_1}\cup ...\cup\cD^{I_K}$;
    \item[(2)] the induced distribution on $\hat{U}:=\hat{g}(X)$ by $X\in \cD$ factorizes with respect to $\hat{\cG}$;
    \item[(3)] the induced distribution on $\hat{U}$ by $X\in\cD^{I_k}$ where $k\in [K]$ changes the distribution of $\hat{U}_{T_{\hat{\cG}}(\hat{I}_k)}$ but does not change the joint distribution of non-descendants of $\hat{U}_{T_{\hat{\cG}}(\hat{I}_k)}$ in $\hat{\cG}$;
    \item[(4)] $[\hat{p}]\subseteq T_{\hat{\cG}}(\hat{I}_1) \cup...\cup T_{\hat{\cG}}(\hat{I}_{\hat{K}})$;
    \item[(5)] $\hat{\cG}$ has topological order $1,...,\hat{p}$;
    \item[(6)] the transitive closure $\cT\cS(\hat{\cG})$ of the DAG $\hat{\cG}$ is the sparsest amongst all solutions that satisfy (1)-(5);
    \item[(7)] the DAG $\hat{\cG}$ is the sparsest amongst all solutions that satisfy (1)-(6);
    \vspace{-.05in}
\end{enumerate}
We will use these observations in Appendix \ref{app:discrepancy-vae-consistency} to develop a discrepancy-based VAE and show that it is consistent in the limit of infinite data.
\end{remark}

\begin{proof}
We first show that there is a solution to (1)-(7). For this, it suffices to show that there is a solution to (1)-(5). 
Then since $\hat{p}$ and $\hat{\cG}$ are discrete, one can find the solution to (1)-(7) by searching amongst all solutions to (1)-(5) such that $\hat{p}$ is the smallest and (6)-(7) are satisfied. 
Assume without loss of generality that $\cG$ has topological order $1,...,p$.
Then $\hat{p}=p$, $\hat{g}=f^{-1}$, $\hat{\cG}=\cG$, and $\hatI_k = I_k$ for $k \in [K]$ satisfy (1)-(5). 

Next we show that any solution must recover $\hat{p}=p$ and $\langle \hatcG, \hatI_1, \ldots, \hatI_K \rangle$ that is in the same CD equivalence class as $\langle \cG, I_1, \ldots, I_K \rangle$. 
Since we solve for the smallest $\hat{p}$, the former paragraph also implies that $\hat{p} \leq p$. 
By the proof of Lemma~\ref{lm:linear-identify}, (1) guarantees that $\hat{p} \geq p$. 
Therefore it must hold that $\hat{p}=p$.

Since we solve for the sparsest transitive closure, the first paragraph implies that $\cT\cS(\hat{\cG}) \subset \cT\cS(\cG)$. 
Also by the proof of Lemma~\ref{lm:linear-identify}, $\hatU$ can be written as an invertible linear mixing of $U$. 
Then (3)-(5) guarantee that Condition 1 and Condition 2 in Step 1 in the proof of Theorem~\ref{thm:1} are satisfied. 
Then by the proof of Step 2 in the proof of Theorem~\ref{thm:1}, we have $\cT\cS(\cG) \subset \cT\cS(\hatcG)$. 
Therefore, it must hold that $\cT\cS(\hatcG) = \cT\cS(\cG)$.

Lastly, by (2) and (5), we obtain that $\hatcG$ satisfies Condition 1 and Condition 2 in Theorem~\ref{thm:2}. 
Therefore by the proof of Theorem~\ref{thm:2}, we obtain $\cG \subset \hatcG$. 
Again, by the first paragraph and the fact that the sparsest transitive closure satisfies $\cT\cS(\hat{\cG}) = \cT\cS(\cG)$, we obtain that the sparsest $\hat{\cG}$ with this transitive closure must satisfy $\hat{\cG}\subset \cG$, and thus $\hat{\cG}=\cG$. 
With this result, it is easy to see that $\hatI_k = I_k$ for all $k \in [K]$, as $I_k$ changes the distribution of $\hat{U}_{T_\cG(I_k)}$ but does not change the joint distribution of $\hat{U}_{[p] \setminus \bde_\cG(T_\cG(I_k))}$. 

Therefore we can recover $p$ and the CD equivalence class of $\langle \cG,I_1,...,I_K\rangle$ by solving (1)-(7). 
Note that this proof assumes the topological order of $\cG$ is $1,...,p$, and therefore it does not violate the fact that $\cG,I_1,...,I_K$ cannot be recovered exactly.
\end{proof}
\subsection{Proof of Theorem~\ref{thm:3}}\label{app:thm3-proof}
\label{subsec_thm3}


Now, we will show that recovering $\langle U, \cG, I_1, \ldots, I_K \rangle$ up to Theorem~\ref{thm:1} is sufficient for predicting the effect of combinatorial interventions..

\newtheoremstyle{named}{}{}{\itshape}{}{\bfseries}{.}{.5em}{\thmnote{#3}}
\theoremstyle{named}
\newtheorem*{namedtheorem}{Theorem}
\begin{namedtheorem}[Theorem 3]
Letting $\langle \hatU, \hatcG, \hat{I}_1,...,\hat{I}_K \rangle$ be the solution identified in the proof of Theorem~\ref{thm:1}.
%
Then the interventional distribution $\bbP^\cI$ for any combinatorial intervention $\cI \subset \{ I_1,...,I_K \}$ is given by Eq.~\eqref{eq:combinatorial-intervention}, i.e., we can generate samples $X$ from the distribution $X=f(U), U \sim \bbP^\cI$.
\end{namedtheorem}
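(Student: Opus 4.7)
\textbf{Proof proposal for Theorem~\ref{thm:3}.} The plan is to push the true combinatorial interventional distribution on $U$ through the CD-equivalence map to obtain exactly the formula in Eq.~\eqref{eq:combinatorial-intervention}. First, I would invoke Theorem~\ref{thm:1} (or Theorem~\ref{thm:2}, if adjacency is also desired) to obtain a permutation $\pi$ of $[p]$ and constants $\lambda_i \neq 0$, $b_i$ such that the recovered model $\langle \hatU, \hatcG, \hatI_1, \ldots, \hatI_K\rangle$ is CD-equivalent to the ground-truth $\langle U, \cG, I_1, \ldots, I_K\rangle$: namely $\hatU_i = \lambda_{\pi(i)} U_{\pi(i)} + b_{\pi(i)}$, $\hatcG = \cG_\pi$, and $\hatI_k = (I_k)_\pi$. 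In particular, $T_{\hatcG}(\hatI_k) = \pi^{-1}(T_\cG(I_k))$ and $\pa_{\hatcG}(\pi^{-1}(j)) = \pi^{-1}(\pa_\cG(j))$ for every $j$.

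Next, I would write out the ground-truth distribution of $X$ under the combinatorial intervention $\cI$. By modularity of the SCM, $U \sim \bbP^\cI$ factorizes as
\[
\bbP^\cI_U
=
\prod_{I \notin \cI} \bbP(U_{T_\cG(I)} \mid U_{\pa_\cG(T_\cG(I))})
\cdot
\prod_{I \in \cI} \bbP^I(U_{T_\cG(I)} \mid U_{\pa_\cG(T_\cG(I))}),
\]
and $X = f(U)$. The key calculation is then to translate each factor through the change of variables $\hatU_i = \lambda_{\pi(i)} U_{\pi(i)} + b_{\pi(i)}$. Because this is an element-wise affine bijection that respects the permutation $\pi$ between indices of $U$ and $\hatU$, the conditional $\bbP(U_{T_\cG(I)} \mid U_{\pa_\cG(T_\cG(I))})$ becomes (up to the Jacobian factor, which cancels when assembled into the full joint density) $\bbP_{\hatU}(\hatU_{T_{\hatcG}(\hatI)} \mid \hatU_{\pa_{\hatcG}(T_{\hatcG}(\hatI))})$, and similarly $\bbP^I(\cdot)$ becomes $\bbP_{\hatU}^{\hatI}(\cdot)$. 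This is precisely the product appearing in Eq.~\eqref{eq:combinatorial-intervention}. Therefore the distribution of $\hatU \sim \bbP_{\hatU}^{\hat\cI}$ agrees with the push-forward of $\bbP_U^\cI$ through the affine map. Finally, since $\hatf(\hatU) = f(U)$ pointwise, sampling $X = \hatf(\hatU)$ for $\hatU \sim \bbP_{\hatU}^{\hat\cI}$ produces samples from $\bbP_X^\cI$, as required.

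The main obstacle is not deep but bookkeeping: one needs to carefully verify that (a) the factorization in Eq.~\eqref{eq:combinatorial-intervention} is well-posed (the matching between $\cI$ and $\hat\cI = \{\hatI : I \in \cI\}$ is consistent with a valid DAG factorization on $\hatcG$, which follows from $\hatcG = \cG_\pi$), and (b) that modularity transfers under the CD-equivalence, i.e.\ the ``observational'' conditionals $\bbP_{\hatU}(\hatU_j \mid \hatU_{\pa_{\hatcG}(j)})$ and the ``interventional'' conditionals $\bbP_{\hatU}^{\hatI}(\hatU_j \mid \hatU_{\pa_{\hatcG}(j)})$ recovered from the single-intervention data are indeed the correct factors to combine. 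Both are immediate from the element-wise affine nature of the CD-equivalence: the mechanisms of $\hatU$ are in one-to-one correspondence with those of $U$, so changing the $j$-th mechanism of $U$ from observational to interventional is exactly the same as changing the $\pi^{-1}(j)$-th mechanism of $\hatU$. Once this correspondence is laid out explicitly, the theorem follows by direct substitution.
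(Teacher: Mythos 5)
Your proposal rests on a premise the paper explicitly rules out: you assume Theorem~\ref{thm:1} (or \ref{thm:2}) delivers $\hatU_i = \lambda_{\pi(i)} U_{\pi(i)} + b_{\pi(i)}$, i.e.\ an element-wise affine recovery of $U$ up to permutation. But those theorems only identify $\langle \cG, I_1,\ldots,I_K\rangle$ up to CD-equivalence, not $U$ itself; Section~\ref{sec:44} states directly that with general soft interventions ``we still cannot identify $U$ in an element-wise fashion'' (one can always add $U_1$ to $U_2$ in the example $\cG = \{1\to 2\}$). What the proof of Theorem~\ref{thm:1} actually produces is $\hatU = U\hatGamma + \hatc$ where, by Lemma~\ref{lemma:matrix-transitive-closure}, $\hatGamma_{\tau(j),l}$ may be nonzero whenever $\tau(l)\in\de_\cG(\tau(j))$ --- so each $\hatU_l$ is a linear combination of $U_{\tau(l)}$ together with its \emph{ancestors} in $\cG$, and $\hatcG$ is only the transitive closure $\cT\cS(\cG_\tau)$. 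Your ``direct substitution'' of conditionals through an element-wise bijection therefore never gets off the ground, and the part of your argument you dismiss as ``bookkeeping'' is precisely where the actual content of the theorem lives.

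The paper's proof confronts this head-on with three equalities: (1) $\bbP^{\hat\cI}(\hatU_i\mid\hatU_{\pa_{\hatcG}(i)}) = \bbP^{\hat\cI}(\hatU_i\mid\hatU_{\an_{\hatcG}(i)})$ because $\hatcG$ is transitively closed, so parents and ancestors coincide; (2) $\bbP^{\hat\cI}(\hatU_i\mid\hatU_{\an_{\hatcG}(i)}) = \bbP^{\cI}(U_i\mid U_{\an_{\cG}(i)})$ because the extra linear terms in $\hatU_i$ involve only ancestors of $\tau(i)$, which are deterministic functions of the conditioning set and can be subtracted off (and the conditioning sets generate the same sigma-algebra); and (3) $\bbP^{\cI}(U_i\mid U_{\an_{\cG}(i)}) = \bbP^{\cI}(U_i\mid U_{\pa_{\cG}(i)})$ by the Markov property. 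It is the conspiracy between the ancestor-only mixing structure of $\hatGamma$ and the transitive closure of $\hatcG$ that makes the factorization in Eq.~\eqref{eq:combinatorial-intervention} correct despite the non-identifiability of $U$. To repair your proof you would need to replace the element-wise change of variables with this conditioning argument; as written, the proposal proves a statement about a representation you do not have.
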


\begin{proof}

     Since $\cI$ contains interventions with different intervention targets, for each $i \in [p]$, we can define $\bbP^{\hat{\cI}}(\hatU_i \mid \hatU_{\pa_{\hatcG}(i)})$ as $\bbP^{\hatI_k}(\hatU_i\mid \hatU_{\pa_{\hatcG}(i)})$ if $i = T_{\hat\cG}(\hatI_k)$ for some $I_k \in \cI$ and otherwise $\bbP(\hatU_i \mid \hatU_{\pa_{\hatcG}(i)})$. 
     Using this definition, we define the joint distribution of $\hatU$ as $\bbP^{\hat{\cI}}(\hatU)=\prod_{i=1}^p \bbP^{\hat{\cI}}(\hatU_i\mid \hatU_{\pa_{\hat{\cG}}(i)})$. 
     In the following we show that $\bbP^{\hat{\cI}}(\hatU) = \bbP^\cI(U)$ in the sense that $\bbP^{\hat{\cI}}\big(\hatU = \hat{f}^{-1}(x)\big)= \bbP^\cI(U = f^\inv(x))$ for all $x \in \bbR^{n}$.
    
    Our proof combines the following equalities.
    For any $i \in [p]$, we have
    \begin{itemize}
        \item[] \textit{Equality 1:} $\bbP^{\hat{\cI}}(\hatU_i \mid \hatU_{\pa_\hatcG(i)}) = \bbP^{\hat{\cI}}(\hatU_i \mid \hatU_{\an_\hatcG(i)})$,

        \item[] \textit{Equality 2:} $\bbP^{\hat{\cI}}(\hatU_i \mid \hatU_{\an_\hatcG(i)}) = \bbP^\cI(U_i \mid U_{\an_{\cG}(i)})$,

        \item[] \textit{Equality 3:} $\bbP^\cI(U_i \mid U_{\an_\cG(i)}) = \bbP^\cI(U_i \mid U_{\pa_\cG(i)})$.
    \end{itemize}

    \textit{Proof of Equality 1.} This follows by definition of $\hatcG$, since it is transitively closed, we have $\pa_\hatcG(i) = \an_\hatcG(i)$.

    
    \textit{Proof of Equality 2.} By similar arguments below Eq.~\eqref{thm2:eq1}, we have $\hat{U}=U\hatGamma+\hat{c}$ for an invertible matrix $\hatGamma$, where $\hatGamma_{\tau(j),l} = 0$ for any $\tau(l) \not\in \de_\cG(\tau(j))$.
    Therefore we can recover $U_{\an_\cG(i)}$ by linear transforming $U  \hatGamma_{:,\an_\cG(i)}$ and vise versa. 
    We can also recover $U_i$ by subtracting linear terms of $U_{\an_\cG(i)}$ from $U \hatGamma_{:,i}$.    
    
    Note also, since $\cT\cS(\cG) = \cT\cS(\hatcG)$, there must be $\an_\hatcG(i) = \an_\cG(i)$. 
    Thus
    \begin{align*}
    \bbP^{\hat{\cI}}(\hatU_i \mid \hatU_{\an_\hatcG(i)}) & = \bbP^\cI(U\hatGamma_{:,i}\mid U\hatGamma_{:,\an_\hatcG(i)}) 
    \\
    & = \bbP^\cI(U\hatGamma_{:,i}\mid U\hatGamma_{:,\an_\cG(i)}) 
    \\
    & = \bbP^\cI(U\hatGamma_{:,i}\mid U_{\an_\cG(i)}) = \bbP^\cI(U_i\mid U_{\an_\cG}(i)).
    \end{align*}

    \textit{Proof of  Equality 3.} Follows from the Markov property on $U$.
    
    %
    %
    %
    %

    Combining these equalities, we have $\bbP^{\hat{\cI}}(\hatU_i \mid \hatU_{\pa_\hatcG(i)}) = \bbP^\cI(U_i \mid U_{\pa_\cG(i)})$ for all $i \in [p]$. Thus $\bbP^{\hat{\cI}}(\hatU)=\prod_{i=1}^p \bbP^{\hat{\cI}}(\hat{U}_i\mid \hat{U}_{\pa_{\hat{\cG}}(i)})=\prod_{i=1}^p \bbP^\cI({U}_i\mid {U}_{\pa_{{\cG}}(i)})=\bbP^\cI(U)$.
   Therefore the procedure in Section~\ref{sec:44} generates $X$ from the same distribution as $X=f(U),U\sim\bbP^\cI$.
\end{proof}

\section{Details on Discrepancy-based VAE}
\label{app:discrepancy-vae-details}

In previous sections, we have shown that the data-generating process in Section \ref{sec:setup} is identifiable up to equivalence classes. 
However, the proofs (Appendix \ref{app:useful-lemmas}, \ref{app:identifiability}) do not lend themselves to an algorithmically efficient approach to learning the latent causal variables from data. 
Therefore, we propose a discrepancy-based VAE in Section \ref{sec:VAE}, which inherits scalable tools of VAEs that can in principle learn flexible deep latent-variable models. 
In this framework, Eq.~\eqref{eq:ELBOunassoc} can be computed and optimized efficiently using the reparametrization trick \cite{kingma2013auto} and gradient-based optimizers.

\subsection{Maximum Mean Discrepancy}\label{app:mmd-definition}

We recall the definition of the maximum mean discrepancy measure between two distributions, and its empirical counterpart.

\begin{definition}
    Let $k$ be a positive definite kernel function and let $\cH$ be the reproducing kernel Hilbert space defined by this kernel.
    Given distributions $\bbP$ and $\bbP'$, we define
    \[
    \MMD(\bbP, \bbP') := \sup_{f \in \cH} \left( \bbE_{\bbP}[f(X)] - \bbE_{\bbP'}[f(X)] \right)
    \]
\end{definition}

The following empirical counterpart is an unbiased estimate of the squared MMD, see Lemma 6 of \cite{gretton2012kernel}.

\begin{definition}
    Let $k$ be a positive definite kernel.
    Let $\{ X_{(i)} \}_{i=1}^m$ be samples from $\bbP$ and $\{ X'_{(i)} \}_{i=1}^m$ be samples from $\bbP'$.
    We define
    \begin{align*}
    \hatMMD^2(\{ X_{(i)} \}_{i=1}^m, \{ X'_{(i)} \}_{i=1}^m)
    =
    \frac{1}{m (m-1)} \sum_{i=1}^m \sum_{j \neq i} k(X_i, X_j)
    +
    \frac{1}{m (m-1)} \sum_{i=1}^m \sum_{j \neq i} k(X'_i, X'_j)
    \\
    -
    \frac{2}{m^2} \sum_{i=1}^m \sum_{j=1}^m k(X_i, X'_j)
\end{align*}
\end{definition}

\subsection{Discrepancy VAE Details}

We walk through the details of this model in this section, where we illustrate it using two types of interventions, namely do interventions and shift interventions.

\textbf{Noiseless vs.~Noisy Measurement Model with General SCMs.}
Recall that each latent causal variable $U_i$ is a function of its parents in $\cG$ and an exogenous noise term $Z_i$. All the $Z_i$'s are mutually independent. The overall model can be defined (recursively) as
\begin{equation}\label{aeq:vae1}
    \begin{aligned}
    U_j & = s_j(U_{\pa_{\cG}(j)}, Z_j), \\
    X & = f(U_1,...,U_p).
    \end{aligned}
\end{equation}
In particular, there exists a function $s^\full_\varnothing$ such that $U = s^\full_\varnothing(Z)$.
We model each intervention $I$ as a set of intervention targets $T(I)$ and a vector $a^I$. Under $I$, the observations $X$ are generated by
\begin{equation}\label{aeq:vae2}
    \begin{aligned}
    U_j^I & = \begin{cases}
    s_j(U_{\pa_{\cG}(j)}^I, Z_j)\kron_{j\notin T(I)}+a_j^I\kron_{j\in T(I)},\text{~for~do~intervention},\\
    s_j(U_{\pa_{\cG}(j)}^I, Z_j) + a_j^I\kron_{j\in T(I)},\text{~for~shift~intervention},
    \end{cases}
    \\
    X^I & = f(U_1^I,...,U_p^I). 
    \end{aligned}
\end{equation}
As above, there exists a function $s^\full_I$ such that $U^I = s^\full_I(Z)$.
Note that here we assume that the measurements (sometimes called ``observations'' in the literature\footnote{We use ``measurements'' to distinguish from the observational distribution defined for $U$.}) $X$ are \textit{noiseless}. 
Our theoretical results are built upon noiseless measurements. 
In practice, however, one can consider the \textit{noisy} measurement model in which $X=f(U)+\epsilon$ (resp. $X^I=f(U^I)+\epsilon$), where $\epsilon$ is some measurement noise independent of $U$. 

We leave as future work to prove consistency under the noisy measurement model. \cite{khemakhem2020variational} established identifiability results of the noisy measurement model, when the latent variables conditioned on additionally observed variables follow a factorized distribution in an  exponential family. 
Their techniques can be potentially used to generalize our results to the noisy measurement model; however, further assumptions on the mechanisms $s_i$'s will be needed.

\textbf{Discrepancy-based VAE.} 
We use one \textit{decoder} 
\begin{equation*}
    p_{\theta}(X|U)
\end{equation*}
parameterized by $\theta$ to approximate both $X=f(U)$ and $X^I=f(U^I)$ in the noiseless measurement model (or $X=f(U)+\epsilon$ and $X^I=f(U^I)+\epsilon$ in the noisy measurement model). As for the encoder, we do not directly learn the posteriors $\bbP(U|X)$ and $\bbP(U^I|X^I)$. Instead, we approximate one posterior $\bbP(Z|X)$ and then use Eq.~\eqref{aeq:vae1}, \eqref{aeq:vae2} to transform $Z$ into $U$, $U^I$ respectively. This is done by two \textit{encoders} for $Z$ and $(T(I),a^I)$ parameterized by $\phi$ and denoted as
\begin{align*}
    q_{\phi}(Z|X), (T_\phi(I),a_\phi(I))
\end{align*}
The dimension $p$ of $Z$ is set as a hyperparameter. 
Note that the procedure of learning a posterior $\bbP(Z|X)$ in the observational distribution and then mapping to $U^I$ using Eq.~\eqref{aeq:vae2} can be regarded as learning the counterfactual posterior of $\bbP(U^I|X)$. 

In the following, to better distinguish data from observational and interventional distributions, we use $X^\varnothing,U^\varnothing$ instead of $X,U$ to denote samples generated by Eq.~\eqref{aeq:vae1}. 
After encoding $X^\varnothing$ and $I$ into $Z$ and $(T(I),a^I)$ respectively, we parameterize the causal mechanisms $s_j$'s in Eq.~\eqref{aeq:vae1},~\eqref{aeq:vae2} as neural networks (e.g., multi-layer perceptrons or linear layers). We absorb the paramterizations of $s_j$'s into $\theta$ and denote
\begin{align*}
    p_{\theta,\varnothing}(X^\varnothing|Z) 
    &= 
    p_{\theta}\big( X^{\varnothing} \mid U^\varnothing = s^\full_\varnothing(Z) \big), 
    \\
    p_{\theta,I}(X^I|Z) 
    &= 
    p_{\theta}\big( X^I \mid U^I = s^\full_I(Z) \big).
\end{align*}
Note that in implementation, to make sure $U_j$ only depends on its parents $U_{\pa_{\cG}(j)}$, one can train an adjacency matrix $A$ that is upper-triangular up to permutations and then apply any layers after individual rows of matrix $U\otimes A$\footnote{Here $\otimes$ denotes the Kronecker product.}. 
Since identifiability can be only up to permutations of latent nodes, one can simply use an upper-triangular adjacency matrix $A$.

\section{Lower Bound to Paired Log-Likelihood}\label{app:lower-bound}

In this section, we consider the \textit{paired} setting, in which we have access to samples from the joint distribution $\bbP(X^\varnothing, X^I)$.
To discuss counterfactual pairs, we must introduce structure beyond the structure described in Section~\ref{sec:setup}.
In particular, in the observational setting, assume that the latent variables $U^\varnothing$ are generated from a structural causal model with exogenous noise terms $Z$.
This implies that there is a function $g_\varnothing$ such that $U^\varnothing = g_\varnothing(Z)$.
Similarly, under intervention $I$, assume there is a function $g_I$ such that $U^I = g_I(Z)$.
Then, given a distribution $\bbP(Z)$, the joint distribution $\bbP(X^\varnothing, X^I)$ is simply the induced distribution under the maps $X^\varnothing = f(U^\varnothing)$ and $X^I = f(U^I)$.

Since $X^\varnothing$ and $X^I$ are independent conditioned on $Z$, we have
\begin{equation}\label{eq:ELBOassoc}
    \begin{aligned}
        \log \bbP(X^{\varnothing}, X^{I}) 
        \geq
        \mathbb{E}_{\bbP(X^{\varnothing}, X^{I})} \big[ 
        \mathbb{E}_{q_\phi(Z|X^{\varnothing})} \log p_{\theta,\varnothing}(X^{\varnothing} \mid Z) 
        +
        \mathbb{E}_{q_\phi(Z|X^{\varnothing})} \log p_{\theta,I}(X^{I}|Z)
        \\
        - 
        D_\KL \left( q_\phi(Z|X^{\varnothing}) \| p(Z) \right) 
        \big]
    \end{aligned}
\end{equation}

We have the following result on the loss function in Eq.~\eqref{eq:ELBOunassoc}.
\begin{proposition}
Let $k$ be a Gaussian kernel with width $\epsilon$, i.e., $k(x, y) = \exp\left(-\frac{\|x - y\|_2^2}{2\epsilon^2}\right)$.
Let $p_{\theta,I}(X^{I} \mid U)$ be Gaussian with mean $\mu_\theta^I(U)$ and a fixed variance $\sigma^2$.
Then, for $\epsilon$ sufficiently large, for $\alpha$ given in the proof, and for some constant $c$ depending only on $\sigma$ and data dimension $d$,
\begin{align*}
\mathbb{E}_{\bbP(X^\varnothing, X^I)} &\left[\mathbb{E}_{q_\phi(Z|X^{\varnothing})} \log p_{\theta,I}(X^{I}|Z) \right] \geq 
- 
\alpha \cdot \MMD \left( p_{\theta,I}(X^I), \bbP^I(X^I) \right) + c.
\end{align*}
Thus, up to an additive constant, $\cL_{\theta,\phi}^{\alpha,1,0}$ lower bounds the paired-data ELBO in Eq.\eqref{eq:ELBOassoc} and by extension the paired-data log-likelihood $\log \bbP(X^{\varnothing}, X^{I})$.
\end{proposition}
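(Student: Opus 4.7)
I would begin by expanding the Gaussian log-likelihood explicitly:
\[
\log p_{\theta,I}(X^I \mid Z) = -\tfrac{1}{2\sigma^2} \|X^I - \mu_\theta^I(Z)\|^2 - \tfrac{d}{2}\log(2\pi\sigma^2),
\]
so the desired lower bound reduces to an upper bound of the form $\mathbb{E}[\|X^I - \mu_\theta^I(Z)\|^2] \leq 2\sigma^2 \alpha \cdot \MMD(p_{\theta,I}(X^I), \bbP^I(X^I)) + c'$, with $c'$ and the Gaussian-normalization constant absorbed into $c$. Next, I would introduce the model sample $\hat X^I \sim p_{\theta,I}(\cdot \mid Z)$, coupled to $X^I$ via the shared $X^\varnothing$. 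A short conditional-expectation calculation, using that $\hat X^I - \mu_\theta^I(Z) \mid Z$ has mean zero, variance $\sigma^2 I_d$, and is independent of $X^I$ given $Z$, yields $\mathbb{E}[\|X^I - \mu_\theta^I(Z)\|^2] = \mathbb{E}[\|X^I - \hat X^I\|^2] - d\sigma^2$, so the task reduces to controlling the expected squared coupling distance between $X^I$ and $\hat X^I$ by the marginal MMD.

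The main technical step is a large-bandwidth analysis of the Gaussian MMD. Taylor-expanding $k(x,y) = \exp(-\|x-y\|^2/(2\epsilon^2)) = 1 - \|x-y\|^2/(2\epsilon^2) + O(\epsilon^{-4})$ and substituting into $\MMD^2 = \mathbb{E}[k(X,X')] + \mathbb{E}[k(Y,Y')] - 2\mathbb{E}[k(X,Y)]$, the constant terms cancel and the leading-order contribution equals $\|\mathbb{E}_{\bbP^I}[X^I] - \mathbb{E}_{p_{\theta,I}}[\hat X^I]\|^2 / \epsilon^2$. I would then decompose $\mathbb{E}[\|X^I - \hat X^I\|^2] = \|\mathbb{E} X^I - \mathbb{E}\hat X^I\|^2 + \mathrm{Var}(X^I - \hat X^I)$, so that the mean-difference part is directly bounded by $\epsilon^2 \MMD^2$, and taking $\alpha$ proportional to $\epsilon$ controls that contribution. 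The variance part, which involves second moments of both marginals and the coupling cross-term (bounded by Cauchy--Schwarz), is handled by the next-order $O(\epsilon^{-4})$ terms in the kernel expansion, with the resulting moment-dependent coefficients absorbed into the specific $\alpha$ declared in the proof.

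The main obstacle is precisely the gap between the paired coupling and the marginal MMD: the MMD cannot see the joint distribution of $(X^I, \hat X^I)$, while the log-likelihood does. The resolution is to take $\alpha$ large enough---depending polynomially on the bandwidth $\epsilon$ and on moment bounds for $\bbP^I$ and $p_{\theta,I}(X^I)$, all packaged into the $\alpha$ specified in the proof---so that the MMD upper bound is loose enough to dominate this coupling-induced slack, while $c$ collects only $-\tfrac{d}{2}\log(2\pi\sigma^2)$ together with the $+\tfrac{d}{2}$ from the auxiliary-sample identity, leaving $c$ dependent only on $\sigma$ and $d$ as claimed. The care required is in partitioning what lands in $\alpha$ versus $c$: all distribution- and model-dependent quantities must sit in the coefficient of the MMD (which shrinks with sample size or training), never in the additive constant.
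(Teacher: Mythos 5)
Your opening moves (expanding the Gaussian log-likelihood, reducing the claim to an upper bound on $\bbE\|X^I - \mu_\theta^I(Z)\|_2^2$, and the identity $\bbE\|X^I-\mu_\theta^I(Z)\|_2^2 = \bbE\|X^I-\hat{X}^I\|_2^2 - d\sigma^2$) are correct and match the paper's first step. The central step, however, has a genuine gap, and your own large-bandwidth computation exposes it: to leading order $\MMD^2 \approx \|\bbE_{\bbP^I}[X^I] - \bbE_{p_{\theta,I}}[\hat{X}^I]\|_2^2/\epsilon^2$, so at large $\epsilon$ the MMD detects essentially only the mean discrepancy, while $\bbE\|X^I - \hat{X}^I\|_2^2$ equals that squared mean difference \emph{plus} the total variance of $X^I - \hat{X}^I$. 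That variance term is strictly positive in general even when the two marginals coincide (take $X^I$ and $\hat{X}^I$ independent with the same nondegenerate law: $\MMD=0$ but $\bbE\|X^I-\hat{X}^I\|_2^2>0$). No choice of $\alpha$, however large and however dependent on moment bounds, can make $\alpha\cdot\MMD$ dominate a strictly positive quantity when $\MMD=0$; that term can only be pushed into the additive constant, which would then depend on $\bbP^I$ and the model rather than only on $\sigma$ and $d$. So the proposed resolution --- ``take $\alpha$ large enough'' --- does not close the coupling-vs-marginal gap you yourself identified, and your accounting of $c$ as only $-\tfrac{d}{2}\log(2\pi\sigma^2)+\tfrac{d}{2}$ is inconsistent with your own decomposition.

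The paper closes this gap by a different and more elementary device that never passes through a mean/variance decomposition: it works with the \emph{empirical} MMD over the $m$ paired samples. Dropping the two nonnegative self-similarity averages gives $\hatMMD^2 \geq -\tfrac{2}{m^2}\sum_{i,j} k(X^I_{(i)}, \hat{X}^I_{(j)})$; linearizing the Gaussian kernel for large $\epsilon$ via $e^{-u}\leq 1-u/2$ gives $\hatMMD^2 \geq -2 + \tfrac{1}{2m^2\epsilon^2}\sum_{i,j}\|X^I_{(i)}-\hat{X}^I_{(j)}\|_2^2$; and then --- this is the step your route is missing --- since every summand is nonnegative, the all-pairs sum dominates the diagonal sum $\sum_i\|X^I_{(i)}-\hat{X}^I_{(i)}\|_2^2$, which is exactly the paired reconstruction error appearing in the log-likelihood. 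The coupling-vs-marginal slack is thereby absorbed into the distribution-free additive constant $-2$ (plus the linearization error), not into $\alpha$. Note that this trick is intrinsically finite-sample: the all-pairs average exceeds the diagonal average only by a factor of $1/m$, which is why the resulting $\alpha$ carries an $m$-dependence, and it has no direct population-level analogue --- which is precisely why your population-level Taylor expansion runs into the variance obstruction.
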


\begin{proof}
By the choice of a Gaussian distribution for $p_{\theta,I}(X^I \mid U)$, we have
\begin{align}
\log p_{\theta,I}(X^{I} \mid Z) 
=
\log p_{\theta}(X^{I} \mid U^I = s^\full_I(Z))
= 
c -\frac{1}{2\sigma^2} \|X^{I} - \mu_\theta^{I}(U) \|_2^2, 
\end{align}
where $c$ is a constant depending only on $\sigma$ and data dimension $d$. 
Let $ \{ (X_{(i)}^\varnothing, X_{(i)}^I) \}_{i=1}^m$ be independent and identically distributed according to $\bbP(X^\varnothing, X^I)$.
Then

\begin{equation}\label{eq:mmdderiv}
\begin{aligned}
& \nonumber\mathbb{E}_{\bbP(X^{\varnothing}, X^{I})} \left[\mathbb{E}_{q_\phi(Z|x^{(0)})} [\log p_{\theta,I}(X^{I}|Z)] \right] \\
={}& 
\mathbb{E}_{\bbP(X^{\varnothing}, X^{I})} \left[\mathbb{E}_{q_\phi(Z_i|X_i^{\varnothing})}\left[\frac{1}{m}\sum_{i=1}^m \log p_{\theta,I}(X_i^{I}|Z_i)\right]\right]
\\
={}& c-\frac{1}{2\sigma^2}
\mathbb{E}_{\bbP(X^{\varnothing}, X^{I})} \left[
\mathbb{E}_{q_\phi(Z_{(i)}|X_{(i)}^{\varnothing})} \left[
\frac{1}{m}\sum_{i=1}^m \|X_{(i)}^{I} - \mu_\theta^I(U_{(i)}) \|_2^2
\right]
\right]
\end{aligned}
\end{equation}

Now, for the empirical MMD, we have
\begin{align*}
&~~\hatMMD^2\left(\{X_{(i)}^{I}\}_{i=1}^m, \{\hat{X}^{I}_{(i)}\}_{i=1}^m\right) 
\\
&= 
\frac{1}{m(m-1)} \sum_{i=1}^m\sum_{j\neq i} \exp\left(-\frac{\|X_{(i)}^{I} - X_{(j)}^{I}\|_2^2}{2\epsilon^2}\right) 
+ 
\frac{1}{m(m-1)} \sum_{i=1}^m\sum_{j\neq i} \exp\left( -\frac{\|\hat{X}_{(i)}^{I} - \hat{X}_{(j)}^{I}\|_2^2}{2\epsilon^2} \right)
\\
&\qquad\quad -\frac{2}{m^2}\sum_{i=1}^m \sum_{j=1}^m \exp\left( -\frac{\|{X}_{(i)}^{I} - \hat{X}_{(j)}^{I}\|_2^2}{2\epsilon^2} \right)
\\
& \geq -  \frac{2}{m^2}\sum_{i=1}^m \sum_{j=1}^m \exp\left( -\frac{\|{X}_{(i)}^{I} - \hat{X}_{(j)}^{I}\|_2^2}{2\epsilon^2} \right)
\\
&\geq -2 + \frac{1}{2m^2\epsilon^2} \sum_{i=1}^m \sum_{j=1}^m \|{X}_{(i)}^{I} - \hat{X}_{(j)}^{I}\|_2^2
\\
&\geq -2 + \frac{1}{2m^2\epsilon^2} \sum_{i=1}^m \|{X}_{(i)}^{I} - \hat{X}_{(i)}^{I}\|_2^2,
\end{align*}
where we have used the positivity of the exponential function and for the penultimate inequality used the fact that $\epsilon$ is large enough and that $e^{-x} \leq 1 - x/2$ for $x$ sufficiently small.
Substituting into \eqref{eq:mmdderiv} yields the theorem, with $\alpha = \frac{1}{2m\sigma^2\epsilon^2}$.
\end{proof}
\section{Consistency of Discrepancy-based VAE}\label{app:discrepancy-vae-consistency}
We consider Discrepancy-based VAE described in the last section. Suppose the conditions in Theorem~\ref{thm:2} is satisfied by the ground-truth model, i.e., it is possible to identify CD-equivalence class in theory.

\subsection{CD-Equivalence Class}


\begin{theorem}
Let $X^\varnothing$, $X^{I_1}$, $\ldots$, $X^{I_K}$ be generated as in Section~\ref{sec:setup}.
Suppose that Assumptions~\ref{assumption:1}, \ref{ass:linear-faith}, and \ref{ass:adjancy} hold.
Define
\begin{align*}
    M_1 
    &= 
    \argmin_{\theta,\phi} \cL_{\theta,\phi}
    \\
    M_2 
    &= 
    \argmin_{\theta,\phi \in M_1} |\cT\cS(\cG_\theta)|
    \\
    \hattheta, \hatphi
    &\in
    \argmin_{\theta, \phi \in M_2} |\cG_\theta|
\end{align*}
for $\cL_{\theta,\phi}$ defined in Equation~\ref{eq:ELBOunassoc}.
Further, suppose that the VAE prior $p(Z)$ is equal to the true distribution over $Z$, that $p_\theta(X \mid U)$ and $q_\phi(Z \mid X)$ are Dirac distributions.
Let $\langle \hatU, \hatcG, \hatI_1, \ldots, \hatI_K \rangle$ be the solution induced by $\hattheta, \hatphi$.

Then $\langle \hatU, \hatcG, \hat{I}_1,...,\hat{I}_K \rangle$ is CD-equivalent to $\langle U, \cG, I_1, ..., I_K\rangle$.
\end{theorem}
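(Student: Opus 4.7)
The plan is to reduce this consistency result to the constructive characterization of the CD-equivalence class given in Remark~\ref{rmk:5}, which enumerates seven conditions (1)--(7) that any latent model must satisfy in order to lie in the equivalence class of $\langle \cG, I_1, \ldots, I_K \rangle$. Each level of the hierarchical argmin will be shown to enforce a subset of these conditions, so the overall minimizer must correspond to a CD-equivalent representation.

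First, I would argue that any $(\theta,\phi) \in M_1$ induces an encoder-decoder pair together with a latent causal model that meets conditions (1)--(5). Under the Dirac-distribution assumption on $p_\theta(X \mid U)$ and $q_\phi(Z \mid X)$, at the optimum the reconstruction term forces $\hat{f}_\theta$ to be a deterministic left-inverse of $q_\phi$ on the support of $\bbP_X^\varnothing$, and the regularization term $\cL^{\reg}$ forces the pushforward of $\bbP_X^\varnothing$ under $q_\phi$ to equal the prior $p(Z)$; composing with the deep SCM then yields $p_{\theta,\varnothing}(X) = \bbP_X^\varnothing$, establishing conditions (1) and (2). The MMD discrepancy term at its optimum forces $\bbP_{\theta,\phi}(\hat X^{\hat I_k}) = \bbP_X^{I_k}$ for every $k \in [K]$, and by the modular construction of the interventional decoder $(A_\theta, s_{\theta,I})$, which only modifies the mechanism at the target node, this translates to condition (3). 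The intervention-target encoder $T_\phi$ combined with matching all $K$ interventional distributions yields (4), and the fixed upper-triangular parameterization of $A_\theta$ directly gives (5).

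Second, the outer two selection steps implement conditions (6) and (7) verbatim: passing to $M_2$ keeps among data-fitting parameters those whose transitive closure $\cT\cS(\hat\cG)$ is sparsest, and the final argmin keeps among $M_2$ the DAG $\hat\cG$ with fewest edges. Invoking Remark~\ref{rmk:5}, the resulting $\langle \hat U, \hat \cG, \hat I_1, \ldots, \hat I_K \rangle$ must then be CD-equivalent to $\langle U, \cG, I_1, \ldots, I_K \rangle$. A complementary existence argument, noting that the ground-truth generative process itself achieves the infimum of $\cL_{\theta,\phi}$, ensures that $M_1$ is nonempty and hence the hierarchical minimization is well-posed.

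The main obstacle is the Dirac-distribution idealization, since $D_{\KL}(\delta \,\|\, p(Z))$ is not defined in the classical sense; one needs either a limiting argument (shrinking the variance of Gaussian encoders to zero) or a reinterpretation of $\cL^{\reg}$ at the optimum as the implicit constraint that the pushforward of $\bbP_X^\varnothing$ under the deterministic encoder equals $p(Z)$. A secondary subtlety is ensuring that the chosen neural architecture parameterizes a function class rich enough to contain a full-row-rank polynomial decoder realizing the ground-truth $f$; this would either be imposed by construction or handled via universal approximation combined with an analyticity argument to invoke Assumption~\ref{assumption:1}. Modulo these points, the argument is essentially a dictionary translating each loss component into the corresponding condition in Remark~\ref{rmk:5}, after which identifiability follows immediately.
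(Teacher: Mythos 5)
Your proposal follows essentially the same route as the paper: decompose the loss in Eq.~\eqref{eq:ELBOunassoc} term by term, show that each term at its optimum forces the corresponding conditions (1)--(5) of Remark~\ref{rmk:5} (ELBO tightness giving distributional match on observational data, MMD vanishing giving the interventional matches, the architecture giving the structural conditions), let the two outer argmins enforce (6)--(7), and then invoke Remark~\ref{rmk:5}. The additional caveats you raise about the Dirac/KL idealization and decoder expressivity are reasonable but do not change the argument, which matches the paper's proof.
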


\begin{proof}

Note that the parameterization of $s_j$, $\cG$, and the induced distributions of $U$ through prior $p(Z)$ using Eq.~\eqref{aeq:vae1},~\eqref{aeq:vae2} satisfy (2),(3) and (5) in Remark~\ref{rmk:5}.

The first two terms in combined in Eq.~\eqref{eq:ELBOassoc} satisfy
\begin{align*}
    & \bbE_{\bbP(X^\varnothing)} \left[
    \bbE_{q_\phi(Z|X^\varnothing)} \log p_{\theta,\varnothing}(X^{\varnothing}|Z)
    - 
    D_{KL} \big( q_{\phi}(Z|X^{\varnothing}) \| p(Z) \big)
    \right] 
    \\
    ={}& 
    \bbE_{\bbP(X^\varnothing)} \left[
    \log p_{\theta,\varnothing}(X^\varnothing)
    -
    D_{KL} \big( q_{\phi}(Z|X^{\varnothing}) \| p_{\theta,\varnothing}(Z|X^{\varnothing}) \big)
    \right] 
    \\
    \leq {} & \bbE_{\bbP(X^\varnothing)} \log p_{\theta,\varnothing}(X^\varnothing)
    \\
    ={}&
    \bbE_{\bbP(X^\varnothing)} \log \bbP(X^\varnothing) 
    - 
    D_{KL} \big( p_{\theta,\varnothing}(X^\varnothing) \| \bbP(X^\varnothing) \big)
    \\
    \leq{}& 
    \bbE_{\bbP(X^\varnothing)} \log \bbP(X^\varnothing),
\end{align*}
where the equality holds if and only if $q_{\phi}(Z|X^{\varnothing})=p_{\theta,\varnothing}(Z|X^{\varnothing})$ and $p_{\theta,\varnothing}(X^\varnothing)=\bbP(X^\varnothing)$. 
On the other hand, since $\MMD(\cdot,\cdot)$ is a valid measure between distributions, we have
\begin{align*}
-\MMD \left( \bbP_{\theta,\phi} \left( \hat{X}^{\hat{I}_k}),\bbP(X^{I_k} \right) \right ) \leq 0,
\end{align*}
where the inequality is satisfied with equality if and only if $\hat{X}^{\hat{I}_k}$ and $X^{I_k}$ are equal in distribution.

Therefore if the learned intervention targets of $I_1,...,I_K$ cover $[\hat{p}]$ and the minimum loss function is not larger that for $\hat{p}=K$, we have the solution satisfy (1)-(5) in Remark~\ref{rmk:5}. Since $\cG$ has the sparsest transitive closure and $\cG$ is the sparsest with this transitive closure, (6)-(7) in Remark~\ref{rmk:5} are also satisfied. Therefore Remark~\ref{rmk:5} guarantees the smallest $\hat{p}\leq K$ satisfying the conditions recovers the CD-equivalence class.
\end{proof}

Note that in practice, it can be hard to ensure that the gradient-based approach returns a DAG $\cG$ that has the sparsest transitive closure and is simultaneously the sparsest DAG with this transitive closure. We instead search for sparser DAGs $\cG$ by penalizing its corresponding adjacency in Eq.~\eqref{eq:ELBOunassoc}.

\subsection{Consistency for Multi-Node Interventions}\label{app:multi-node-consistency}

Theorem~\ref{thm:3} guarantees that in an SCM with additive noises where interventions modify the exogenous noises, if the CD equivalence can be identified, we can extrapolate to unseen combinations of interventions with different intervention targets. In fact, for certain types of interventions, extrapolation to unseen combinations of any interventions is possible. We illustrate this for shift interventions in an SCM with additive Gaussian noises, where an intervention changes the mean of the exogenous noise variable.

For single-node intervention $I$, let $a^I$ denote the corresponding changes in the mean of the exogenous noise variables, i.e.,
\begin{align*}
    {a}_i^{{I}}=\begin{cases}
        \bbE(\epsilon_i^I)-\bbE(\epsilon_i), & i\in T({I}),\\
        0, & i\notin T({I}).
    \end{cases}
\end{align*}
We encode it as $\hat{I}$ with $T(\hat{I})$ containing one element and $\hat{\mathbf{a}}^{\hat{I}}$ being a one-hot vector, where
\begin{align*}
    \hat{a}_i^{\hat{I}}=\begin{cases}
        \hat{a}_i, & i\in T(\hat{I}),\\
        0, & i\notin T(\hat{I}).
    \end{cases}
\end{align*}
We extend this notation for $I$ with potentially multiple intervention targets (i.e., sets $I,\hat{I}$ that contain multiple elements) where ${\mathbf{a}}^{{I}},\hat{\mathbf{a}}^{\hat{I}}$ can be a multi-hot vector.

In the shift intervention case, from Theorem~\ref{thm:3}, we know that the encoded $\hat{\mathbf{a}}^{\hat{I}_1},...,\hat{\mathbf{a}}^{\hat{I}_K}$ satisfy $\hat{\mathbf{a}}^{\hat{I}_k}=M(\mathbf{a}^{I_k})$ in the limit of infinite data, where $M$ is a linear operation with $M(\mathbf{a})_{i}=\Upsilon_{\tau(i),i}a_{\tau(i)}$. Thus for single-node interventions $I_{t(1)},...,I_{t(k)}$ amongst $I_1,...,I_K$, the multi-node intervention $\cI=I_{t(1)}\cup...\cup I_{t(k)}$\footnote{Note that we allow overlapping intervention targets among $I_{t(1)},...,I_{t(k)}$, where $I_{t(1)}\cup...\cup I_{t(k)}$ adds up all the shift values for intervention target $i$.} corresponds multi-hot vector $\mathbf{a}^{\cI}$ that satisfies $M(\mathbf{a}^{\cI})=M(\mathbf{a}^{I_{t(1)}}+...+\mathbf{a}^{I_{t(k)}})=\hat{\mathbf{a}}^{\hat{I}_{t(1)}}+...+\hat{\mathbf{a}}^{\hat{I}_{t(k)}}$. Thus if we encode $\cI$ as $\hat{\mathbf{a}}^{\hat{\cI}} := \hat{\mathbf{a}}^{\hat{I}_{t(1)}}+...+\hat{\mathbf{a}}^{\hat{I}_{t(k)}}$, we can also generate $\hat{X}^{\hat{\cI}}$ from the ground-truth distribution of $X=f(U)$ where $U\sim \bbP_{U}^{\cI}(U)$ following the encoding-decoding process of Fig.~\ref{fig:causaldiscrepancyvae}. 
\section{Discrepancy-based VAE Implementation Details}
\label{app:implementation}

We summarize our hyperparameters in Table \ref{tab:parameters}. Below, we describe where they are used in more detail.
We use a linear structural equation with shift interventions. In practice, due to the nonlinear encoding from the latent $U$ to observed $X$, not much expressive power is lost. Code for our method is at \url{https://github.com/uhlerlab/discrepancy_vae}.

\begin{table}[h]
    \centering
    \begin{center}
    \begin{tabular}{r|c}
        \toprule
        \textbf{Loss function} & \\
        \hline
        Kernel width (MMD) & 200 \\
        Number of kernels (MMD) & 10 \\
        $\lambda$ & 0.1 \\
        $\beta_\MAX$ & 1 \\
        $\alpha_\MAX$ & 1 \\
        \hline
        \textbf{Training} & \\
        \hline
        $t_\MAX$ & 100 \\
        Learning rate & 0.001 \\
        Batch size & 32 \\
        \bottomrule
    \end{tabular}
    \end{center}
    \caption{
    Hyper-Parameters}
    \label{tab:parameters}
\end{table}

\textbf{VAE Parameterization.} As is standard with VAEs, our encoder and decoder are parameterized as neural networks, and the exogenous variables are described via the reparameterization trick. We use a standard isotropic normal prior for $p(Z)$.
%
To encode interventions, the function $T_\phi(\cdot)$ is parameterized as a fully connected neural network, where for differentiable training $T_\phi(C)$ is encoded as a one-hot vector via a softmax function, i.e., $T_\phi(C)_i= \nicefrac{\exp(t T'_\phi(C)_i)}{\sum_{j=1}^p\exp(t T'_\phi(C)_j)}$ for some fully connected $T'_\phi$ and temperature $t>0$. 
During training, we adopt an annealing temperature for $t$. 
In particular, $t = 1$ until half of the epochs elapse, and $t$ is linearly increased to $t_\MAX$ over the remaining epochs.
At test time, the temperature of the softmax is set to a large value, recovering a close-to-true one-hot encoding. 

\textbf{Loss Functions.} We use a mixture of MMD discrepancies, each with a Gaussian kernel with widths that are dyadically spaced \cite{gretton2012kernel}. This helps prevent numerical issues and vanishing gradient issues in training.
The coefficient $\alpha$ of the discrepancy loss term $\cL^\discrep_{\theta,\phi}$ is given the following schedule: $\alpha = 0$ for the first 5 epochs, then $\alpha$ is linearly increased to $\alpha_\MAX$ until half of the epochs elapse, at which point it remains at $\alpha_\MAX$ for the rest of training.
Similarly, the coefficient $\beta$ of the KL regularization term is given the following schedule: $\beta = 0$ for the first 10 epochs, then $\beta$ is linearly increased to $\beta_\MAX$ until half of the epochs elapse, at which point it remains at $\beta_\MAX$ for the rest of training.
%


\textbf{Optimization.} We train using the Adam optimizer, with the default parameters from PyTorch and a learning rate of $0.001$.

\textbf{Biological Data.} For the experiments described in Section~\ref{sec:experiments}, the encoder $q_\phi$ was implemented as a 2-layer fully connected network with leaky ReLU activations and 128 hidden units.  
The intervention encoder $T_\phi$ uses 128 hidden units.
To account for interventions with less samples, we use a batch size of $32$.  
We train for $100$ epochs in total, which takes less than $45$ minutes on a single GPU.

\section{Extended Results on Biological Dataset}\label{app:extended-results}

In this section, we provide additional evaluations of the experiments on the Perturb-seq dataset. 
The computation of RMSE are computed for individual interventional distributions. 
The computation of $R^2$ (we capped the minimum by $0$ to avoid overflow) records the coefficient of determination by regressing the mean of the generated samples on the ground-truth distribution mean.

\subsection{Single-node interventions}
Figure \ref{fig:umap-single-node-test} shows the same visualization as Figure \ref{fig:7} in the main text for the remaining $11=14-3$ single target-gene interventions with more than $800$ cells. 
Figure \ref{fig:umap-single-node-train} presents this side-by-side for the training samples. 
For the entire $105$ single interventions, we visualize for each individual intervention the empirical MMD between the generated populations and ground-truth populations in Figure \ref{fig:mmd-single}, where the bars record the MMD in different batches.

\begin{figure}[H]
    \centering
    \includegraphics[width=.65\textwidth]{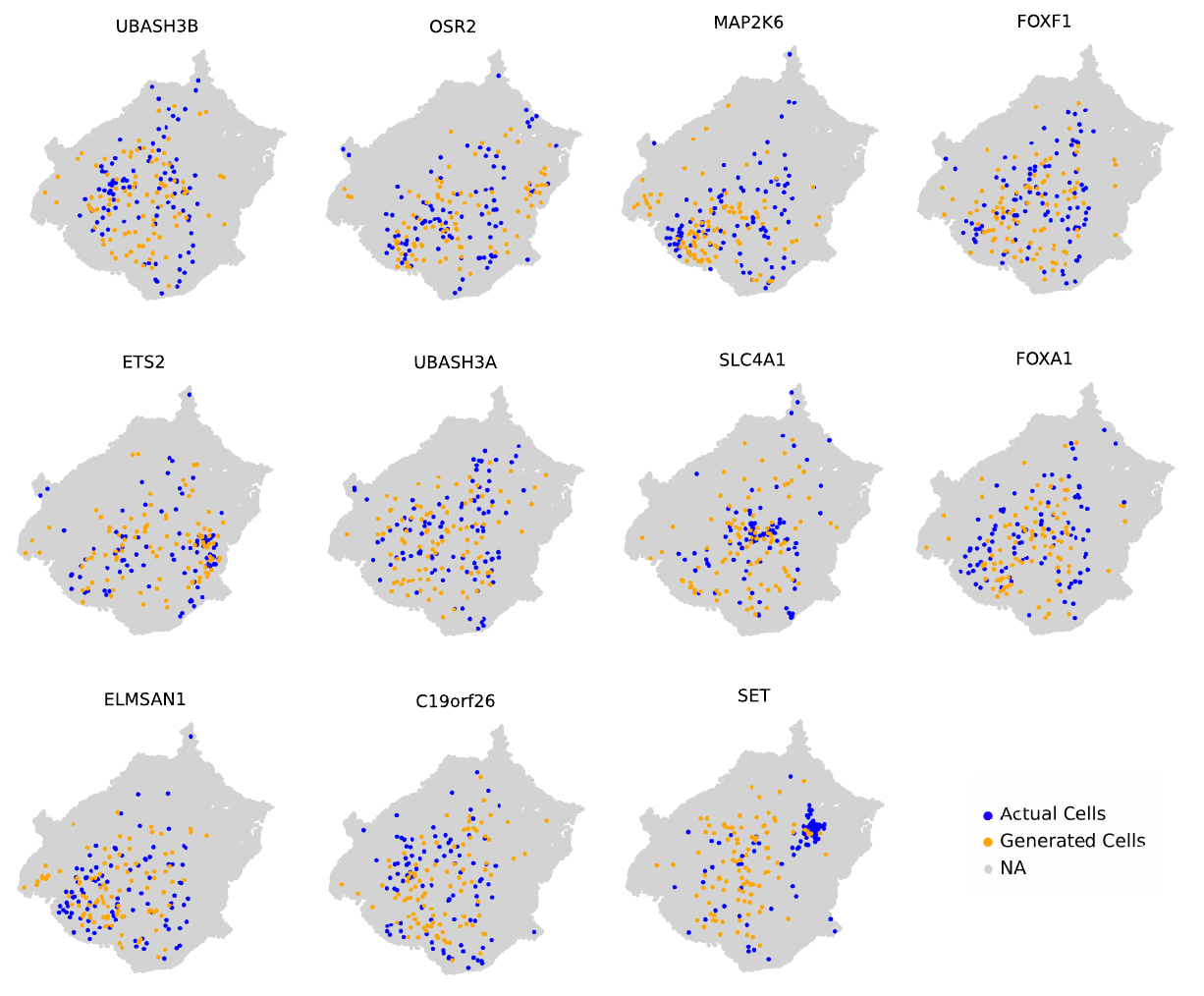}
    \caption{
    \textbf{For single-node interventions, the distribution of generated test samples visually mirrors the distribution of the actual samples.} 
    A UMAP visualization of 11 single target interventions shows that the generated and the actual distributions closely match.
    }
    \label{fig:umap-single-node-test}
\end{figure}

\begin{figure}[ht]
    \centering
    \includegraphics[width=.75\textwidth]{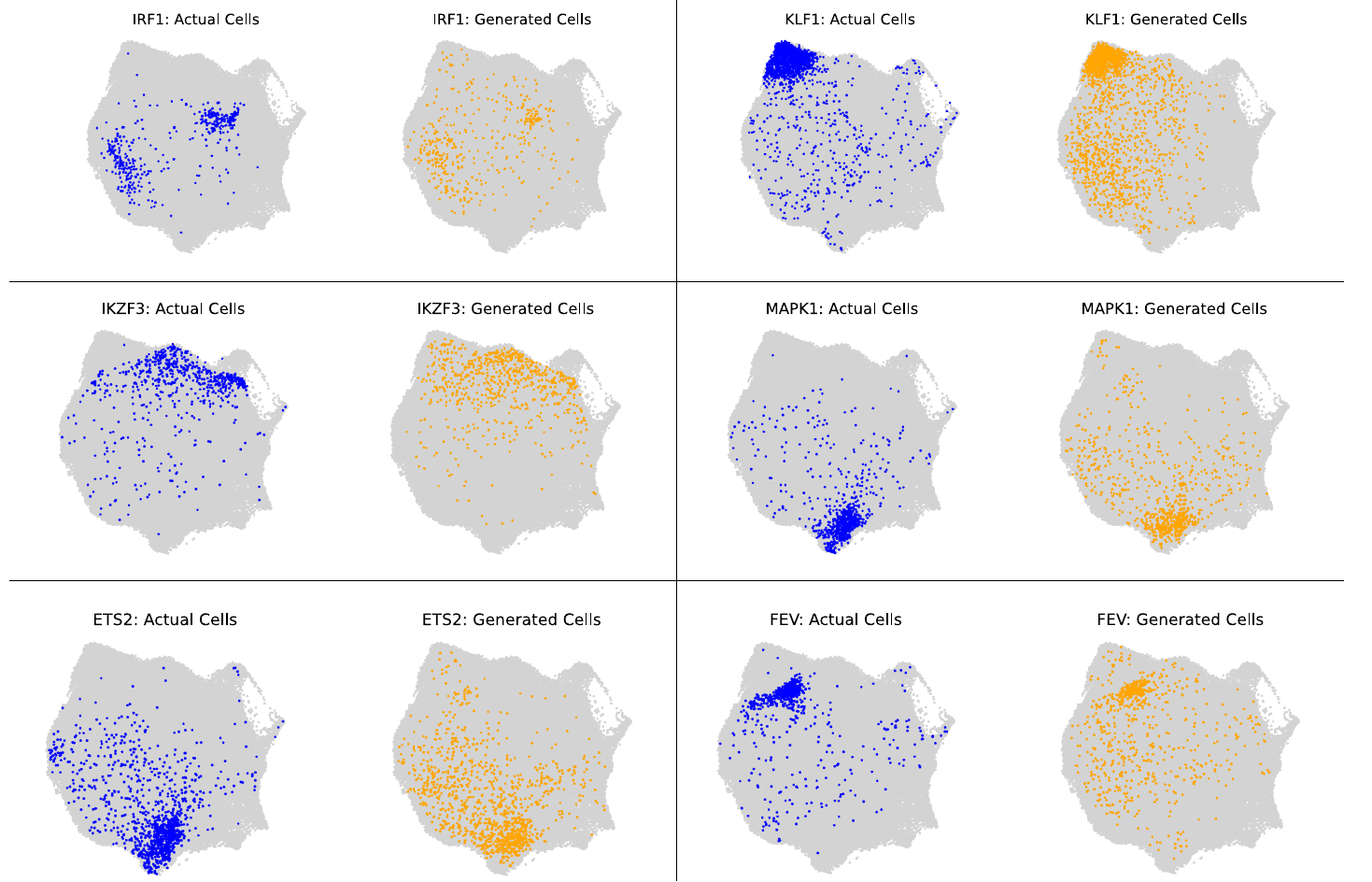}
    \caption{
    \textbf{For single-node interventions, the distribution of generated training samples visually mirrors the distribution of the actual samples.}
    As with the test samples, the distributions of the generated training samples closely match the actual distributions.
    }
    \label{fig:umap-single-node-train}
\end{figure}

\begin{figure}[ht]
    \centering
    \includegraphics[width=.9\textwidth]{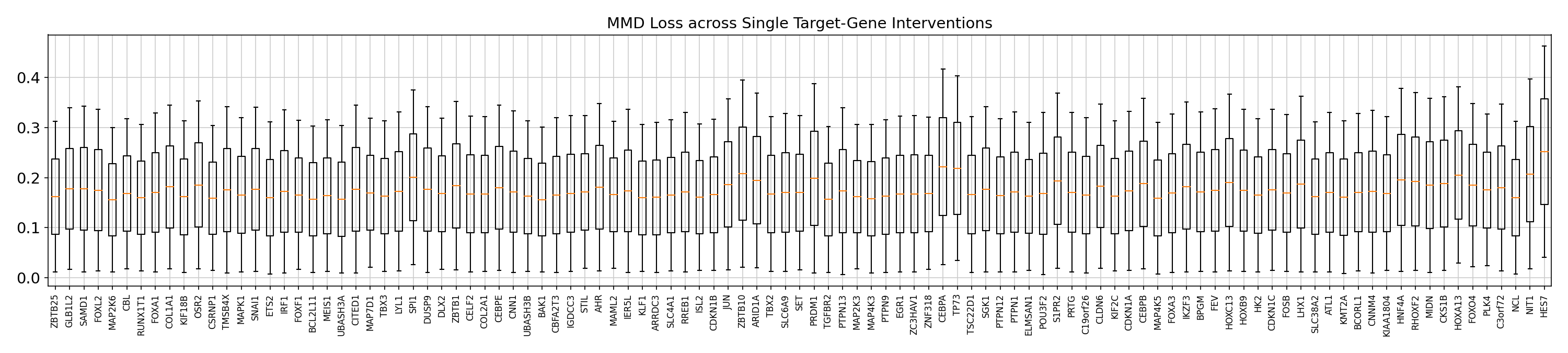}
    \caption{
    \textbf{For single-node interventions, the distribution of generated training samples quantitatively mirrors the distribution of the actual samples.}
    The figure shows the empirical MMD, defined in Appendix~\ref{app:mmd-definition}, between the generated populations and ground-truth populations for 105 single target-node interventions.
    }
    \label{fig:mmd-single}
    \vspace{-0.1in}
\end{figure}

\pagebreak
\subsection{Double-node interventions}
We plot the generated samples for 11 random double target-gene interventions in Figure~\ref{fig:umap-double}.
In Figure~\ref{fig:umap-double-example}, we highlight two interventions for which the generated samples differ from the actual samples.
The plots for all $112$ interventions are provided at \url{https://github.com/uhlerlab/discrepancy_vae}. 

\begin{figure}[ht]
\vspace{-0.1in}
    \centering
    \includegraphics[width=.7\textwidth]{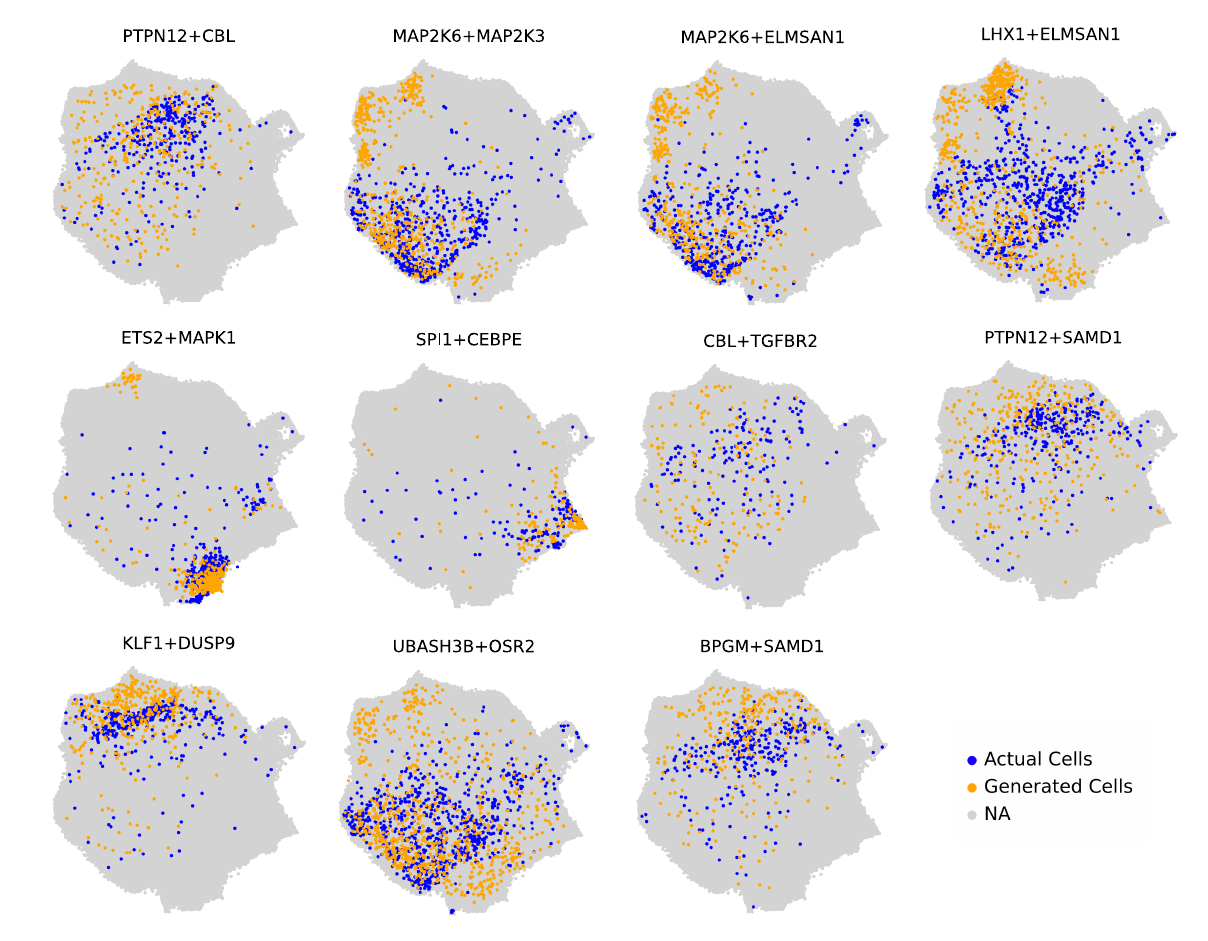}
    \caption{
    \textbf{UMAP visualization for a random sampling of double-node interventions.}
    Compared to single-node interventions, the generated samples of the double-node interventions match only for certain pairs.
    }
    \label{fig:umap-double}
    \vspace{-0.1in}
\end{figure}

\begin{figure}[ht]
    \centering
    \vspace{-0.1in}
    \includegraphics[width=.525\textwidth]{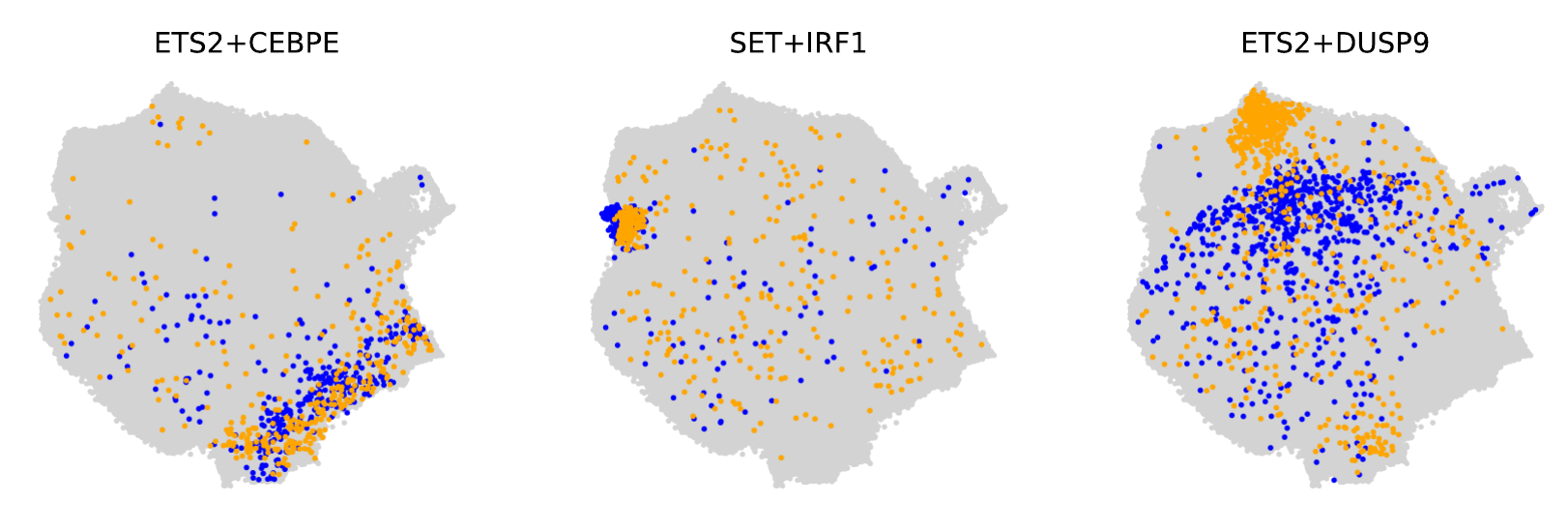}
    \caption{
    \textbf{For some double-node interventions, the generated samples match the actual samples, and for some combinations they do not.}
    The model accurately predicts the effect of the combinations ETS2+CEBPE and SET+IRF1, but does not accurately predict the effect of ETS2+DUSP9.
    }
    \label{fig:umap-double-example}
    \vspace{-0.1in}
\end{figure}

The MMD losses for all 112 interventions are summarized in Figure \ref{fig:mmd-double}.
Similar to Figure \ref{fig:rmse-singlenode} in the main text, Figure \ref{fig:14} shows the distribution of RMSE and $R^2$ of the 112 interventions. 

We remark here that this task has also been studied in previous works (e.g., \cite{lotfollahi2021learning,bunne2023learning,yu2022perturbnet,roohani2022gears}) with different setups. Formally benchmarking the empirical results under a unified setting would be of interest in future works.

\begin{figure}[ht]
    \centering
\includegraphics[width=.9\textwidth]{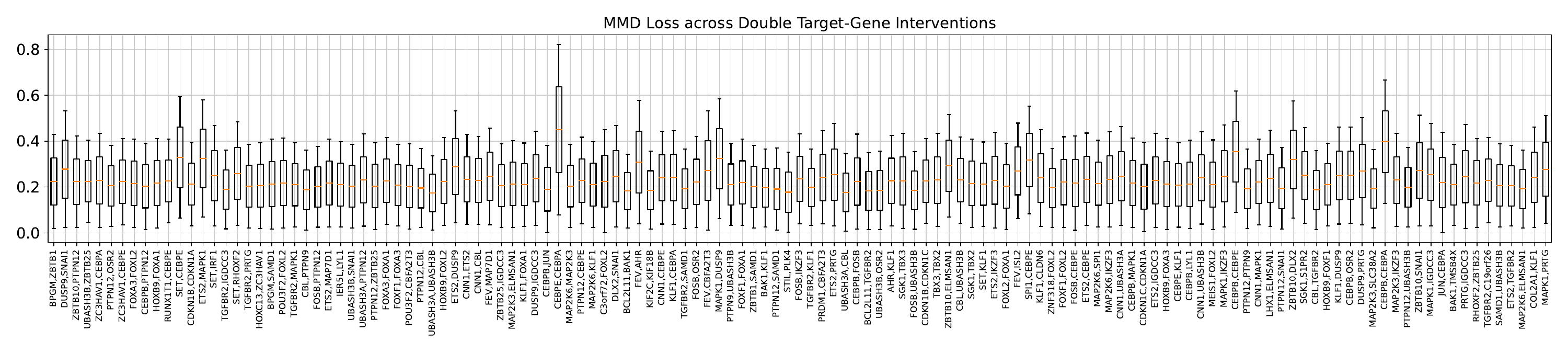}
    \caption{
    \textbf{For some double-node interventions, the distribution of generated samples quantitatively mirrors the distribution of the actual samples.}
    The figure shows the empirical MMD, defined in Appendix~\ref{app:mmd-definition}, between the generated populations and ground-truth populations for 105 single target-node interventions.
    }
    \label{fig:mmd-double}
\end{figure}

\begin{figure}[ht]
    \vspace{-.1in}
    \centering
\includegraphics[width=.45\textwidth]{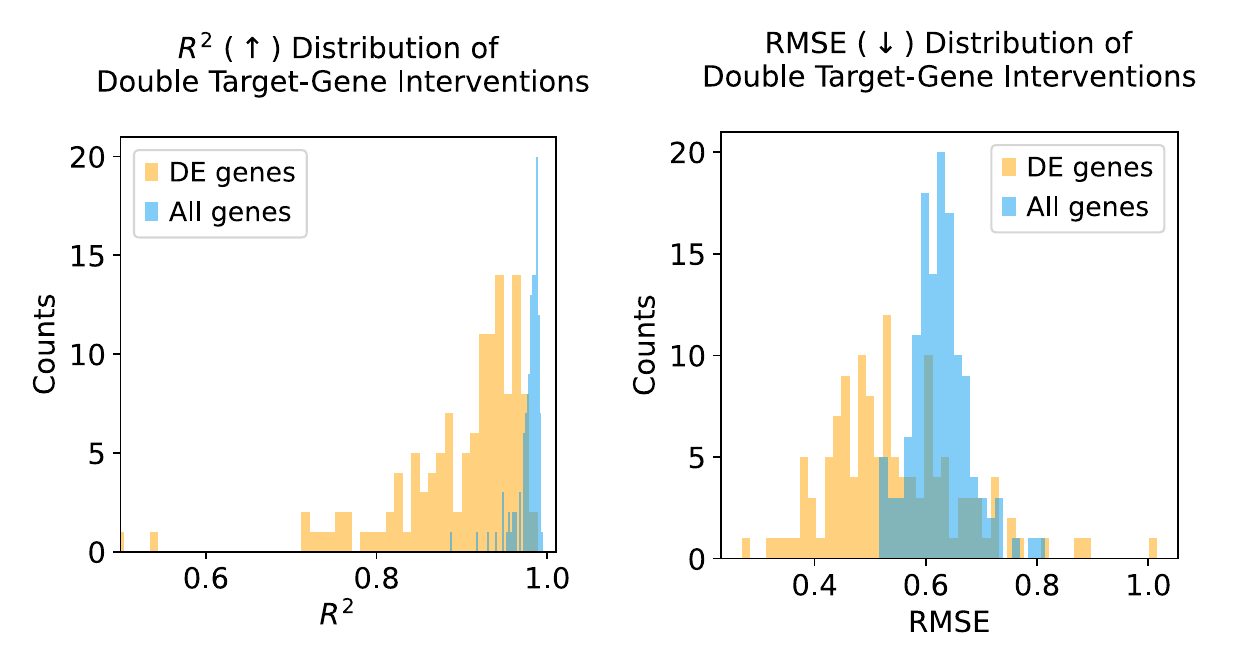}
    \caption{
    \textbf{Our model accurately predicts the effect of many double-node interventions.}
    `All genes' indicates measurements using the entire $5000$-dimensional vectors; `DE genes' indicates measurements using the $20$-dimensional vectors for the top $20$ most differentially expressed genes.
    }
    \label{fig:14}
    \vspace{-.1in}
\end{figure}

\subsection{Structure Learning}
In Figure~\ref{fig:bio-learned-programs}, we show the learned latent structure between gene programs, along with descriptions of each gene program.


\begin{figure}[ht]
    \centering
    \includegraphics[width=.85\textwidth]{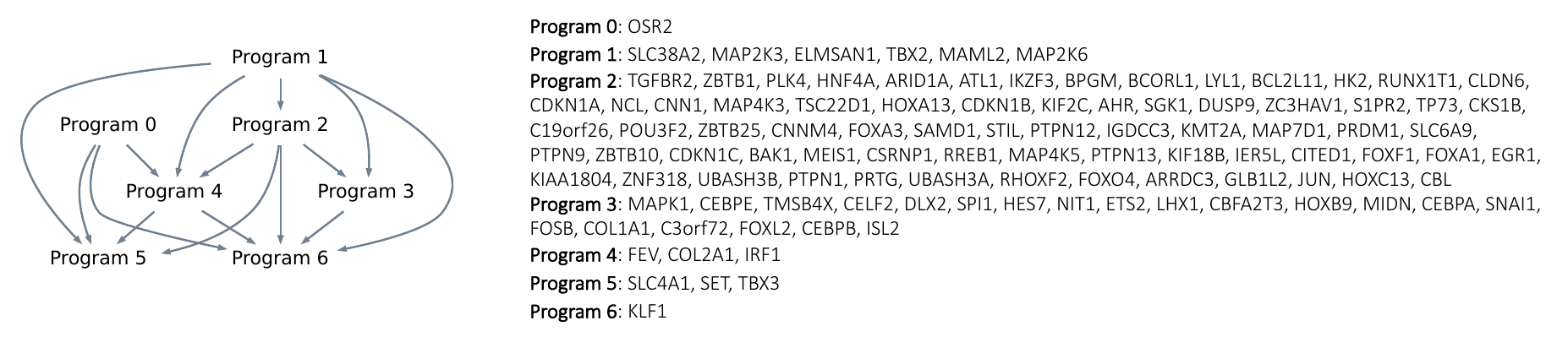}
    \caption{Regulatory relationships between programs learned in $\cG$ and full list of genes in each program.}
    \label{fig:bio-learned-programs}
    \vspace{-.1in}
\end{figure}

\section{Extended Experiments}\label{app:experiment}

In this section, we provide additional experimental results.
First, we perform ablation studies of different components of the proposed architecture on biological data. 
Then, we provide a simple simulation study to examine the performance of the framework on different tasks.

\subsection{Ablation Studies}

For the ablation studies of different components, we compared the performance of our final model (depicted in Figure~\ref{fig:causaldiscrepancyvae}) against three alternative versions. All models are trained with the same setting (data split, schedule, learning rate, etc). In particular, we compared against
\begin{itemize}
    \item \textbf{Models without the discrepancy loss.} These models learn the distributions similar to conditional VAE \cite{sohn2015learning}, where both an interventional sample and its interventional label are fed in to learn the exogenous $Z$. Then inside the latent space, we use the same causal layer as our model to generate a virtual sample. During inference, we can generate interventional samples via two approaches. One is sampling the exogenous $Z$ from $p(Z)$ and decoding. 
    The other is sampling an observational sample, obtaining its exogenous $Z$ using the encoder then decoding. These two approaches correspond to the second and third rows of Table~\ref{tab:ablation} respectively.
    \item \textbf{A model without the causal layer.} This model uses a similar workflow as our final model in Figure~\ref{fig:causaldiscrepancyvae}, where we do not use a causal-based decoder but a simple MLP decoder. This corresponds to the fourth row of Table~\ref{tab:ablation}.
\end{itemize}
We note that the encoder, decoder, DSCM, and intervention encoder are needed to learn distributions and the latent causal graph from this setting where observational and interventional data are present.

For the metrics, we report both MMD and $R^2$ in Table~\ref{tab:ablation}. However, MMD is more meaningful as we are assessing the quality of generating a distribution. We observe that models without discrepancy perform much worse due to mode collapses, whereas the sampling approach using observational data performs slightly better. Our final model works the best in general; however on the MMD for double-node interventions, the version without a causal layer seems to work slightly better. This is potentially because some double-node interventions that act non-additively can be captured better without imposing the structure.

\begin{table}[h]
    \begin{center}
    \begin{tabular}{l|llll}
    \toprule
        {Method} & {MMD (single)} & {$R^2$ (single)} & {MMD (double)}  & {$R^2$ (double)}
        \\
        \midrule
        ours & \textbf{0.324$\pm$0.007} & \textbf{0.986$\pm$0.001} & 0.432$\pm$0.006 & \textbf{0.978$\pm$0.001} 
        \\
        ours w/o discrepancy & 2.966$\pm$0.054 & 0.984$\pm$0.003 & 3.358$\pm$0.031 & 0.972$\pm$0.002
        \\
        ours w/o discrepancy (obs) & 2.965$\pm$0.054 & 0.984$\pm$0.002 & 3.355$\pm$0.030 & 0.972$\pm$0.002
        \\
        ours w/o causal layer & 0.348$\pm$0.009 & 0.982$\pm$0.002 & \textbf{0.427$\pm$0.006} & \textbf{0.978$\pm$0.002}
        \\
    \bottomrule
    \end{tabular}
    \end{center}
    \caption{
    \textcolor{black}{\textbf{Ablation studies.} We report testing metrics and their standard error on the biological datasets. The results on single-node interventions are computed over 14 interventions. The results on double-node interventions are computed over all 112 interventions.}
    }
    \label{tab:ablation}
    \vspace{-.15in}
\end{table}

\subsection{Simulation}

For the simulation study, as a proof-of-concept, we tested on a simple $5$-node graph, where we generate $2048$ samples in each of the $5$ interventional datasets. We map this to a $10$-dimensional observation space, where we pad zeros to the additional dimensions. This ensures clear visualization of the generated samples in Figure~\ref{fig:simu}, where we compare the zero-shot learned double-node interventional samples against ground truth. In Table~\ref{tab:simu}, we report the quantitative metrics. In addition to the MMD on left-out single and double-node interventions, we also report the training MMD and Structural Hamming Distance (SHD) of the learned graph.

Due to the combinatorial nature of learning a DAG and the small sample sizes in this setting, we observe that the learned intervention targets can be quite sensitive to initializations. Therefore during evaluation, we report the metrics while fixing the intervention targets to be of different transposition distances to the true targets. For single-node generations, different transposition distances return similar results, meaning that the model is expressive enough to learn these distributions, although we observe that the result with zero transposition distance is marginally better. This also holds during training, which can potentially be used as model selection to overcome the initialization issue. For double-node extrapolation, the result with zero transposition distance shows a larger benefit, as expected from our theory.

\begin{table}[b]
    \begin{center}
    {
    \begin{tabular}{l|llll}
    \toprule
        {Transposition Distance} & {MMD (training)} & {MMD (single)} & {MMD (double)}  & {SHD}
        \\
        \midrule
        0 & 0.030$\pm$0.007 & 0.047$\pm$0.008 & 0.041$\pm$0.004 & 2 
        \\
        1 & 0.057$\pm$0.028 & 0.058$\pm$0.030 & 0.181$\pm$0.048 & 6 
        \\
        10 & 0.042$\pm$0.007 & 0.041$\pm$0.009 & 0.119$\pm$0.023 & 11 
        \\
    \bottomrule
    \end{tabular}}
    \end{center}
    \caption{
    {\textbf{A simple simulation study.} On a 5-node DAG, we test the model performance with varying transposition distances of the identified intervention targets. For sample generations, we report MMD and its standard error. The training metric is evaluated on all single-node interventions, where the third and forth rows are evaluated based on held-out samples of single and double-node interventions.}
    }
    \label{tab:simu}
\end{table}

\begin{figure}[t]
    \centering
    \vspace{-.1in}
    \includegraphics[width=.4\linewidth]{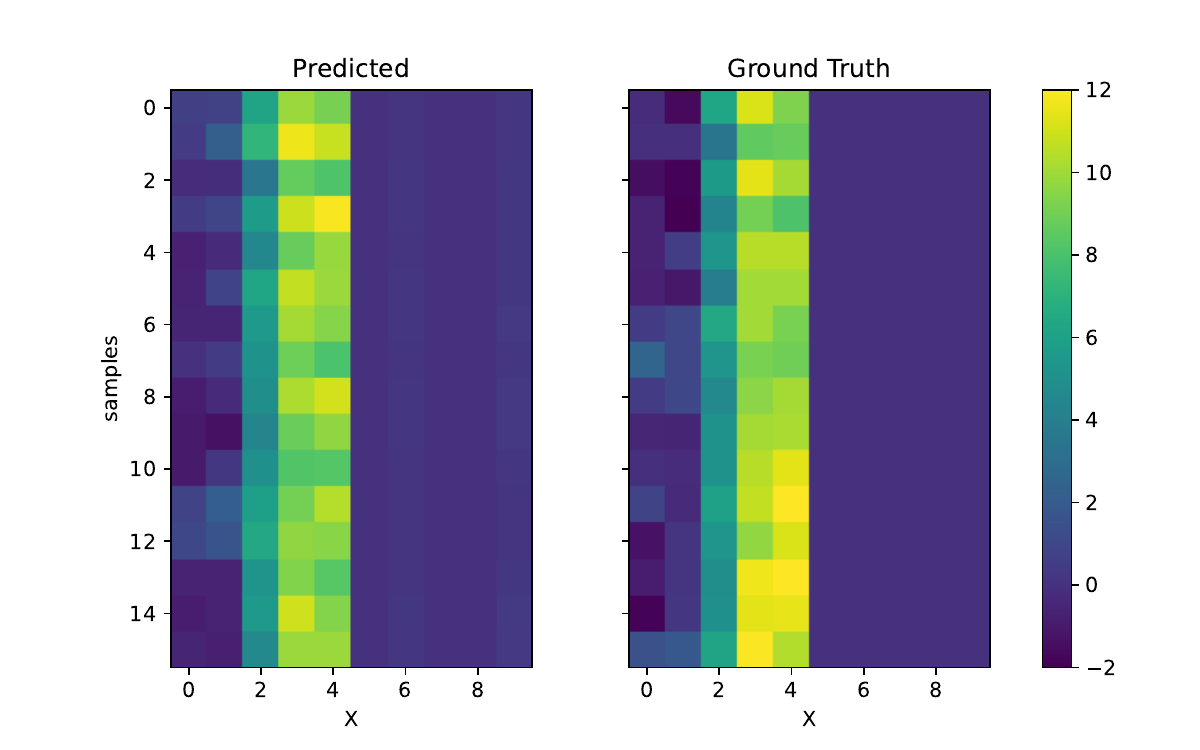}
    \caption{{\textbf{An illustration of double-node intervention extrapolation in simulation.} We visualize 16 samples of the double-node intervention on nodes $2,3$. The generated samples are shown on the left, where the ground-truth samples are shown on the right.}}
    \label{fig:simu}
\end{figure}

\newpage

\end{document}